\newtheorem{theorem}{Theorem}
\newtheorem{lemma}[theorem]{Lemma}
\newtheorem{proposition}[theorem]{Proposition}
\newtheorem{remark}{Remark}[theorem]
\DeclareMathOperator*{\diag}{\mathrm{diag}}
\newcommand{\fim}{\mathcal{I}}
\newcommand{\efima}{\hat{\mathcal{I}}_{1}}
\newcommand{\efimb}{\hat{\mathcal{I}}_{2}}
\newcommand{\efimz}{\hat{\mathcal{I}}_{z}}
\newcommand{\efimc}{\hat{\mathcal{I}}_{\alpha}}
\newcommand{\prob}{\mathrm{Pr}}
\newcommand{\var}{\mathrm{Var}}
\newcommand{\cov}{\mathrm{Cov}}
\newcommand{\kurt}{\mathcal{K}}
\newcommand{\trace}{\mathrm{tr}}
\newcommand{\relu}{\mathrm{ReLU}}
\newcommand{\vectorise}{\mathrm{vec}}
\newcommand{\grad}{\bigtriangledown}
\newcommand{\defeq}{\vcentcolon=}
\newcommand{\dx}{\, \mathrm{d}\bm{x}}
\newcommand{\dy}{\, \mathrm{d}\bm{y}}
\newcommand{\dmeas}[1]{\, \mathrm{d}#1}
\newcommand{\residual}{\mathcal{R}}
\newcommand{\expect}{\mathbb{E}}
\newcommand{\indicator}[1]{{\bm 1}_{#1}}
\newcommand{\T}{\top}
\DeclareRobustCommand\onedot{\futurelet\@let@token\bmv@onedotaux}
\def\bmv@onedotaux{\ifx\@let@token.\else.\null\fi\xspace}
\def\eg{\emph{e.g}\onedot} 
\def\ie{\emph{i.e}\onedot} 
\def\etc{\emph{etc}\onedot} 
\def\wrt{w.r.t\onedot} 
\def\wlogt{w.l.o.g\onedot}
\def\iid{i.i.d\onedot}
\def\psd{p.s.d\onedot}
\def\bigoh{\mathcal{O}}
\newcolumntype{B}{>{\hsize=1.6\hsize}X}
\newcolumntype{M}{>{\hsize=1.1\hsize}X}
\newcolumntype{S}{>{\hsize=0.6\hsize}X}
\newcolumntype{L}{>{\hsize=0.3\hsize}X}
\newcolumntype{D}{>{\hsize=0.15\hsize}X}
\newcolumntype{N}{>{\hsize=0.85\hsize}X}
\title{On the Variance of the Fisher Information\\for Deep Learning\thanks{This article is a digital reprint of \cite{varfim} with a different format and an appendix.}}
\author{Alexander Soen~\orcidlink{0000-0002-2440-4814}\\
The Australian National University\\
Canberra, Australia\\
\texttt{alexander.soen@anu.edu.au}
\and
Ke Sun~\orcidlink{0000-0001-6263-7355}\\
CSIRO's Data61, Sydney, Australia\\
The Australian National University\\
\texttt{sunk@ieee.org}
}
\begin{document}
\doparttoc
\faketableofcontents

\maketitle

\begin{abstract}
In the realm of deep learning, the Fisher information matrix (FIM) gives novel
insights and useful tools to characterize the loss landscape, perform
second-order optimization, and build geometric learning theories.
The exact FIM is either unavailable in closed form or too expensive to compute.
In practice, it is almost always estimated based on empirical samples.  We
investigate two such estimators based on two equivalent representations of the
FIM --- both unbiased and consistent. Their estimation quality is naturally
gauged by their variance given in closed form. We analyze how the parametric
structure of a deep neural network can affect the variance. The meaning of this
variance measure and its upper bounds are then discussed in the context of deep learning.
\end{abstract}

\section{Introduction}\label{sec:intro}

The Fisher information is one of the most fundamental concepts in
statistical machine learning.
Intuitively, it measures the amount of information carried by a single
random observation when the underlying model varies along certain directions
in the parameter space:
if such a variation does not change the underlying model,
then a corresponding observation contains zero (Fisher) information and is non-informative
regarding the varied parameter. Parameter estimation is impossible in this case.
Otherwise, if the variation significantly changes the model and has large information,
then an observation is informative and the parameter estimation
can be more efficient as compared to parameters with small Fisher information.
In machine learning, this basic concept is useful for defining intrinsic structures
of the parameter space, measuring model complexity, and performing gradient-based optimization.

Given a statistical model that is specified by a parametric form
$p(\bm{z}\,\bm\vert\,\bm\theta)$ and a continuous domain $\bm\theta\in\mathcal{M}$,
the Fisher information matrix (FIM) is a 2D tensor varying with $\bm\theta\in\mathcal{M}$,
given by
\begin{equation}
    \label{eq:fim}
\fim(\bm\theta) =
\expect_{p(\bm{z}\,\vert\,\bm\theta)}
\left( \frac{\partial\ell}{\partial\bm\theta} \frac{\partial\ell}{\partial\bm\theta^\T} \right),
\end{equation}
where $\expect_{p(\bm{z}\,\vert\,\bm\theta)}(\cdot)$, or simply $\expect_p(\cdot)$ if the model $p$ is clear from the context,
denotes the expectation \wrt $p(\bm{z} \,\vert\,\bm\theta)$,
and $\ell\defeq\log{p}(\bm{z}\,\vert\,\bm\theta)$ is the log-likelihood function.
All vectors are column vectors throughout this paper.
Under weak conditions (see Lemma 5.3 in \citet{thpe} for the univariate case),
the FIM has the equivalent expression
$\fim(\bm\theta) = \expect_{p(\bm{z}\,\vert\,\bm\theta)} \left( -{\partial^2\ell}/{\partial\bm\theta\partial\bm\theta^\T} \right)$.
Given $N$ \iid observations $\bm{z}_1,\ldots,\bm{z}_N$, these two equivalent expressions of the FIM
lead to two different estimators
\begin{equation}\label{eq:estimators}
\efima(\bm\theta) = \frac{1}{N}\sum_{i=1}^N
\left( \frac{\partial\ell_i}{\partial\bm\theta} \frac{\partial\ell_i}{\partial\bm\theta^\T} \right)
\quad \text{and} \quad
\efimb(\bm\theta) = \frac{1}{N}\sum_{i=1}^N
\left( -\frac{\partial^2\ell_i}{\partial\bm\theta\partial\bm\theta^\T} \right),
\end{equation}
where $\ell_i\defeq\log p(\bm{z}_i\,\vert\,\bm\theta)$ is the log-likelihood of the $i$'th observation $\bm{z}_i$.
The notations $\efima(\bm\theta)$ and $\efimb(\bm\theta)$ are abused for
simplicity as they depend on both $\bm\theta$ and the random observations
$\bm{z}_i$.

These estimators are universal and independent to the
parametric form $p(\bm{z}\,\vert\,\bm\theta)$. They are expressed in terms
of the 1st- or 2nd-order derivatives of the log-likelihood. Usually, we already
have these derivatives to perform gradient-based learning. Therefore, we can
save computational cost and reuse these derivatives to estimate the Fisher information,
which in turn can be useful, \eg, to perform natural gradient optimization~\citep{aIGI,pbRNG}.
Estimating the FIM is especially meaningful for deep learning, where
the computational overhead of the exact FIM can be significant.

It is straightforward from the law of large numbers and the central limit theorem that
both estimators in \cref{eq:estimators} are unbiased and consistent.
This is formally stated as follows.
\begin{proposition} \label{prop:unbiased_consistent}
\begin{align*}
&\expect_{p(\bm{z}\,\vert\,\bm\theta)}\left( \efima(\bm\theta) \right)
= \expect_{p(\bm{z}\,\vert\,\bm\theta)}\left( \efimb(\bm\theta) \right)
= \fim(\bm\theta).\\
\forall{\epsilon}>0,\;
&\lim_{N\to\infty}
\mathrm{Prob}\left(
\left\Vert \efima(\bm\theta) - \fim(\bm\theta) \right\Vert_F
+
\left\Vert \efimb(\bm\theta) - \fim(\bm\theta) \right\Vert_F
> \epsilon \right)
= 0,
\end{align*}
where $\mathrm{Prob}(\cdot)$ denotes the probability of the parameter statement being true
and
$\Vert\cdot\Vert_{F}$ is the Frobenius norm of a tensor
    (with \( \Vert \cdot \Vert_{2} \) as the regular vector \( L_{2} \)-norm)
\end{proposition}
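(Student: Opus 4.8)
The plan is to handle the two assertions separately: unbiasedness follows directly from linearity of expectation, while consistency follows from a coordinatewise weak law of large numbers combined with a union bound.

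\textbf{Unbiasedness.} I would first observe that $\efima(\bm\theta)$ is the empirical average of the $N$ \iid matrix-valued random variables $\frac{\partial\ell_i}{\partial\bm\theta}\frac{\partial\ell_i}{\partial\bm\theta^\T}$, each having expectation $\fim(\bm\theta)$ by the defining identity \cref{eq:fim}; linearity of expectation then gives $\expect_p(\efima(\bm\theta)) = \frac1N\sum_{i=1}^N\fim(\bm\theta) = \fim(\bm\theta)$. The argument for $\efimb(\bm\theta)$ is the same, with each summand $-\frac{\partial^2\ell_i}{\partial\bm\theta\partial\bm\theta^\T}$ having expectation $\fim(\bm\theta)$ by the Hessian representation of the FIM, which holds under the weak regularity conditions (interchange of differentiation and integration) already invoked in the text above, \cf\ Lemma~5.3 of \citet{thpe}.

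\textbf{Consistency.} For a fixed pair of coordinates $(j,k)$, I would regard the $(j,k)$ entry of $\efima(\bm\theta)$ as the sample mean of the \iid scalars $\frac{\partial\ell_i}{\partial\theta_j}\frac{\partial\ell_i}{\partial\theta_k}$, which share the finite mean $\fim_{jk}(\bm\theta)$; Khinchin's weak law then gives convergence in probability of this entry to $\fim_{jk}(\bm\theta)$. Since $\bm\theta$ lives in a finite-dimensional space, applying this to each of the finitely many entries and taking a union bound yields $\Vert\efima(\bm\theta)-\fim(\bm\theta)\Vert_F\to 0$ in probability, and likewise for $\efimb(\bm\theta)$. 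Finally, on the event that $\Vert\efima(\bm\theta)-\fim(\bm\theta)\Vert_F + \Vert\efimb(\bm\theta)-\fim(\bm\theta)\Vert_F > \epsilon$ at least one of the two summands must exceed $\epsilon/2$, so I would bound the probability of the displayed event by the sum of two probabilities, each tending to $0$, and conclude that the limit is $0$.

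\textbf{Main obstacle.} This proposition is essentially a bookkeeping exercise, so the only place requiring care is in checking the hypotheses of the limit theorems: one needs $\fim(\bm\theta)$ finite (equivalently, finite second moments of the score and finite first moments of the log-likelihood Hessian) for Khinchin's law to apply, and, for the $\efimb$ part, the same regularity conditions that make the Hessian representation valid. The one step that is not completely automatic is upgrading entrywise convergence in probability to convergence in Frobenius norm, but in finite dimensions this is immediate from the union bound sketched above, so I do not anticipate any genuine difficulty.
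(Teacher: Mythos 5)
Your proposal is correct and follows essentially the same route as the paper's own (very terse) proof: unbiasedness from the expectation identities for the score and Hessian representations, and consistency via the entrywise law of large numbers combined with a union bound and an $\epsilon/2$ triangle-inequality split. If anything, your attribution of unbiasedness to linearity of expectation is more precise than the paper's passing mention of the central limit theorem, and your remark about needing finite second moments of the score makes the implicit regularity assumptions explicit.
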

The Fisher information can be zero for non-regular models or
infinite~\citep{InfiniteFIMMixture-2009}. However, these properties may not be
preserved by the empirical estimators.

How far can $\efima(\bm\theta)$ and $\efimb(\bm\theta)$ deviate from the ``true
FIM'' $\fim(\bm\theta)$, and how fast can they converge to $\fim(\bm\theta)$ as
the number of observations increases? To answer these questions, it is natural
to think of the variance of $\efima(\bm\theta)$ and $\efimb(\bm\theta)$. For
example, an estimator with a large variance means the estimation does not
accurately reflect $\fim(\bm\theta)$; and any procedure depending on the FIM consequently suffers from
the estimation error. Through studying the variance, we can control the estimation
quality
and reliably perform subsequent measurements or algorithms based on the FIM.

Towards this direction,
we made the following contributions that will unfold in the following
\cref{sec:ffexp,sec:estimators,sec:dl_structure}:
\begin{itemize}
\item We review and rediscover two equivalent expression of the FIM in the context
of deep feed-forward networks (\cref{sec:ffexp});
\item We give in closed form the variance (extending to meaningful upper bounds)
and discuss the convergence rate of the estimators
$\efima(\bm\theta)$ and $\efimb(\bm\theta)$ (\cref{sec:estimators});
\item We analyze how the 1st- and 2nd-order derivatives of the neural network
can affect the estimation of the FIM (\cref{sec:dl_structure}).
\end{itemize}
We discuss related work in \cref{sec:related} and conclude in \cref{sec:con}.

\section{Feed-forward Networks with Exponential Family Output}\label{sec:ffexp}

This section realizes the concept of Fisher information in a
feed-forward network with exponentially family output and explains
why its estimators are useful in theory and practice.

Consider supervised learning with a neural network.
The underlying statistical model is
$p(\bm{z}\,\vert\,\bm\theta) = p(\bm{x})p(\bm{y}\,\vert\,\bm{x}, \bm\theta)$,
where
\( \bm z = (\bm x, \bm y) \),
the random variable \( \bm x \) represents features,
and \( \bm y \) is the target variable.
The marginal distribution
$p(\bm{x})$ is parameter-free, usually fixed as the empirical distribution $p(\bm{x})=\frac{1}{M}\sum_{i=1}^M \delta(\bm{x}-\bm{x}_i)$ \wrt a set of observations $\{\bm{x}_i\}_{i=1}^M$, where $\delta(\cdot)$ is the Dirac delta.
In this paper, we consider \wlogt \( M = 1 \) as the FIM \wrt observations \( \{\bm{x}_i\}_{i=1}^M \) is simply the average over FIMs of each individual observation. All results generalize to multiple observations by taking the empirical average.
The predictor $p(\bm{y}\,\vert\,\bm{x}, \bm\theta)$
is a neural network with parameters $\bm\theta = \{\bm W_{l-1}\}_{l=1}^{L}$ and exponential family output units,
given by
\begin{align}\label{eq:exp}
p(\bm{y}\,\vert\,\bm{x})
&= \exp\left( \bm{t}^\T(\bm{y})\bm{h}_L - F(\bm{h}_L) \right),\nonumber\\
\bm{h}_L
&= \bm{W}_{L-1}\bar{\bm{h}}_{L-1},\nonumber\\
\bm{h}_l
&= \sigma( \bm{W}_{l-1}\bar{\bm{h}}_{l-1}), \quad{}(l=1,\ldots,L-1)\nonumber\\
\bar{\bm{h}}_l &= (\bm{h}_l^\T,1)^\T,\nonumber\\
\bm{h}_0 &= \bm{x},
\end{align}
where $\bm{t}(\bm{y})$ is the sufficient statistics of the prediction model,
$F(\bm{h})=\log\int\exp(\bm{t}^\T(\bm{y})\bm{h})d\bm{y}$ is the log-partition function,
and $\sigma:\;\Re\to\Re$ is an element-wise non-linear activation function.
Moreover, $\bm{W}_l$ is a $n_{l+1}\times{}(n_l+1)$ matrix, representing the
neural network parameters (weights and biases) in the $l$'th layer,
where \( n_{l} \defeq \dim(\bm h_{l}) \) denotes the size of layer \( l \).
We use $\bm{W}_l^{-}$ for the $n_{l+1}\times{n}_l$ weight matrix without the
bias terms, obtained by removing the last column of $\bm{W}_l$.
$\bm{h}_l$ is a learned representation of the input $\bm{x}$. All
intermediate variables $\bm{h}_l$ are extended to include a constant scalar
1 in $\bar{\bm{h}}_l$, so that a linear layer can simply be expressed as
$\bm{W}_{l}\bar{\bm{h}}_{l}$.
The last layer's output \( \bm h_{L} \) with dimensionality $n_L$
specifies the natural parameter of the exponential family.

We need the following Lemma which gives the FIM \wrt $\bm{h}_L$,
which is a $n_L\times{n}_L$ matrix in simple closed form for
commonly used probability distributions.
\begin{lemma}\label{thm:exp}
For the neural network model specified in \cref{eq:exp},
\begin{equation*}
\fim( \bm{h}_L )
= \cov( \bm{t}(\bm{y}) )
= \frac{\partial\bm\eta}{\partial\bm{h}_L},
\end{equation*}
where $\cov(\cdot)$ denotes the covariance matrix \wrt \( p(\bm y \, \vert \, \bm x, \bm \theta )\),
$\bm\eta \defeq \bm\eta(\bm{h}_L) \defeq \partial{}F/\partial\bm{h}_L$ is the expectation parameters,
and the vector-vector-derivative ${\partial\bm\eta}/{\partial\bm{h}_L}$ denotes the Jacobian matrix
of the mapping $\bm{h}_L\to\bm\eta$.
\end{lemma}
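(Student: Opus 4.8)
The plan is to treat $\bm{h}_L$ as the natural parameter of the exponential family $p(\bm{y}\,\vert\,\bm{x}) = \exp(\bm{t}^\T(\bm{y})\bm{h}_L - F(\bm{h}_L))$ and to derive all three quantities from the normalization identity $\int \exp(\bm{t}^\T(\bm{y})\bm{h}_L - F(\bm{h}_L))\dy = 1$. First I would compute the score with respect to $\bm{h}_L$. Since $\ell = \bm{t}^\T(\bm{y})\bm{h}_L - F(\bm{h}_L)$, we get $\partial\ell/\partial\bm{h}_L = \bm{t}(\bm{y}) - \partial F/\partial\bm{h}_L = \bm{t}(\bm{y}) - \bm\eta$. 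Differentiating the normalization identity once under the integral sign gives $\expect_{p(\bm{y}\,\vert\,\bm{x},\bm\theta)}(\bm{t}(\bm{y})) = \partial F/\partial\bm{h}_L = \bm\eta$, i.e.\ the score has zero mean, which is the usual regularity statement underpinning \cref{eq:fim}.

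Next I would substitute into the definition \eqref{eq:fim} specialized to the parameter $\bm{h}_L$:
\begin{equation*}
\fim(\bm{h}_L) = \expect_{p(\bm{y}\,\vert\,\bm{x},\bm\theta)}\!\left( (\bm{t}(\bm{y}) - \bm\eta)(\bm{t}(\bm{y}) - \bm\eta)^\T \right) = \cov(\bm{t}(\bm{y})),
\end{equation*}
which is exactly the first claimed equality, using that $\bm\eta$ is the mean of $\bm{t}(\bm{y})$ established above. For the second equality I would either (a) invoke the equivalent Hessian form of the FIM recalled in the introduction, noting $-\partial^2\ell/\partial\bm{h}_L\partial\bm{h}_L^\T = \partial^2 F/\partial\bm{h}_L\partial\bm{h}_L^\T$ is deterministic (independent of $\bm{y}$), so the expectation is trivial and equals $\partial\bm\eta/\partial\bm{h}_L$ because $\bm\eta = \partial F/\partial\bm{h}_L$; or (b) differentiate the identity $\bm\eta = \int \bm{t}(\bm{y})\exp(\bm{t}^\T(\bm{y})\bm{h}_L - F(\bm{h}_L))\dy$ a second time with respect to $\bm{h}_L^\T$, which produces $\partial\bm\eta/\partial\bm{h}_L = \expect_p(\bm{t}(\bm{y})\bm{t}^\T(\bm{y})) - \bm\eta\bm\eta^\T = \cov(\bm{t}(\bm{y}))$ directly, closing the chain of equalities without relying on the Hessian identity.

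The only real subtlety is justifying the interchange of differentiation and integration when passing $\partial/\partial\bm{h}_L$ through $\int(\cdot)\dy$; this is standard for exponential families since the log-partition function $F$ is smooth (indeed real-analytic) on the open natural-parameter domain, and $\bm{h}_L$ — being the output of the network — lies in that domain for the distributions considered, so the dominated-convergence hypotheses hold locally. I would state this regularity assumption explicitly (mirroring the conditions cited for Lemma 5.3 of \citet{thpe}) and otherwise the argument is a short computation. A minor point worth a sentence is that the Lemma concerns the FIM \emph{with respect to $\bm{h}_L$ itself}, not $\bm\theta$; no chain rule through the network is needed here, and the reparameterization linking $\fim(\bm{h}_L)$ to $\fim(\bm\theta)$ is deferred to later sections.
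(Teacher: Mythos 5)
Your proposal is correct and is essentially the paper's approach: the paper simply cites the standard exponential-family identities (referring to \citet{aIGI}), and your argument is a careful write-out of exactly those identities --- score $\bm{t}(\bm{y})-\bm\eta$, zero-mean score giving $\expect(\bm{t})=\bm\eta$, and the second derivative of $F$ (equivalently the deterministic Hessian of $-\ell$ in $\bm{h}_L$) giving $\cov(\bm{t})=\partial\bm\eta/\partial\bm{h}_L$. The added remarks on interchanging differentiation and integration and on the fact that no chain rule through the network is needed are accurate and only make explicit what the paper leaves to the cited reference.
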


The derivatives of the log-likelihood
$\ell(\bm\theta)\defeq\log{p}(\bm{x},\bm{y}\,\vert\,\bm\theta)$
characterize its landscape and are essential to compute the FIM.
By \cref{eq:exp}, the score function (gradient of $\ell$) is
\begin{equation}\label{eq:gradl}
\frac{\partial\ell}{\partial\bm\theta}
=
\left(\frac{\partial\bm{h}_L}{\partial\bm\theta}\right)^\T
( \bm{t}(\bm{y}) - \bm\eta(\bm{h}_L) )
=
\frac{\partial\bm{h}_L^a}{\partial\bm\theta}(\bm{t}_a - \bm\eta_a).
\end{equation}
In this paper, we mix the usual $\Sigma$-notation of summation with the Einstein notation:
in the same term, an index appearing in both upper- and lower-positions indicates a sum over this index.
For example, $\bm{t}_a\bm{h}^a = \sum_{a} \bm{t}_a \bm{h}_a$.
Hence, in our equations, upper- and lower-indexes have the same meaning:
both $\bm{h}^a$ and $\bm{h}_a$ mean the $a$'th element of $\bm{h}$.
For convenience and consistency, we take quantities \wrt \( \bm \theta \) as upper indexed
and other quantities as lower indexed, \ie, \( \fim^{ij}(\bm \theta) \) versus \( \fim_{ij}(\bm h_{L}) \).
This mixed representation of sums helps to simplify our expressions without causing confusion.
From \cref{eq:gradl} and \cref{thm:exp}, the Hessian of $\ell$ is given by
\begin{equation}\label{eq:hessianl}
\frac{\partial^2\ell}{\partial\bm\theta\partial\bm\theta^\T}
=
(\bm{t}_a - \bm\eta_a)
\frac{\partial^2\bm{h}_L^a}{\partial\bm\theta\partial\bm\theta^\T}
-
\frac{\partial\bm{h}_L^a}{\partial\bm\theta}
\frac{\partial\bm\eta_a}{\partial\bm\theta^\T}
=
(\bm{t}_a - \bm\eta_a)
\frac{\partial^2\bm{h}_L^a}{\partial\bm\theta\partial\bm\theta^\T}
-
\frac{\partial\bm{h}_L^a}{\partial\bm\theta}
\fim_{ab}(\bm{h}_L) \frac{\partial\bm{h}_L^b}{\partial\bm\theta^\T}.
\end{equation}

Similar to the case of a general statistical model, the FIM is equivalent
to the expectation of the Hessian of $-\ell$ as long as the activation function is smooth enough.
\begin{theorem}\label{thm:fim2}
Consider the neural network model in \cref{eq:exp}.
For any activation function $\sigma\in{}C^2(\Re)$ (both $\sigma'(z)$ and $\sigma''(z)$ exist and are continuous), we have $\fim(\bm\theta) = \expect_p\left( -\frac{\partial^2\ell}{\partial\bm\theta\partial\bm\theta^{\T}}\right)$.
\end{theorem}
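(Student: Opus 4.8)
The plan is to show that the two representations of the FIM collapse onto the same object. Concretely, I would verify that the defining outer-product form \cref{eq:fim} and the Hessian expectation $\expect_p(-\partial^2\ell/\partial\bm\theta\partial\bm\theta^\T)$ both equal
\[
\frac{\partial\bm{h}_L^a}{\partial\bm\theta}\,\fim_{ab}(\bm{h}_L)\,\frac{\partial\bm{h}_L^b}{\partial\bm\theta^\T},
\]
using the chain-rule identities \cref{eq:gradl,eq:hessianl} together with \cref{thm:exp}. The structural fact that makes this work is that $\bm{h}_L$ is a deterministic function of $(\bm{x},\bm\theta)$ --- hence so are its Jacobian $\partial\bm{h}_L/\partial\bm\theta$, its Hessian $\partial^2\bm{h}_L^a/\partial\bm\theta\partial\bm\theta^\T$, and the output-layer information $\fim(\bm{h}_L)$ --- while all the randomness under $p(\bm{y}\,\vert\,\bm{x},\bm\theta)$ is confined to $\bm{t}(\bm{y})$ (recall that $p(\bm{x})$ is a fixed point mass here). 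First I would note that $\sigma\in C^2(\Re)$ guarantees that $\bm\theta\mapsto\bm{h}_L$, being an alternating composition of $\sigma$ with affine maps, is twice continuously differentiable, so the Hessian appearing in \cref{eq:hessianl} exists in the first place.

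Next I would take $-\expect_p$ of \cref{eq:hessianl}. Since $\partial^2\bm{h}_L^a/\partial\bm\theta\partial\bm\theta^\T$ is independent of $\bm{y}$, it pulls out of the expectation, leaving the scalar coefficient $\expect_p(\bm{t}_a-\bm\eta_a)$; the standard exponential-family mean identity $\expect_{p(\bm{y}\,\vert\,\bm{x},\bm\theta)}\bm{t}(\bm{y})=\partial F/\partial\bm{h}_L=\bm\eta(\bm{h}_L)$ makes this vanish, so the first term of \cref{eq:hessianl} contributes nothing. The Jacobians and $\fim_{ab}(\bm{h}_L)$ in the second term are likewise $\bm{y}$-free, so $\expect_p(-\partial^2\ell/\partial\bm\theta\partial\bm\theta^\T)$ equals the displayed bilinear form. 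For the other side, I would expand \cref{eq:fim} with the score \cref{eq:gradl}, obtaining $\fim(\bm\theta)=\frac{\partial\bm{h}_L^a}{\partial\bm\theta}\,\expect_p\!\big((\bm{t}_a-\bm\eta_a)(\bm{t}_b-\bm\eta_b)\big)\,\frac{\partial\bm{h}_L^b}{\partial\bm\theta^\T}$; the middle factor is by definition $\cov(\bm{t}(\bm{y}))_{ab}$, which \cref{thm:exp} identifies with $\fim_{ab}(\bm{h}_L)$. The two sides agree, which is the claim.

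The hard part will be the measure-theoretic legitimacy of the statement itself --- that is, the exchange of the second-order $\bm\theta$-derivative with the integral $\int\cdot\dy$ that is implicit in equating $\expect_p(-\partial^2\ell/\partial\bm\theta\partial\bm\theta^\T)$ with $\fim(\bm\theta)$ (equivalently, in differentiating the normalization $\int p(\bm{y}\,\vert\,\bm{x},\bm\theta)\dy\equiv1$ twice under the integral sign). This is the second, essential use of $\sigma\in C^2$: it makes $\bm\theta\mapsto p(\bm{y}\,\vert\,\bm{x},\bm\theta)$ twice continuously differentiable, and together with the smoothness of the log-partition $F$ on the interior of the natural-parameter domain it provides the local domination required to differentiate under the integral (analogous to the univariate regularity conditions in Lemma~5.3 of \citet{thpe}). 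Once that interchange is justified, the remainder is bookkeeping with the chain rule and the exponential-family mean and covariance identities.
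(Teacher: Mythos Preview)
Your first two paragraphs constitute a complete and correct proof, and they take a genuinely different route from the paper. The paper argues abstractly: it writes $\expect_p(-\partial^2\ell/\partial\bm\theta\partial\bm\theta^\T)=\fim(\bm\theta)-\int p(\bm{x})\,\partial^2_{\bm\theta}p(\bm{y}\,\vert\,\bm{x})\dy\dx$ and then invokes the Leibniz rule (justified by $\sigma\in C^2$) to pull the derivatives outside the $\bm{y}$-integral, whereupon the residual is the second derivative of the constant $1$. You instead exploit the explicit exponential-family structure already recorded in \cref{eq:gradl,eq:hessianl}: you take the expectation of \cref{eq:hessianl} termwise, kill the first term via $\expect_p(\bm{t}_a-\bm\eta_a)=0$, and match the surviving bilinear form against the direct computation of \cref{eq:fim} from \cref{eq:gradl} and \cref{thm:exp}. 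Your argument is more concrete and, for this model, more elementary; the paper's argument is the standard general one that would apply to any sufficiently smooth parametric family.

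Your third paragraph, however, is misplaced. In your own approach there is \emph{no} differentiation under the integral sign: you never differentiate an integral in $\bm\theta$; you merely integrate (in $\bm{y}$) two explicit functions, namely the right-hand sides of \cref{eq:gradl} and \cref{eq:hessianl}, and identify the resulting moments with $\bm\eta$ and $\fim(\bm{h}_L)$ via \cref{thm:exp}. The only role of $\sigma\in C^2$ in your argument is the one you already stated in the first paragraph: it ensures $\bm\theta\mapsto\bm{h}_L$ is $C^2$, so that the Hessian $\partial^2\bm{h}_L^a/\partial\bm\theta\partial\bm\theta^\T$ in \cref{eq:hessianl} exists. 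The Leibniz-type interchange is precisely what the paper's proof hinges on, not yours; you can drop that paragraph entirely.
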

\begin{remark}
ReLU networks do not have this equivalent expression as $\relu(z)$ is not differentiable at $z=0$.
\end{remark}

Through the definition of the FIM, or alternatively its equivalent formula in \cref{thm:fim2},
we arrive at the same expression
\begin{equation}\label{eq:fimexp}
\fim(\bm\theta) =
\left( \frac{\partial\bm{h}_L}{\partial\bm\theta} \right)^\T
\fim(\bm{h}_L)
\frac{\partial\bm{h}_L}{\partial\bm\theta^\T}
=
\frac{\partial\bm{h}_L^a}{\partial\bm\theta}
\fim_{ab}(\bm{h}_L) \frac{\partial\bm{h}_L^b}{\partial\bm\theta^\T}.
\end{equation}
\Cref{eq:fimexp} takes the form of a generalized Gauss-Newton matrix~\citep{martens}.
This general expression of the FIM has been known in the literature~\citep{park,pbRNG}.
Under weak conditions, $\fim(\bm\theta)$ is a pullback metric~\citep{pull} of $\fim(\bm{h}_L)$
in \cref{thm:exp} associated with the mapping $\bm\theta\to\bm{h}_L$.
To compute $\fim(\bm\theta)$ in closed form, one need to first compute the Jacobian matrix
of size ${n}_L\times\dim(\bm\theta)$
then perform the matrix multiplication in \cref{eq:fimexp}.
The naive algorithm to evaluate \cref{eq:fimexp} has a computational complexity of
$\bigoh(n_L^2\dim(\bm\theta) + n_L\dim^2(\bm\theta))$,
where the term $\bigoh(n_L\dim^2(\bm\theta))$ is dominant as
$\dim(\bm\theta)\gg{n}_L$ in deep architectures.
Once the parameter $\bm\theta$ is updated, the FIM has to be recomputed.
This is infeasible in practice for large networks where
$\dim(\bm\theta)$ can be millions or billions.

The two estimators $\efima(\bm \theta)$ and $\efimb(\bm \theta)$ in \cref{eq:estimators} provide
a computationally inexpensive way to estimate the FIM. Given $\bm\theta$ and $\bm{x}$, one can draw \iid samples
$\bm{y}_1,\ldots,\bm{y}_N\sim{p}(\bm{y}\,\vert\,\bm{x},\bm\theta)$.
Both $\partial\ell_i/\partial\bm\theta$ and
$\partial^2\ell_i/\partial\bm\theta\partial\bm\theta^\T$
can be evaluated directly through auto-differentiation (AD) that is highly optimized for modern GPUs.
For $\efima(\bm\theta)$,
we already have $\partial\ell_i/\partial\bm\theta$ to perform gradient descent.
For $\efimb(\bm\theta)$,
efficient methods to compute the Hessian are implemented in AD frameworks such as PyTorch~\citep{torch}.
Using these derivatives, the computational cost only scales with the number $N$
of samples but does not scale with $n_L$.

We rarely need the full FIM of size $\dim(\bm\theta)\times\dim(\bm\theta)$.
Most of the time, only its diagonal blocks are needed, where
each block corresponds to a subset of parameters, \eg, the neural network weights
of a particular layer. Therefore the computation of both estimators
can be further reduced.

If $p(\bm{y}\,\vert\,\bm{x},\bm\theta)$ has the parametric form in \cref{eq:exp},
from \cref{eq:gradl,eq:hessianl}, the FIM estimators become
\begin{align}
\efima(\bm \theta) &=
    \frac{\partial\bm{h}_L^a}{\partial\bm\theta}
    \cdot
    \frac{1}{N} \sum_{i=1}^{N}
    (\bm{t}_a(\bm{y}_i) - \bm\eta_a)
    (\bm{t}_b(\bm{y}_i) - \bm\eta_b)
    \cdot
    \frac{\partial\bm{h}_L^b}{\partial\bm\theta^\T}, \label{eq:efima} \\
\efimb(\bm \theta) &=
    \left( \bm\eta_a - \frac{1}{N}\sum_{i=1}^N\bm{t}_a(\bm{y}_i) \right)
    \frac{\partial^2\bm{h}_L^a}{\partial\bm\theta\partial\bm\theta^\T}
    +
    \frac{\partial\bm{h}_L^a}{\partial\bm\theta}
    \fim_{ab}(\bm{h}_L)
\frac{\partial\bm{h}_L^b}{\partial\bm\theta^\T}.\label{eq:efimb}
\end{align}
Recall that the notation of $\efima(\bm \theta)$ and $\efimb(\bm \theta)$ is abused as they depend on $\bm{x}$ and $\bm{y}_1\cdots\bm{y}_N$.
Notably, in \cref{eq:efima}, $\efima(\bm \theta)$ is expressed in terms of the Jacobian matrix of the mapping $\bm\theta\to\bm{h}_L$
and the empirical variance of the minimal sufficient statistic $\bm{t}(\bm{y}_i)$
of the output exponential family.
In \cref{eq:efimb},
$\efimb(\bm \theta)$ depends on both the Jacobian and Hessian of $\bm\theta\to\bm{h}_L$
and the empirical average of $\bm{t}(\bm{y}_i)$.
The second term on the right-hand side (RHS) of \cref{eq:efimb} is exactly the FIM,
and therefore the first term serves as a bias term.
\cref{eq:efima,eq:efimb} are only for the case with exponential family output.
If the output units belong to non-exponential families, \eg, a statistical mixture model,
one falls back to the general formulae, \ie, \cref{eq:estimators} for the FIM.

As an application of the Fisher information, the Cram\'er-Rao lower bound (CRLB)
states that any unbiased estimator $\hat{\bm \theta}$
of the parameters $\bm \theta$ satisfies $\cov(\hat{\bm \theta})\succeq[\fim(\bm \theta)]^{-1}$.
For example, in \cref{thm:exp}, the FIM is \wrt the output of the neural network.
As such, \( \fim(\bm h_{L}) \) can be used to study
the estimation covariance of $\bm{h}_L$ based on
random samples $\bm{y}_1\cdots\bm{y}_N$ drawn from $p(\bm{y}\,\vert\,\bm{x},\bm\theta)$.
Similarly for \cref{eq:fimexp}, we can consider unbiased estimators of
the weights of the neural network.
In any case, to apply the CRLB, one needs an accurate estimation of $\fim(\bm \theta)$.
If the scale of $\fim(\bm \theta)$ is relatively small when compared to its covariance,
its estimation $\hat{\fim}(\bm \theta)$
is more likely to be a small positive value (or even worse, zero or negative).
The empirical computation of the CRLB is not meaningful in this case.

\section{The Variance of the FIM Estimators}\label{sec:estimators}

Based on the deep learning architecture specified in \cref{eq:exp},
we measure the quality of the two estimators $\efima(\bm\theta)$
and $\efimb(\bm\theta)$ given by their variances. Given the same sample size $N$,
a smaller variance is preferred as the estimator is more accurate and likely
to be closer to the true FIM $\fim(\bm\theta)$.
We study how the structure of the exponential family has an impact on the variance.

\subsection{Variance in closed form}
\label{subsec:var_in_closed_form}

We first consider $\efima(\bm\theta)$ and $\efimb(\bm\theta)$
in \cref{eq:estimators} as real matrices of dimension ${\dim(\bm\theta)\times\dim(\bm\theta)}$.
As $\efima(\bm\theta)$ is a square matrix,
the corresponding covariance is a 4D tensor
$\left[ \cov \left( \efima(\bm \theta) \right) \right]^{ijkl}$
of dimension ${\dim(\bm\theta)\times\dim(\bm\theta)\times\dim(\bm\theta)\times\dim(\bm\theta)}$,
representing the covariance between the two elements
$\efima^{ij}(\bm \theta)$ and $\efima^{kl}(\bm \theta)$.
The element-wise variance of
$\efima(\bm\theta)$ is a matrix with the same size of
$\efima(\bm\theta)$, which we denote as
$\var(\efima(\bm\theta))$.
Thus,
\begin{equation}\label{eq:varcov}
\var(\efima(\bm\theta))^{ij} =
\left[ \cov \left( \efima(\bm \theta) \right) \right]^{ijij}.
\end{equation}
Similarly, the covariance and element-wise variance of $\efimb(\bm\theta)$
are denoted as
$\left[ \cov \left( \efimb(\bm \theta) \right) \right]^{ijkl}$
and
$\var(\efimb(\bm\theta))^{ij}$, respectively.

As the samples $\bm{y}_1,\ldots,\bm{y}_N$ are \iid, we have
\begin{equation} \label{eq:varfims}
\cov(\efima(\bm\theta)) = \frac{1}{N}
\cov \left( \frac{\partial\ell}{\partial\bm\theta} \frac{\partial\ell}{\partial\bm\theta^{\T}} \right)
\quad \text{and} \quad
\cov(\efimb(\bm\theta)) = \frac{1}{N}
\cov \left(  \frac{\partial^{2}\ell}{\partial\bm\theta \partial\bm\theta^{\T}}  \right).
\end{equation}
Both $\cov(\efima(\bm\theta))$ and $\cov(\efimb(\bm\theta))$ have an order of $\bigoh(1/N)$.
For the neural network model in \cref{eq:exp}, we further have those covariance tensors
in closed form.

\begin{theorem}\label{thm:varefima}
    \begin{align*}
\left[
        \cov
        \left(
            \efima(\bm \theta)
        \right)
        \right]^{ijkl}
        &=
        \frac{1}{N}
        \cdot
        \cov
        \left(
            \frac{\partial\ell}{\partial\bm\theta_{i}} \frac{\partial\ell}{\partial\bm\theta_{j}}
            ,
            \frac{\partial\ell}{\partial\bm\theta_{k}} \frac{\partial\ell}{\partial\bm\theta_{l}}
        \right)\\
        &=
        \frac{1}{N}
        \cdot
        \partial_{i} {\bm h}_{L}^{a}(\bm x) \partial_{j} {\bm h}_{L}^{b}(\bm x) \partial_{k} {\bm h}_{L}^{c}(\bm x) \partial_{l} {\bm h}_{L}^{d}(\bm x)
        \cdot
        \left( \kurt_{abcd}(\bm{t}) - \fim_{ab}(\bm{h}_L) \cdot \fim_{cd}(\bm{h}_L) \right),
    \end{align*}
where the 4D tensor
\begin{equation*}
\kurt_{abcd}( \bm{t} )
\defeq
\expect\left[
(\bm{t}_a - \bm{\eta}_a(\bm{h}_{L}(\bm{x})))
(\bm{t}_b - \bm{\eta}_b(\bm{h}_{L}(\bm{x})))
(\bm{t}_c - \bm{\eta}_c(\bm{h}_{L}(\bm{x})))
(\bm{t}_d - \bm{\eta}_d(\bm{h}_{L}(\bm{x})))
\right]
\end{equation*}
is the 4th (unscaled) central moment
\footnote{The kurtosis of a random variable is defined by its 4th
standardized (both centered and normalized) moment.
Here, $\kurt(\cdot)$ denotes the 4th central moment but \emph{not} the kurtosis.} of $\bm{t}(\bm{y})$
and $\partial_i\bm{h}_L(\bm{x})\defeq\partial\bm{h}_L(\bm{x})/\partial\bm\theta_i$.
\end{theorem}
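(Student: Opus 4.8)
The plan is to compute $\cov(\efima(\bm\theta))$ by first reducing to a single-sample computation via the \iid assumption, and then expanding the single-sample score product using the exponential-family structure from \cref{eq:gradl}. The first step is already recorded in \cref{eq:varfims}: since $\efima(\bm\theta) = \frac{1}{N}\sum_i \frac{\partial\ell_i}{\partial\bm\theta}\frac{\partial\ell_i}{\partial\bm\theta^\T}$ is an average of \iid matrix-valued random variables, its covariance tensor is $\frac{1}{N}$ times the covariance tensor of a single summand. So it suffices to compute $\cov\bigl(\frac{\partial\ell}{\partial\bm\theta_i}\frac{\partial\ell}{\partial\bm\theta_j},\ \frac{\partial\ell}{\partial\bm\theta_k}\frac{\partial\ell}{\partial\bm\theta_l}\bigr)$ for a single observation $\bm y\sim p(\bm y\,\vert\,\bm x,\bm\theta)$. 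This gives the first displayed equality in the statement more or less immediately.

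For the second equality, I would substitute $\frac{\partial\ell}{\partial\bm\theta_i} = \partial_i\bm h_L^a(\bm x)\,(\bm t_a - \bm\eta_a)$ from \cref{eq:gradl}. The crucial observation is that the Jacobian factors $\partial_i\bm h_L^a(\bm x)$ are deterministic given $\bm x$ — all the randomness sits in $\bm t(\bm y)$ — so they pull out of the expectation. Writing $\bm u_a \defeq \bm t_a - \bm\eta_a(\bm h_L(\bm x))$, the product $\frac{\partial\ell}{\partial\bm\theta_i}\frac{\partial\ell}{\partial\bm\theta_j} = \partial_i\bm h_L^a\,\partial_j\bm h_L^b\,\bm u_a\bm u_b$, and similarly for the $(k,l)$ factor with indices $c,d$. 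Then
\begin{align*}
\cov\left(\tfrac{\partial\ell}{\partial\bm\theta_i}\tfrac{\partial\ell}{\partial\bm\theta_j},\,\tfrac{\partial\ell}{\partial\bm\theta_k}\tfrac{\partial\ell}{\partial\bm\theta_l}\right)
&= \partial_i\bm h_L^a\,\partial_j\bm h_L^b\,\partial_k\bm h_L^c\,\partial_l\bm h_L^d\,\cov(\bm u_a\bm u_b,\ \bm u_c\bm u_d)\\
&= \partial_i\bm h_L^a\,\partial_j\bm h_L^b\,\partial_k\bm h_L^c\,\partial_l\bm h_L^d\,\bigl(\expect[\bm u_a\bm u_b\bm u_c\bm u_d] - \expect[\bm u_a\bm u_b]\,\expect[\bm u_c\bm u_d]\bigr).
\end{align*}
By definition $\expect[\bm u_a\bm u_b\bm u_c\bm u_d] = \kurt_{abcd}(\bm t)$, and by \cref{thm:exp} we have $\expect[\bm u_a\bm u_b] = \cov(\bm t(\bm y))_{ab} = \fim_{ab}(\bm h_L)$ and likewise $\expect[\bm u_c\bm u_d] = \fim_{cd}(\bm h_L)$ (here $\bm u$ is centered because $\expect[\bm t_a] = \bm\eta_a$). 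Substituting yields exactly the claimed $\kurt_{abcd}(\bm t) - \fim_{ab}(\bm h_L)\fim_{cd}(\bm h_L)$, and reinstating the $\frac{1}{N}$ prefactor completes the proof.

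The steps here are essentially bookkeeping; the only things to be careful about are (i) keeping the Einstein summation indices consistent — in particular noting that $a,b$ are the "inner" contraction indices for the $(i,j)$ pair and $c,d$ for the $(k,l)$ pair, so the repeated index $a$ in $\partial_i\bm h_L^a\,\bm u_a$ is summed within each factor separately rather than across factors — and (ii) confirming that the Jacobian entries are genuinely non-random, which holds because $\bm h_L(\bm x)$ depends only on $\bm x$ and $\bm\theta$, not on the sampled $\bm y_i$. I do not anticipate a serious obstacle; if anything, the mildly delicate point is making the index conventions in the tensor contraction unambiguous so that the final expression is read correctly, which is more a matter of exposition than of mathematical content.
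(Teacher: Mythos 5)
Your proposal is correct and follows essentially the same route as the paper's own proof: reduce to a single-sample covariance via the \iid structure (as in \cref{eq:varfims}), substitute the score from \cref{eq:gradl}, pull the deterministic Jacobian factors out of the expectations, and identify the remaining moments of $\bm t(\bm y)-\bm\eta$ with $\kurt_{abcd}(\bm t)$ and $\fim_{ab}(\bm h_L)$ via \cref{thm:exp}. Your care with the contraction indices $a,b,c,d$ is warranted (the paper's intermediate display even contains an index slip that its final statement corrects), but there is no substantive difference in approach.
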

\begin{remark}\label{remark:kur}
The 4D tensor
$\left( \kurt_{abcd}(\bm{t}) - \fim_{ab}(\bm{h}_L) \cdot \fim_{cd}(\bm{h}_L) \right)$
is the covariance of the random matrix
\begin{equation*}
\frac{\partial\ell}{\partial\bm{h}_L} \frac{\partial\ell}{\partial\bm{h}_L^\T}
=
(\bm{t}(\bm{y}) - \bm\eta) (\bm{t}(\bm{y}) - \bm\eta)^\T,
\end{equation*}
where $\bm{y}\sim{}p(\bm{y}\,\vert\,\bm{h}_L)$.
This random matrix is an estimator of $\fim(\bm{h}_L)$, \ie the FIM \wrt the natural parameters $\bm{h}_L$.
\Cref{thm:varefima} describes how the covariance tensor adapts \wrt
the coordinate transformation $\bm{h}_L\to\bm\theta$.
\end{remark}

Notably, as \( \bm t(\bm y) \) is the sufficient statistics of an exponential family, the derivatives of the log-partition function \( F(\bm h) \) \wrt the natural parameters \( \bm{h} \) are equivalent to the \emph{cumulants} of \( \bm t (\bm y) \). The cumulants correspond to the coefficients of the Taylor expansion of the logarithm of the moment generating function~\citep{mccullagh2018tensor}.
Importantly, the cumulants of order 3 and below are equivalent to the central moments (see \eg \cref{thm:exp}).
However, this is not the case for the 4th central moment which must be expressed as a combination
of the 2nd and 4th cumulants, as stated in the following Lemma.
\begin{lemma} \label{lem:kurt}
\begin{equation*}
        \kurt_{abcd}( \bm{t} )
        = \kappa_{abcd} + \fim_{ab}(\bm h_{L}) \cdot \fim_{cd}(\bm h_{L}) + \fim_{ac}(\bm h_{L}) \cdot \fim_{bd}(\bm h_{L}) + \fim_{ad}(\bm h_{L}) \cdot \fim_{bc}(\bm h_{L}),
    \end{equation*}
    where
\begin{equation*}
        \kappa_{abcd}
        \defeq \left.\frac{\partial^{4} F(\bm h)}{\partial \bm h_{a} \partial \bm h_{b} \partial \bm h_{c} \partial \bm h_{d}}\right\vert_{\bm{h}=\bm{h}_{L} (\bm x)}.
    \end{equation*}
\end{lemma}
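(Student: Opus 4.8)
The plan is to derive the identity by differentiating the log-partition function $F$ four times and reading off the resulting central moments of $\bm{t}(\bm{y})$. The conceptual backbone is the classical moment--cumulant relation: the fourth moment of a random vector is the sum over set partitions of $\{a,b,c,d\}$ of products of cumulants over the blocks, and after centering (replacing $\bm{t}$ by $\bm{t}-\bm\eta$) every partition containing a singleton block drops out because $\expect_p[\bm{t}-\bm\eta]=0$, leaving exactly the single four-block $\kappa_{abcd}$ and the three two-pair blocks. Since for an exponential family the cumulants are the derivatives of $F$ and the second cumulant is $\fim(\bm h_L)$ by \cref{thm:exp}, this gives the claim. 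I would present the direct computation below, which is self-contained.

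First I would recall the standard derivative chain for the exponential family in \cref{eq:exp}. Differentiating $F(\bm h)=\log\int\exp(\bm{t}^\T(\bm y)\bm h)\dy$ once under the integral sign gives $\partial F/\partial\bm h_a=\expect_p[\bm{t}_a]=\bm\eta_a$; a second differentiation gives $\partial^2 F/\partial\bm h_a\partial\bm h_b=\expect_p[(\bm{t}_a-\bm\eta_a)(\bm{t}_b-\bm\eta_b)]=\cov(\bm t)_{ab}=\fim_{ab}(\bm h_L)$, which is exactly \cref{thm:exp}. A third differentiation, say with respect to $\bm h_c$, of the expression $\expect_p[(\bm{t}_a-\bm\eta_a)(\bm{t}_b-\bm\eta_b)]$ produces two kinds of terms: when the derivative hits one of the centering factors $(\bm t_j-\bm\eta_j)$ it contributes $-\partial\bm\eta_j/\partial\bm h_c=-\fim_{jc}(\bm h_L)$ times $\expect_p$ of a single remaining centered factor, which vanishes; and when the derivative hits the density $\exp(\bm{t}^\T\bm h-F)$ it brings down $(\bm{t}_c-\partial_c F)=(\bm t_c-\bm\eta_c)$. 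Hence $\partial^3 F/\partial\bm h_a\partial\bm h_b\partial\bm h_c$ equals the third central moment $\expect_p[(\bm t_a-\bm\eta_a)(\bm t_b-\bm\eta_b)(\bm t_c-\bm\eta_c)]$, i.e. the third and lower cumulants coincide with the central moments.

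The key step is one further differentiation with respect to $\bm h_d$. Now three centering factors $(\bm t_j-\bm\eta_j)$, $j\in\{a,b,c\}$, can be hit: each such term contributes $-\fim_{jd}(\bm h_L)$ times $\expect_p$ of the product of the remaining two centered factors, which is a second central moment, i.e. a copy of $\fim(\bm h_L)$; summing the three choices of $j$ yields $-\fim_{ad}\fim_{bc}-\fim_{bd}\fim_{ac}-\fim_{cd}\fim_{ab}$ (all evaluated at $\bm h_L(\bm x)$). The remaining term, where the derivative hits the density, brings down $(\bm t_d-\bm\eta_d)$ and gives the fourth central moment $\kurt_{abcd}(\bm t)$. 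Since $\kappa_{abcd}$ is defined as $\partial^4 F/\partial\bm h_a\partial\bm h_b\partial\bm h_c\partial\bm h_d$ at $\bm h=\bm h_L(\bm x)$, we obtain $\kappa_{abcd}=\kurt_{abcd}(\bm t)-\fim_{ad}\fim_{bc}-\fim_{bd}\fim_{ac}-\fim_{cd}\fim_{ab}$, which is the stated identity after relabelling the three pairing terms.

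The only real obstacle is the routine housekeeping of justifying the interchange of differentiation and integration at each of the four steps and the smoothness of $F$. This is standard for exponential families: on the interior of the natural-parameter domain the normalizer is finite and real-analytic in $\bm h$, so all derivatives exist and may be taken under the integral sign by dominated convergence with a local exponential majorant. I would also remark that the stated identity is a pointwise tensor equality in $a,b,c,d$ (no summation) evaluated at $\bm h_L=\bm h_L(\bm x)$, so the three pairing terms are precisely the symmetrisation over the three ways of splitting $\{a,b,c,d\}$ into two pairs, matching the partition-based derivation above; one may present that symmetry argument instead of carefully tracking which factor the derivative hits, but the explicit differentiation is cleaner to write.
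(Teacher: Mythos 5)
Your proof is correct, and it takes a genuinely different route from the paper's. The paper works purely combinatorially: it quotes McCullagh's moment--cumulant conversion formulas (expressing the non-central moments $\kappa_{ij}$, $\kappa_{ijk}$, $\kappa_{ijkl}$ in terms of cumulants with the $[n]$ permutation notation), expands the fourth central moment $\kurt_{abcd}$ in non-central moments, substitutes, and cancels to land on $\kappa_{a,b,c,d} + \kappa_{a,b}\kappa_{c,d}[3]$. You instead differentiate the identity $\partial^2 F/\partial\bm h_a\partial\bm h_b = \expect_p[(\bm t_a-\bm\eta_a)(\bm t_b-\bm\eta_b)]$ twice more under the integral sign, tracking whether the derivative hits a centering factor (producing $-\fim_{jd}$ times a second central moment) or the density (bringing down another centered factor), which gives $\kappa_{abcd}=\kurt_{abcd}-\fim_{ad}\fim_{bc}-\fim_{bd}\fim_{ac}-\fim_{cd}\fim_{ab}$ directly; your intermediate observation that the third cumulant equals the third central moment also recovers the remark preceding the Lemma. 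Your derivation is more self-contained and elementary --- it needs only differentiation under the integral (justified on the interior of the natural-parameter domain, as you note) and the fact that centered factors have zero mean --- whereas the paper's argument leans on externally quoted partition identities but generalizes immediately to cumulants of any order without re-doing the bookkeeping of which factor the derivative hits.
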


\begin{remark}
In the 1D case, the 4th central moment simplifies to
$\kurt(\bm t) = F''''(\bm h_{L}) + 3(F''(\bm h_{L}))^{2}$.
\end{remark}

For the second estimator \( \efimb(\bm \theta) \), the covariance only depends on the 2nd central moment
of $\bm{t}(\bm{y})$.

\begin{theorem}
    \label{thm:varefimb}
    \begin{align*}
\left[
        \cov
        \left(
            \efimb(\bm \theta)
        \right)
        \right]^{ijkl}
        =
        \frac{1}{N}
        \cdot
        \cov
        \left(
            -\frac{\partial^{2}\ell}{\partial\bm\theta_{i} \partial\bm\theta_{j}},
            -\frac{\partial^{2}\ell}{\partial\bm\theta_{k} \partial\bm\theta_{l}}
        \right)
        &=
        \frac{1}{N}
        \cdot
        \partial^{2}_{ij}{\bm h}_{L}^{\alpha}({\bm x})
        \partial^{2}_{kl}{\bm h}_{L}^{\beta}({\bm x})
        \fim_{\alpha\beta}( \bm{h}_L ),
    \end{align*}
where
$\partial_{ij}^2\bm{h}_L(\bm{x}) \defeq
{\partial^2\bm{h}_L(\bm{x})}
/
{\partial\bm\theta_{i}\partial\bm\theta_j}$\footnote{In this paper,
the derivatives are by default taken \wrt $\bm\theta$. Therefore,
$\partial_i\defeq\frac{\partial}{\partial\bm\theta_i}$ and
$\partial_{ij}^2\defeq\frac{\partial^2}{\partial\bm\theta_i\partial\bm\theta_j}$.}.
\end{theorem}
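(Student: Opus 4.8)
The plan is to reduce everything to the closed-form Hessian in \cref{eq:hessianl} and then exploit the fact that, with the input $\bm x$ fixed (recall the \wlogt reduction to $M=1$), every quantity built from $\bm h_L(\bm x)$ and its $\bm\theta$-derivatives is deterministic, so the only randomness entering the covariance is that of $\bm y \sim p(\bm y \mid \bm x, \bm\theta)$.

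First I would invoke \cref{eq:varfims} to write $\cov(\efimb(\bm\theta)) = \tfrac1N\cov\!\left(-\partial^2\ell/\partial\bm\theta\partial\bm\theta^\T\right)$, so it suffices to compute the covariance between two entries of a single Hessian. Substituting \cref{eq:hessianl} splits $-\partial^2\ell/\partial\bm\theta_i\partial\bm\theta_j$ into a stochastic piece $-(\bm t_\alpha - \bm\eta_\alpha)\,\partial^2_{ij}\bm h_L^\alpha$ and a deterministic piece $\partial_i\bm h_L^a\,\fim_{ab}(\bm h_L)\,\partial_j\bm h_L^b$, which is exactly the corresponding entry of $\fim(\bm\theta)$ by \cref{eq:fimexp}. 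Since covariance is insensitive to adding a constant, the deterministic piece drops out, and the deterministic Hessian-of-$\bm h_L$ factors $\partial^2_{ij}\bm h_L^\alpha$, $\partial^2_{kl}\bm h_L^\beta$ pull outside the covariance, leaving
\[
\cov\!\left(-\frac{\partial^2\ell}{\partial\bm\theta_i\partial\bm\theta_j},\ -\frac{\partial^2\ell}{\partial\bm\theta_k\partial\bm\theta_l}\right)
= \partial^2_{ij}\bm h_L^\alpha\,\partial^2_{kl}\bm h_L^\beta\,\cov\!\big(\bm t_\alpha - \bm\eta_\alpha,\ \bm t_\beta - \bm\eta_\beta\big).
\]
Shifting by the constant $\bm\eta(\bm h_L(\bm x))$ again leaves the covariance unchanged, so $\cov(\bm t_\alpha - \bm\eta_\alpha, \bm t_\beta - \bm\eta_\beta) = \cov(\bm t_\alpha, \bm t_\beta)$, and applying \cref{thm:exp} identifies this with $\fim_{\alpha\beta}(\bm h_L)$. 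Multiplying by $1/N$ gives exactly the claimed closed form, and the element-wise variance $\var(\efimb(\bm\theta))^{ij}$ is then the $(i,j,i,j)$ contraction of this 4D tensor, consistent with the analogue of \cref{eq:varcov}.

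There is no deep obstacle here: the result is essentially immediate from \cref{eq:hessianl} and \cref{thm:exp}. The only point that needs care is the bookkeeping under the mixed Einstein/$\Sigma$ convention — the parameter indices $i,j,k,l$ must be kept as passive labels (indexing matrix entries, not summed) while only the output-layer indices $\alpha,\beta$ are contracted, and one must state explicitly that the derivatives $\partial^2_{ij}\bm h_L(\bm x)$ are evaluated at the fixed $\bm x$ so that they factor out of the expectation over $\bm y$. A secondary remark worth including is the contrast with \cref{thm:varefima}: because the Hessian of $\ell$ is \emph{linear} in $\bm t(\bm y)$, only the second central moment of $\bm t(\bm y)$ survives, whereas the outer-product estimator $\efima(\bm\theta)$ is quadratic in $\bm t(\bm y)$ and therefore picks up the fourth central moment $\kurt_{abcd}(\bm t)$.
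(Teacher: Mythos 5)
Your proposal is correct and follows essentially the same route as the paper's proof: substitute the Hessian formula of \cref{eq:hessianl}, discard the deterministic $\fim(\bm\theta)$ term since covariance is shift-invariant, pull the deterministic factors $\partial^2_{ij}\bm h_L^\alpha$, $\partial^2_{kl}\bm h_L^\beta$ out by bilinearity, and identify $\cov(\bm t_\alpha,\bm t_\beta)=\fim_{\alpha\beta}(\bm h_L)$ via \cref{thm:exp}. The paper merely writes the same steps out via the explicit expansion $\expect[\,\cdot\,]-\expect[\,\cdot\,]\expect[\,\cdot\,]$ with the cross term vanishing because $\expect[\bm t-\bm\eta]=\bm 0$, which is equivalent to your shift/bilinearity argument.
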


\begin{remark}\label{thm:remark}
By \cref{thm:exp},
the matrix $\fim_{\alpha\beta}(\bm{h}_L)$
is the covariance of the sufficient statistic $\bm{t}(\bm{y})$.
Hence, the covariance of $\efimb(\bm \theta)$
scales with the covariance of
$\bm{t}(\bm{y})$. If $\bm{t}(\bm{y})$ tends to be deterministic, then the
covariance of $\efimb(\bm\theta)$ shrinks towards 0 and its
estimation of the FIM becomes accurate.
\end{remark}

The covariance in \cref{thm:varefima,thm:varefimb} has two different components:
\ding{192} the derivatives of the deep neural network;
and
\ding{193} the central (unscaled) moments of \( \bm t (\bm y) \).
The 4D tensor $\kurt_{abcd}(\bm t)$ and the 2D FIM $\fim_{\alpha\beta}(\bm h_{L})$ correspond to
the 4th and 2nd central moments of $\bm{t}(\bm y)$, respectively.
Intuitively, the larger the scale of the Jacobian or the Hessian of the neural
network mapping $\bm\theta\to\bm{h}_L$
and/or the larger the central moments of the exponential family,
the lower the accuracy when estimating the FIM.

Additionally, in \cref{sup:changecoordinates} we show that under reparametrization of the neural network weights \( \efima(\bm \theta) \) is a covariant tensor, just like the FIM. Contrarily, \( \efimb(\bm \theta) \) does not have this property.

\subsection{Variance Bounds}
\label{subsec:variance_bounds}

We aim to derive meaningful upper bounds of the covariances presented in \cref{thm:varefima,thm:varefimb}.
Using the Cauchy-Schwarz inequality, we can
``decouple'' the derivatives of the neural network mapping
and the central moments of the exponential family
into different terms. This provides various bounds on the scale of covariance quantities.

\begin{theorem}\label{thm:varfima11}
    \begin{align*}
        \left\Vert \cov \left( \efima(\bm \theta) \right) \right\Vert_F
        & \le
        \frac{1}{N}
        \cdot
        \left\Vert\frac{\partial\bm{h}_L}{\partial\bm\theta}\right\Vert_F^4
        \cdot
        \Vert \kurt(\bm{t}) - \fim(\bm{h}_L) \otimes \fim(\bm{h}_L) \Vert_{F},
    \end{align*}
    where  $\otimes$ is the tensor-product:
    $\left( \fim(\bm{h}_L) \otimes \fim(\bm{h}_L) \right)_{abcd} \defeq \fim_{ab}(\bm{h}_L) \cdot \fim_{cd}(\bm{h}_L)$.
\end{theorem}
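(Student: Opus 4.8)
The plan is to start from the closed-form expression for $\cov(\efima(\bm\theta))$ in \cref{thm:varefima}, namely
\[
\left[\cov\left(\efima(\bm\theta)\right)\right]^{ijkl}
=
\frac{1}{N}\,
\partial_i\bm{h}_L^a\,\partial_j\bm{h}_L^b\,\partial_k\bm{h}_L^c\,\partial_l\bm{h}_L^d
\left(\kurt_{abcd}(\bm{t}) - \fim_{ab}(\bm{h}_L)\fim_{cd}(\bm{h}_L)\right),
\]
and bound its Frobenius norm. The key observation is that this 4D tensor is, up to the factor $1/N$, a contraction of the ``moment'' tensor $\bm{C}_{abcd}\defeq\kurt_{abcd}(\bm{t}) - \fim_{ab}(\bm{h}_L)\fim_{cd}(\bm{h}_L)$ with four copies of the Jacobian $J\defeq\partial\bm{h}_L/\partial\bm\theta$ (viewed as an $n_L\times\dim(\bm\theta)$ matrix, so $J^a_i = \partial_i\bm{h}_L^a$). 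Concretely, the $(i,j,k,l)$ entry is $\frac{1}{N}\sum_{a,b,c,d} J^a_i J^b_j J^c_k J^d_l\, \bm{C}_{abcd}$.

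First I would write $\|\cov(\efima)\|_F^2 = \frac{1}{N^2}\sum_{ijkl}\bigl(\sum_{abcd}J^a_i J^b_j J^c_k J^d_l \bm{C}_{abcd}\bigr)^2$ and apply Cauchy--Schwarz to separate the Jacobian factors from the moment factors. The cleanest route is to treat the sum over $(a,b,c,d)$ for fixed $(i,j,k,l)$ as an inner product between the rank-one tensor $J^a_iJ^b_jJ^c_kJ^d_l$ and $\bm{C}_{abcd}$, giving
\[
\Bigl(\sum_{abcd}J^a_iJ^b_jJ^c_kJ^d_l\,\bm{C}_{abcd}\Bigr)^2
\le
\Bigl(\sum_{abcd}(J^a_iJ^b_jJ^c_kJ^d_l)^2\Bigr)\Bigl(\sum_{abcd}\bm{C}_{abcd}^2\Bigr)
=
(J^a_i)^2(J^b_j)^2(J^c_k)^2(J^d_l)^2\cdot\|\bm{C}\|_F^2,
\]
where in the last step I use that $\sum_{abcd}(J^a_iJ^b_jJ^c_kJ^d_l)^2$ factorizes into $\bigl(\sum_a(J^a_i)^2\bigr)\bigl(\sum_b(J^b_j)^2\bigr)\bigl(\sum_c(J^c_k)^2\bigr)\bigl(\sum_d(J^d_l)^2\bigr)$. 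Summing this bound over $(i,j,k,l)$, the Jacobian part again factorizes: $\sum_{ijkl}\bigl(\sum_a(J^a_i)^2\bigr)\cdots = \bigl(\sum_{i,a}(J^a_i)^2\bigr)^4 = \|J\|_F^8$. Taking square roots yields $\|\cov(\efima)\|_F \le \frac{1}{N}\|J\|_F^4\,\|\bm{C}\|_F$, which is exactly the claimed bound once we recall $\bm{C} = \kurt(\bm{t}) - \fim(\bm{h}_L)\otimes\fim(\bm{h}_L)$ with the stated convention for $\otimes$.

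I do not expect a serious obstacle here; the argument is essentially a single application of Cauchy--Schwarz combined with the multiplicativity of the Frobenius norm under tensor products. The only point requiring a little care is bookkeeping the index positions (the derivatives $\partial_i\bm{h}_L^a$ carry one $\bm\theta$-index $i$ and one $\bm{h}_L$-index $a$, so that summing over $a$ contracts against $\bm{C}$ while summing over $i$ builds up the Frobenius norm of $J$), and making sure the factorization $\sum_{abcd}(\cdots)^2 = \prod(\cdots)$ is applied to the correct grouping. An alternative framing that avoids indices entirely is to note that $\cov(\efima)$ is the image of $\bm{C}/N$ under the linear map $T\mapsto (J^\T\otimes J^\T)\,T\,(J\otimes J)$ acting on $4$-tensors, whose operator norm \wrt the Frobenius norm is bounded by $\|J\|_2^4 \le \|J\|_F^4$; but since the Cauchy--Schwarz route already gives the sharper-looking $\|J\|_F^4$ directly and matches the theorem statement, I would present that one.
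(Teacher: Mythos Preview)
Your proposal is correct and essentially matches the paper's proof: both arguments apply Cauchy--Schwarz to the contraction $\sum_{abcd} J^a_iJ^b_jJ^c_kJ^d_l\,\bm{C}_{abcd}$ to separate the Jacobian factors from the moment tensor, then factorize the resulting sums over $(a,b,c,d)$ and $(i,j,k,l)$ to obtain $\|J\|_F^4\,\|\bm{C}\|_F$. The only cosmetic difference is that the paper first records the element-wise bound $|[\cov(\efima)]^{ijkl}|\le\frac{1}{N}\|\partial_i\bm{h}_L\|_p\cdots\|\partial_l\bm{h}_L\|_p\,\|\bm{C}\|_q$ via H\"older for general conjugate exponents $p,q$ (stated separately as \cref{thm:varefima1}) and then aggregates into the $p$-norm before specializing to $p=q=2$, whereas you work with the squared Frobenius norm directly; the computations coincide for $p=q=2$.
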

The scale $\left\Vert \cov \left( \efima(\bm \theta) \right) \right\Vert_F$
measures how much the estimator $\efima(\bm\theta)$ deviates from
$\fim(\bm\theta)$. \Cref{thm:varfima11} says that this deviation
is bounded by the scale of the Jacobian matrix
${\partial\bm{h}_L}/{\partial\bm\theta}$
as well as the scale of $(\kurt(\bm{t}) - \fim(\bm{h}_L) \otimes \fim(\bm{h}_L))$.
Recall from \cref{remark:kur} the latter measures the variance
when estimating the FIM $\fim(\bm{h}_L)$ of the exponential family.
\Cref{thm:varfima11} allows us to study these two different factors
separately. Similarly, we have an upper bound on the scale of the covariance of \( \efimb(\bm\theta) \).

\begin{theorem}\label{thm:varfimb11}
    \begin{align*}
        \left\Vert \cov \left( \efimb(\bm \theta) \right) \right\Vert_F
        & \le
        \frac{1}{N} \cdot
        \left\Vert
        \frac{\partial^2\bm{h}_L(\bm{x})}{\partial\bm\theta\partial\bm\theta^\T}
        \right\Vert_{F}^2
        \cdot
        \Vert \fim(\bm{h}_L) \Vert_{F}.
    \end{align*}
\end{theorem}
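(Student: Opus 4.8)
The plan is to start from the closed-form expression for $\cov(\efimb(\bm\theta))$ in \cref{thm:varefimb}, namely
\[
\left[\cov\left(\efimb(\bm\theta)\right)\right]^{ijkl}
= \frac{1}{N}\,\partial^2_{ij}\bm{h}_L^\alpha(\bm{x})\,\partial^2_{kl}\bm{h}_L^\beta(\bm{x})\,\fim_{\alpha\beta}(\bm{h}_L),
\]
and bound its Frobenius norm. Writing $\|\cov(\efimb(\bm\theta))\|_F^2 = \sum_{ijkl}\left(\left[\cov(\efimb(\bm\theta))\right]^{ijkl}\right)^2$, the goal is to peel off the Hessian factors $\partial^2_{ij}\bm{h}_L$ from the metric factor $\fim_{\alpha\beta}(\bm{h}_L)$.

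First I would fix $(i,j,k,l)$ and view the summand's core, $\partial^2_{ij}\bm{h}_L^\alpha\,\fim_{\alpha\beta}(\bm{h}_L)\,\partial^2_{kl}\bm{h}_L^\beta$, as a bilinear pairing of the two vectors $\bm{u} := \partial^2_{ij}\bm{h}_L(\bm{x})$ and $\bm{v} := \partial^2_{kl}\bm{h}_L(\bm{x})$ through the \psd\ matrix $\fim(\bm{h}_L)$. The natural tool is Cauchy--Schwarz, which here takes one of two equivalent forms: either $|\bm{u}^\T\fim(\bm{h}_L)\bm{v}| \le \|\fim(\bm{h}_L)\|_2\,\|\bm{u}\|_2\|\bm{v}\|_2 \le \|\fim(\bm{h}_L)\|_F\,\|\bm{u}\|_2\|\bm{v}\|_2$, or, treating the product $\partial^2_{ij}\bm{h}_L^\alpha\,\partial^2_{kl}\bm{h}_L^\beta$ as a matrix contracted against $\fim_{\alpha\beta}(\bm{h}_L)$, directly apply the matrix Cauchy--Schwarz $|\langle A, B\rangle_F| \le \|A\|_F\|B\|_F$ with $A_{\alpha\beta} = \partial^2_{ij}\bm{h}_L^\alpha\partial^2_{kl}\bm{h}_L^\beta$ and $B = \fim(\bm{h}_L)$; note $\|A\|_F = \|\partial^2_{ij}\bm{h}_L\|_2\,\|\partial^2_{kl}\bm{h}_L\|_2$. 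Either way we obtain, for each index tuple,
\[
\left|\left[\cov\left(\efimb(\bm\theta)\right)\right]^{ijkl}\right|
\le \frac{1}{N}\,\|\fim(\bm{h}_L)\|_F\,\left\|\partial^2_{ij}\bm{h}_L(\bm{x})\right\|_2\,\left\|\partial^2_{kl}\bm{h}_L(\bm{x})\right\|_2.
\]

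Next I would square both sides and sum over $(i,j,k,l)$. The factor $\|\fim(\bm{h}_L)\|_F^2$ and $1/N^2$ pull out of the sum, and the double sum factorizes:
\[
\sum_{ijkl}\left\|\partial^2_{ij}\bm{h}_L\right\|_2^2\left\|\partial^2_{kl}\bm{h}_L\right\|_2^2
= \left(\sum_{ij}\left\|\partial^2_{ij}\bm{h}_L\right\|_2^2\right)^2
= \left\|\frac{\partial^2\bm{h}_L(\bm{x})}{\partial\bm\theta\partial\bm\theta^\T}\right\|_F^4,
\]
since $\sum_{ij}\|\partial^2_{ij}\bm{h}_L\|_2^2 = \sum_{ij\alpha}(\partial^2_{ij}\bm{h}_L^\alpha)^2$ is by definition the squared Frobenius norm of the Hessian 3-tensor. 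Taking square roots gives $\|\cov(\efimb(\bm\theta))\|_F \le \frac{1}{N}\|\partial^2\bm{h}_L(\bm{x})/\partial\bm\theta\partial\bm\theta^\T\|_F^2\,\|\fim(\bm{h}_L)\|_F$, which is the claim.

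I do not anticipate a serious obstacle here; this is essentially the same decoupling argument used for \cref{thm:varfima11}, only simpler because $\efimb$'s covariance involves the 2nd central moment (a single copy of $\fim(\bm{h}_L)$) rather than a 4th-moment tensor. The one point requiring a little care is the bookkeeping of which Frobenius norm is which — in particular verifying that the relevant object is $\|\partial^2\bm{h}_L/\partial\bm\theta\partial\bm\theta^\T\|_F$ as a norm over \emph{all three} indices $(\alpha, i, j)$ together, and confirming the exponent is $4$ on the Hessian and $1$ on $\|\fim(\bm{h}_L)\|_F$ so that everything is dimensionally consistent with the exact identity in \cref{thm:varefimb}. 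A brief sanity check in the scalar-output, single-parameter case ($n_L = 1$, $\dim(\bm\theta)=1$) confirms the bound collapses to an equality there, which is reassuring.
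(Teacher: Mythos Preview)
Your proposal is correct and follows essentially the same approach as the paper: obtain the element-wise bound $\bigl|[\cov(\efimb(\bm\theta))]^{ijkl}\bigr|\le \frac{1}{N}\|\partial^2_{ij}\bm{h}_L\|_2\,\|\partial^2_{kl}\bm{h}_L\|_2\,\|\fim(\bm{h}_L)\|_F$ via Cauchy--Schwarz (the paper states this as a separate lemma using H\"older with general conjugate $p,q$ and then specializes to $p=q=2$), and then square, sum over $(i,j,k,l)$, factorize the double sum, and take the square root. The only cosmetic difference is that the paper carries general $L_p$/$L_q$ exponents through the summation before setting $p=q=2$, whereas you work directly with the $L_2$ case throughout.
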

On the RHS, the Hessian ${\partial^2\bm{h}_L(\bm{x})}/{\partial\bm\theta\partial\bm\theta^\T}$
is a 3D tensor of shape $n_L\times{}\dim(\bm\theta)\times\dim(\bm\theta)$.
Therefore, the variance of $\efimb(\bm \theta)$
is bounded by the scale of the Hessian, as well as the scale of the FIM $\fim(\bm{h}_L)$
of the output exponential family.

We consider an upper bound to further simplify related terms in \cref{thm:varfima11,thm:varfimb11}.
\begin{lemma}\label{thm:elementwisemoment}
\begin{align*}
\left\Vert \kurt(\bm{t}) - \fim(\bm{h}_L) \otimes \fim(\bm{h}_L) \right\Vert_{F}
&\le
\sqrt{2}
\left( \sum_{a=1}^{n_{L}}
\left( \sqrt{\kurt_{aaaa}(\bm{t})} + \fim_{aa}(\bm{h}_L) \right) \right)^2,\\
\left\Vert \fim(\bm{h}_L) \right\Vert_{F}
&\le
\sum_{a=1}^{n_{L}} \fim_{aa}(\bm{h}_L).
\end{align*}
\end{lemma}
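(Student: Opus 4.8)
The plan is to prove the two inequalities separately, both by reducing a Frobenius norm of a structured tensor to a sum over diagonal entries via elementary estimates.

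\textbf{Second inequality first (the easy warm-up).} The matrix $\fim(\bm h_L) = \cov(\bm t(\bm y))$ is symmetric positive semi-definite by \cref{thm:exp}. For any PSD matrix $A$ with entries $A_{ab}$ one has $|A_{ab}| \le \sqrt{A_{aa} A_{bb}}$ (Cauchy–Schwarz applied to the inner product $\langle u, v\rangle = u^\T A v$, or directly from $2\times 2$ principal minors being nonnegative). Hence
\begin{equation*}
\left\Vert \fim(\bm h_L) \right\Vert_F^2
= \sum_{a,b} \fim_{ab}^2
\le \sum_{a,b} \fim_{aa}\fim_{bb}
= \left( \sum_a \fim_{aa} \right)^2,
\end{equation*}
and taking square roots gives the claim. (Since all $\fim_{aa} \ge 0$ the square root of the sum of squares is bounded by the sum, so no sign issues arise.)

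\textbf{First inequality.} Write $T_{abcd} \defeq \kurt_{abcd}(\bm t) - \fim_{ab}(\bm h_L)\fim_{cd}(\bm h_L)$, which by \cref{remark:kur} is the covariance tensor of the random matrix $M \defeq (\bm t(\bm y) - \bm\eta)(\bm t(\bm y) - \bm\eta)^\T$, i.e.\ $T_{abcd} = \cov(M_{ab}, M_{cd})$. By the triangle inequality on the Frobenius norm and then Cauchy–Schwarz on each covariance entry,
\begin{equation*}
\Vert T \Vert_F
\le \sqrt{ \sum_{a,b,c,d} \left( \var(M_{ab})\, \var(M_{cd}) \right) }
= \sum_{a,b} \var(M_{ab}).
\end{equation*}
Wait — more carefully: $|T_{abcd}| = |\cov(M_{ab},M_{cd})| \le \sqrt{\var(M_{ab})}\sqrt{\var(M_{cd})}$, so $\Vert T\Vert_F^2 \le \sum_{abcd}\var(M_{ab})\var(M_{cd}) = \big(\sum_{ab}\var(M_{ab})\big)^2$, giving $\Vert T\Vert_F \le \sum_{ab}\var(M_{ab})$. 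Next I bound the diagonal sum $\sum_{ab}\var(M_{ab})$. Since $M_{ab} = (\bm t_a - \bm\eta_a)(\bm t_b - \bm\eta_b)$, we have $\var(M_{ab}) \le \expect(M_{ab}^2) = \kurt_{aabb}(\bm t)$ (here I use $\expect(M_{ab})^2 \ge 0$ to drop it — actually $\var(M_{ab}) = \kurt_{aabb} - \fim_{ab}^2 \le \kurt_{aabb}$). Then apply Cauchy–Schwarz to the expectation defining $\kurt_{aabb}$: writing $u_a \defeq \bm t_a - \bm\eta_a$, $\kurt_{aabb} = \expect(u_a^2 u_b^2) \le \sqrt{\expect(u_a^4)}\sqrt{\expect(u_b^4)} = \sqrt{\kurt_{aaaa}}\sqrt{\kurt_{bbbb}}$. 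Therefore
\begin{equation*}
\sum_{a,b}\var(M_{ab})
\le \sum_{a,b}\sqrt{\kurt_{aaaa}(\bm t)}\sqrt{\kurt_{bbbb}(\bm t)}
= \left( \sum_a \sqrt{\kurt_{aaaa}(\bm t)} \right)^2.
\end{equation*}
This already yields the bound with constant $1$ and without the $\fim_{aa}$ terms, so to land on the stated form $\sqrt{2}\big(\sum_a(\sqrt{\kurt_{aaaa}} + \fim_{aa})\big)^2$ I should be slightly more generous: since $\big(\sum_a\sqrt{\kurt_{aaaa}}\big)^2 \le \big(\sum_a(\sqrt{\kurt_{aaaa}} + \fim_{aa})\big)^2 \le \sqrt{2}\big(\sum_a(\sqrt{\kurt_{aaaa}} + \fim_{aa})\big)^2$ trivially (all terms nonnegative, $1 \le \sqrt 2$), the stated inequality follows a fortiori. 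The $\sqrt 2$ and the extra $\fim_{aa}$ slack presumably come from the authors instead bounding $\Vert T\Vert_F \le \Vert\kurt(\bm t)\Vert_F + \Vert\fim(\bm h_L)\otimes\fim(\bm h_L)\Vert_F$ by the triangle inequality, then handling each piece — $\Vert\fim\otimes\fim\Vert_F = \Vert\fim\Vert_F^2 \le (\sum_a\fim_{aa})^2$ by the second inequality, and $\Vert\kurt(\bm t)\Vert_F \le \sqrt 2(\ldots)$ via the same symmetric-PSD-style entrywise bound applied to the (symmetric) 4-tensor $\kurt$ — and I would follow whichever of these two routes makes the constant come out exactly; either is routine.

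\textbf{Main obstacle.} There is no deep obstacle; the only care needed is bookkeeping the constant. Deciding whether to bound $T$ directly as a covariance tensor (cleaner, constant $1$) or to split $\kurt$ and $\fim\otimes\fim$ via the triangle inequality (matches the stated $\sqrt 2$ and the $\fim_{aa}$ term) is the one genuine choice, and the entrywise bound $|A_{i_1\cdots i_k}| \le \prod$ (diagonal factors)$^{1/k}$ for symmetric PSD-type tensors — which is just iterated Cauchy–Schwarz on the moment representation — is the workhorse in both cases. I would also double-check that dropping $\expect(M_{ab})^2 = \fim_{ab}^2$ and $\fim_{ab}^2$ respectively is legitimate, which it is since both are manifestly nonnegative.
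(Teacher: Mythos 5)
Your proof is correct, and for the first inequality it takes a genuinely different route from the paper. The second inequality is handled identically in both: Cauchy--Schwarz on the entries, $\cov^2(\bm{t}_a,\bm{t}_b)\le\var(\bm{t}_a)\var(\bm{t}_b)$, then summing. For the first inequality the paper does \emph{not} use the covariance interpretation: it applies the elementary bound $(x-y)^2\le 2(x^2+y^2)$ entrywise to $\kurt_{abcd}-\fim_{ab}(\bm{h}_L)\fim_{cd}(\bm{h}_L)$ and then bounds the two pieces by iterated Cauchy--Schwarz, $\expect^2\left[(\bm{t}_a-\bm\eta_a)(\bm{t}_b-\bm\eta_b)(\bm{t}_c-\bm\eta_c)(\bm{t}_d-\bm\eta_d)\right]\le\sqrt{\kurt_{aaaa}\kurt_{bbbb}\kurt_{cccc}\kurt_{dddd}}$ and $\expect^2\left[(\bm{t}_a-\bm\eta_a)(\bm{t}_b-\bm\eta_b)\right]\expect^2\left[(\bm{t}_c-\bm\eta_c)(\bm{t}_d-\bm\eta_d)\right]\le\fim_{aa}\fim_{bb}\fim_{cc}\fim_{dd}$, which is exactly where the $\sqrt{2}$ and the $\fim_{aa}$ terms in the statement come from --- so your closing guess about the provenance of the constant is essentially right. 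You instead use \cref{remark:kur}: the tensor is the covariance of $M=(\bm{t}-\bm\eta)(\bm{t}-\bm\eta)^\T$, so $\vert\kurt_{abcd}-\fim_{ab}\fim_{cd}\vert\le\sqrt{\var(M_{ab})\var(M_{cd})}$, with $\var(M_{ab})\le\kurt_{aabb}\le\sqrt{\kurt_{aaaa}\kurt_{bbbb}}$, giving the strictly sharper conclusion $\Vert\kurt(\bm{t})-\fim(\bm{h}_L)\otimes\fim(\bm{h}_L)\Vert_F\le\left(\sum_a\sqrt{\kurt_{aaaa}(\bm{t})}\right)^2$, from which the stated bound follows a fortiori because $\fim_{aa}\ge 0$ and $1\le\sqrt{2}$. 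Your route buys a cleaner argument and shows the constant $\sqrt{2}$ and the $\fim_{aa}$ terms are not needed; the paper's route works purely from the entries and in passing gives separate diagonal bounds for $\Vert\kurt(\bm{t})\Vert_F$ and $\Vert\fim(\bm{h}_L)\otimes\fim(\bm{h}_L)\Vert_F$. (Minor stylistic point: your first display invokes ``the triangle inequality'' where none is used --- the corrected restatement immediately after is the valid argument.)
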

\begin{remark}
Using \cref{thm:elementwisemoment}, it is straightforward to bound the scale
of the covariance tensors with the size of the Jacobian/Hessian, as well as the
central moments $\kurt_{aaaa}(\bm{t})$ and $\fim_{aa}(\bm{h}_L)$. These bounds
are meaningful but omitted for brevity.
\end{remark}
\begin{remark}
By \cref{thm:elementwisemoment},
$\Vert \kurt(\bm{t}) - \fim(\bm{h}_L) \otimes \fim(\bm{h}_L)\Vert_F$
is in the order of $\bigoh(n_L^2)$ and
$\Vert \fim(\bm{h}_L) \Vert_F$ is in the order of $\bigoh(n_L)$.
\end{remark}

The scale of the tensors
$\kurt(\bm{t}) - \fim(\bm{h}_L) \otimes \fim(\bm{h}_L)$ and $\fim(\bm{h}_L)$
is bounded by the diagonal elements of $\kurt(\bm{t})$ and $\fim(\bm{h}_L)$,
or the element-wise central moments of $\bm{t}(\bm{y})$.
Understanding the scale of these 1D central moments helps
to understand the scale of the moment terms in our key statements.

\Cref{tab:exp_fam} presents some 1D exponential families and their cumulants.
\Cref{fig:exp_fam} displays \( \kurt(\bm t) - \var^2(\bm{t}) \) and \( \var(\bm{t}) \)
against the mean of these distributions.
Based on \cref{fig:bernoulli}, if the neural network has Bernoulli output units,
then the scale of $\kurt_{aaaa}(\bm{t}) - (\fim_{aa}(\bm{h}_L))^2$ is smaller
than $\fim_{aa}(\bm{h}_L)$ regardless of $\bm{h}_L$.
Notably, when \( p = 0.5 \), the variance of the first estimator \( \efima(\bm \theta) \)
is 0 --- regardless of $\bm h_{L}$.
For normal distribution output units (corresponding to the mean squared error loss) in \cref{fig:normal},
both central moment quantities are constant.
For Poisson output units in \cref{fig:poisson},
$\fim_{aa}(\bm{h}_L)$ increases linearly with the average number of events \( \lambda \),
while \( \kurt_{aaaa}(\bm t) - (\fim_{aa}(\bm{h}_L))^2 \) increases quadratically.
Thus, the upper bound of $\Vert\cov(\efima(\bm\theta))\Vert_F$ increases faster than
the upper bound of $\Vert\cov(\efimb(\bm\theta))\Vert_F$
as \( \bm h_{L} \) enlarges.
In this case, one may prefer \( \efimb(\bm \theta) \) rather than \( \efima(\bm \theta) \)
and/or control the scale of \( \bm h_{L} \).
In general, $\bm{h}_L$ is desired to be in certain regions in the parameter space of the exponential family
to control the estimation variance of the FIM.
Techniques to achieve this include
regularization on the scale of \( \bm h_{L} \);
temperature scaling~\citep{hinton2015distilling};
or normalization layers~\citep{ba2016layer,weightnorm}.
Of course, they could inversely increase the scale of the derivatives of the neural network,
which can be controlled by imposing additional constraints, \ie, Lipschitz requirements.

See \cref{sup:experimental} for numerical verifications
of the bounds in \cref{thm:varfima11,thm:varfimb11} on the MNIST dataset.

\begin{table}[t]
    \vskip -0.1in
    \caption{Cumulants of univariate exponential family distributions, given by derivatives of the log-partition function.
$p$, $\mu$ and $\lambda$ denote the mean of the Bernoulli, normal, and Poisson distributions, respectively.
    \textdagger~The normal distribution has unit standard deviation (\( \sigma = 1 \)).}
    \centering
    \begin{sc}
    \begin{tabularx}{\textwidth}{XMSSB}
        \toprule
        Dist. & \( F(\bm h) \) & \( \bm h \) & \( \partial^{2} F(\bm h) \) & \( \partial^{4} F(\bm h) \) \\
        \midrule
        Bernoulli & \( \log(1 + \exp(\bm h)) \) & \( \log \nicefrac{p}{1-p} \) & \( p(1-p) \) & \( p(1-p)(6p^2 - 6p + 1) \) \\
        Normal\textdagger & \( \nicefrac{\bm h^2}{2} \) & \( {\mu} \) & 1 & 0 \\
        Poisson & \( \exp(\bm h) \) & \( \log \lambda \) & \( \lambda \) & \( \lambda \) \\
        \bottomrule
    \end{tabularx}
    \end{sc}
    \label{tab:exp_fam}
\end{table}

\begin{figure}
    \centering
    \begin{subfigure}[t]{0.3\textwidth}
\centering
        \includegraphics[width=\textwidth]{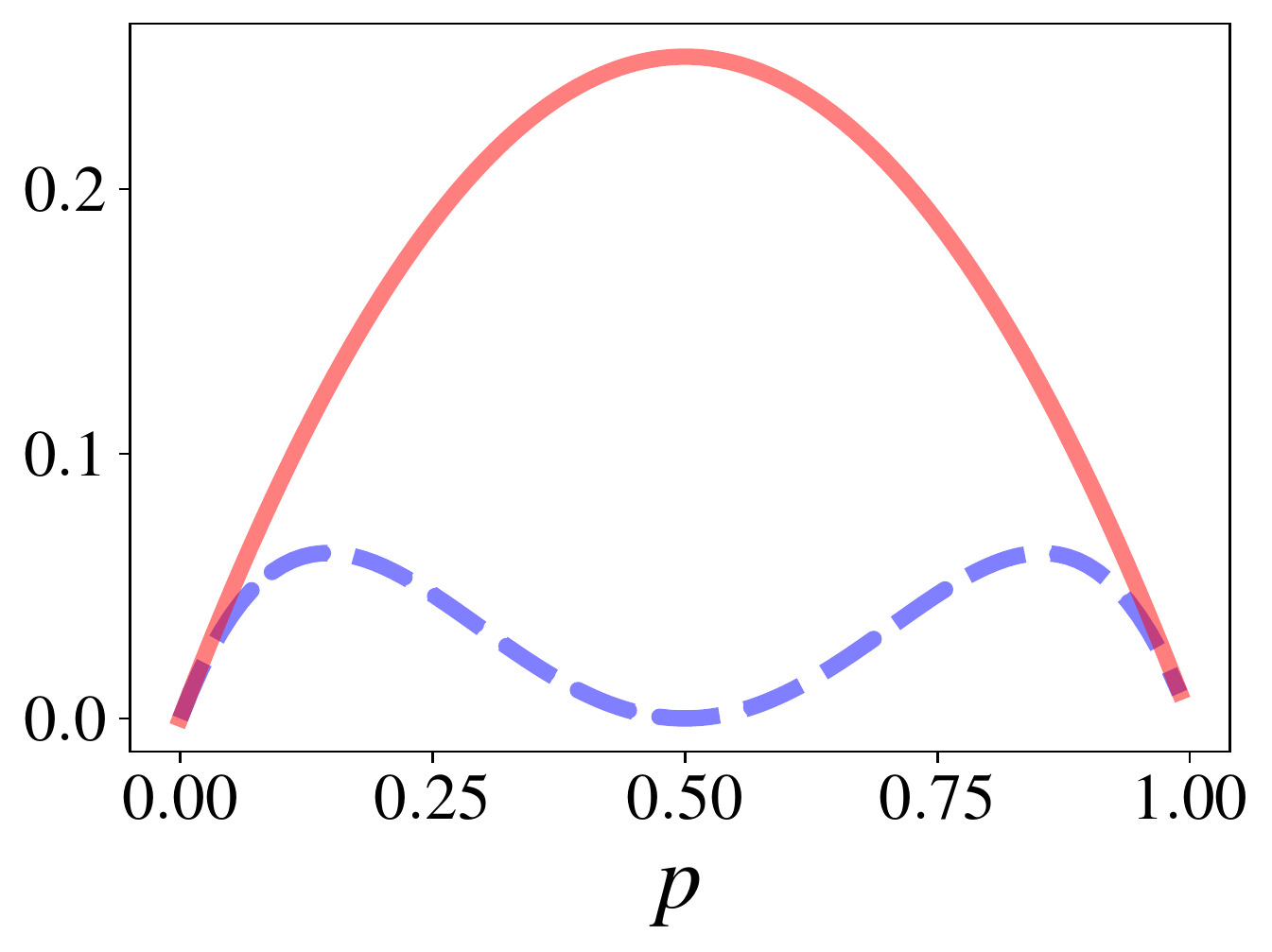}
        \caption{Bernoulli.}
        \label{fig:bernoulli}
    \end{subfigure}
    \hfill
    \begin{subfigure}[t]{0.3\textwidth}
\centering
        \includegraphics[width=\textwidth]{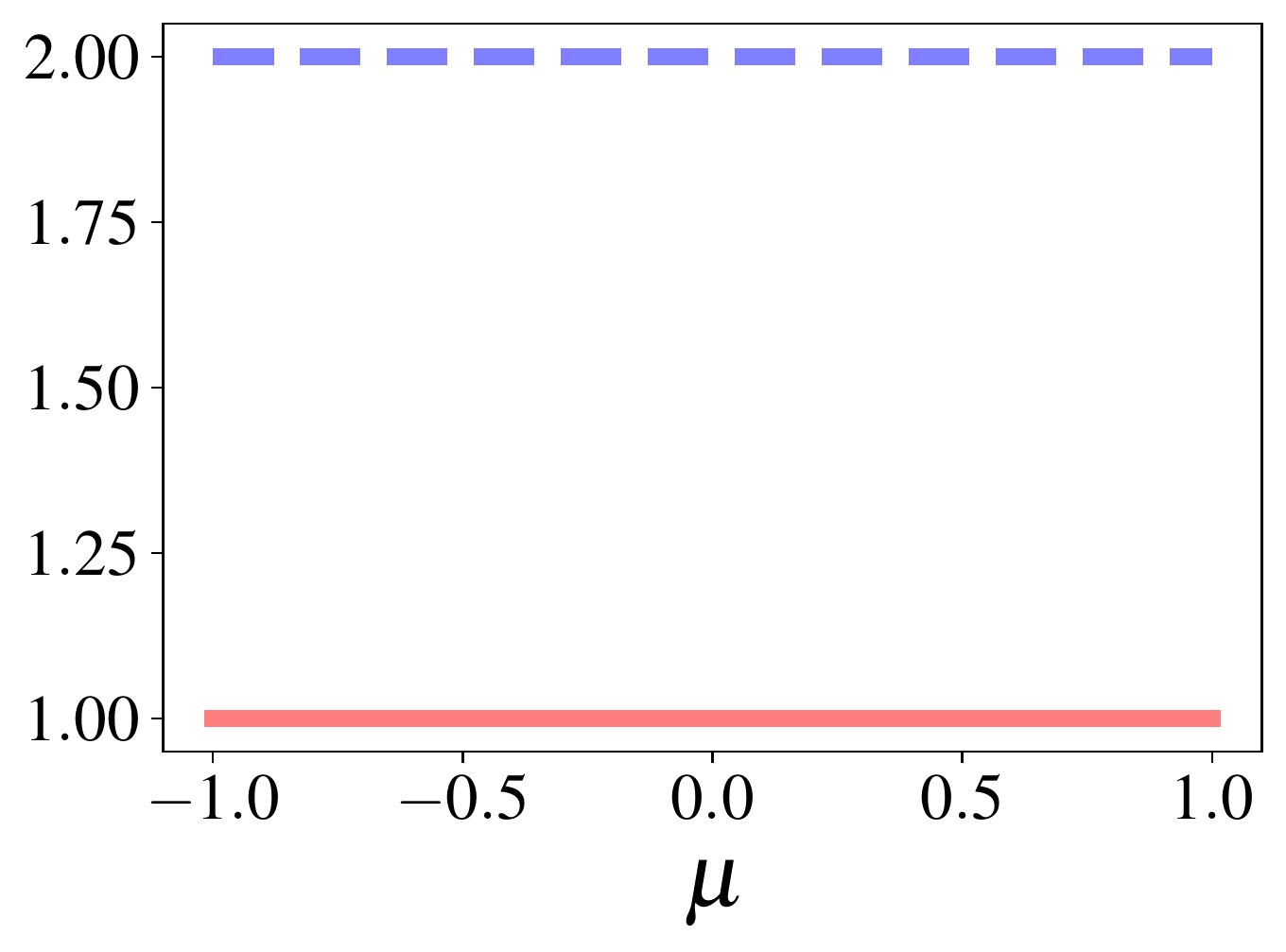}
        \caption{Normal (\( \sigma = 1 \)).}
        \label{fig:normal}
    \end{subfigure}
    \hfill
    \begin{subfigure}[t]{0.3\textwidth}
\centering
        \includegraphics[width=\textwidth]{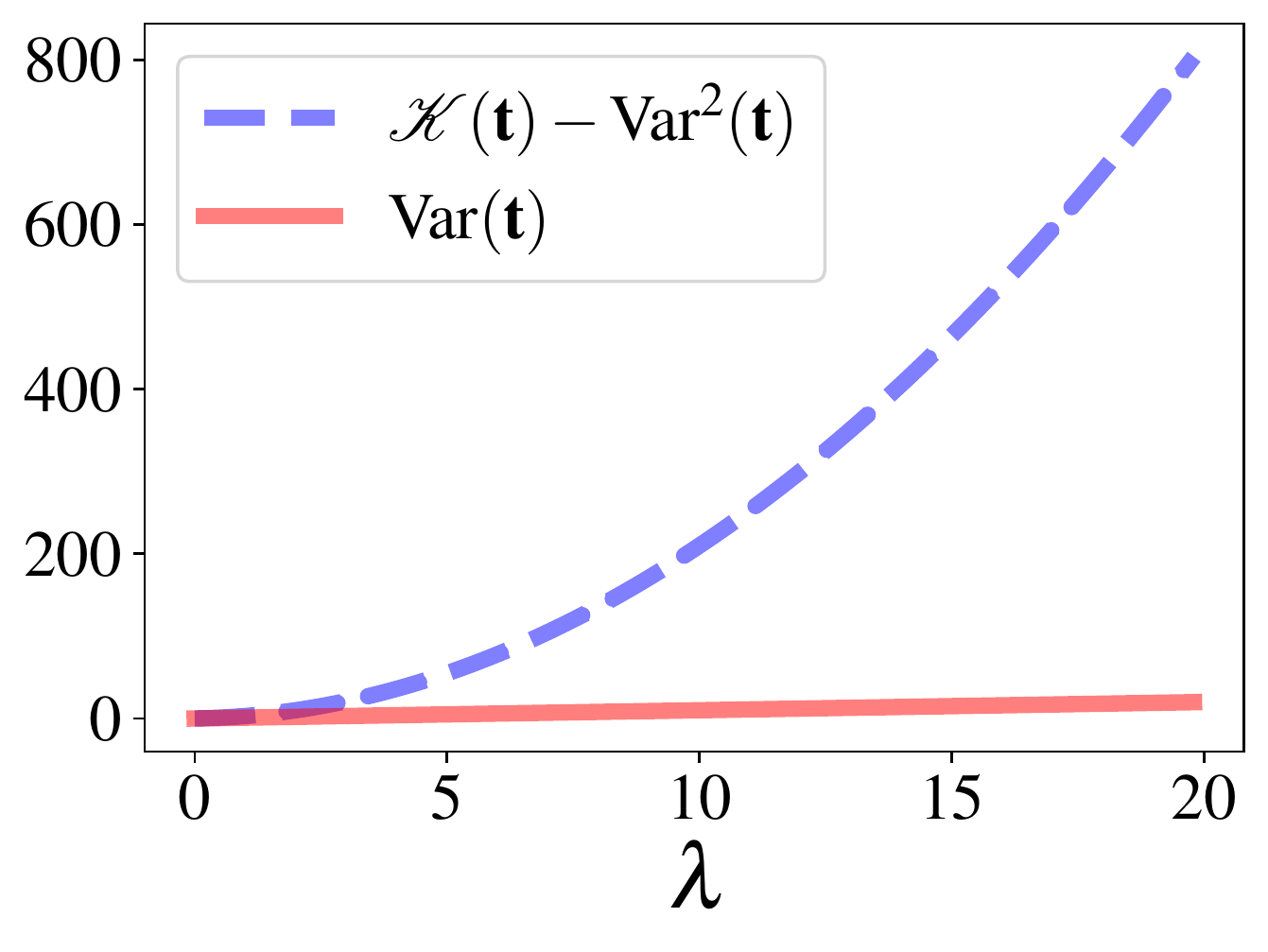}
        \caption{Poisson.}
        \label{fig:poisson}
    \end{subfigure}
       \caption{The scale of
           $\kurt(\bm{t}) - \var^2(\bm{t})$ and $\var(\bm{t})$ for the exponential family distributions in \cref{tab:exp_fam}.
}
       \label{fig:exp_fam}
\end{figure}

\subsection{Positive Definiteness}

By definition, the FIM of any statistical model is positive semidefinite (\psd).
The first estimator \( \efima(\bm \theta) \) is naturally on the \psd manifold (space of \psd matrices).
On the other hand, \( \efimb(\bm \theta) \) can ``fall off'' the \psd manifold.
It is important to examine the likelihood for
\( \efimb(\bm \theta) \) having a negative spectrum and the corresponding scale,
so that any algorithm (\eg natural gradient) relying of the FIM being \psd can be adapted.

\cref{eq:efimb} can be re-expressed as the sum of a \psd matrix and a linear combination
of \( n_{L} \) symmetric matrices.
We provide the likelihood for \( \efimb(\bm \theta) \) staying on the \psd manifold
given conditions on the spectrum of the Hessian.

\begin{theorem}
    \label{thm:est2_psd}
    Let \( \lambda_{\min}(\cdot) \), \( \lambda_{\max}(\cdot) \), and \( \rho(\cdot) \)
    denote the smallest eigenvalue, the largest eigenvalue, and the spectral radius
    (largest absolute value of the spectrum), respectively.
    Let $\bm\rho \defeq ( \rho(\partial^{2} \bm h^{1}_{L}), \cdots
    \rho(\partial^{2} \bm h^{n_L}_{L}) )$.
    If $\lambda_{\min}(\fim(\bm\theta))>0$, then with probability at least
\begin{align*}
        1 - \frac{n_L \cdot \Vert\bm\rho\Vert_2^2 \cdot \lambda_{\max}(\fim(\bm{h}_L))}
        {N \cdot \lambda_{\min}^2(\fim(\bm \theta))},
    \end{align*}
the estimator \( \efimb(\bm \theta) \) with $N$ samples is a \psd matrix.
\end{theorem}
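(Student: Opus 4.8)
The plan is to write $\efimb(\bm\theta)$ as the true FIM plus a centered random perturbation, and then to show that with high probability this perturbation is too small in spectral norm to push the matrix off the \psd manifold.

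First I would decompose, using \cref{eq:efimb}: $\efimb(\bm\theta) = \fim(\bm\theta) + E$ with
\[
E \defeq \sum_{a=1}^{n_L} c_a\, \frac{\partial^2\bm h_L^a}{\partial\bm\theta\partial\bm\theta^\T},
\qquad
c_a \defeq \bm\eta_a - \frac{1}{N}\sum_{i=1}^{N}\bm t_a(\bm y_i),
\]
where the second term of \cref{eq:efimb} is exactly $\fim(\bm\theta)$ (which is \psd) and $E$ is a linear combination of the $n_L$ symmetric Hessians $\partial^2\bm h_L^a/\partial\bm\theta\partial\bm\theta^\T$ with random coefficients $c_a$ that are centered since $\expect\bm t_a(\bm y)=\bm\eta_a$.

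Next I would reduce the claim to a scalar tail bound. By Weyl's inequality $\lambda_{\min}(\efimb(\bm\theta))\ge\lambda_{\min}(\fim(\bm\theta))-\|E\|_2$ in the spectral norm, so $\efimb(\bm\theta)\succeq0$ whenever $\|E\|_2\le\lambda_{\min}(\fim(\bm\theta))$. Each Hessian $\partial^2\bm h_L^a/\partial\bm\theta\partial\bm\theta^\T$ is symmetric, so its spectral norm equals $\rho(\partial^2\bm h_L^a)$; thus by the triangle inequality and Cauchy--Schwarz,
\[
\|E\|_2 \;\le\; \sum_{a=1}^{n_L}|c_a|\,\rho(\partial^2\bm h_L^a) \;\le\; \|\bm c\|_2\,\|\bm\rho\|_2,
\qquad \bm c\defeq(c_1,\dots,c_{n_L}).
\]
Since $\lambda_{\min}(\fim(\bm\theta))>0$ by hypothesis, the event $\{\|\bm c\|_2\le\lambda_{\min}(\fim(\bm\theta))/\|\bm\rho\|_2\}$ then implies $\efimb(\bm\theta)\succeq0$, and it only remains to lower-bound the probability of this event.

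Finally I would estimate the second moment of $\bm c$ and apply Markov. Since $\bm y_1,\dots,\bm y_N$ are \iid with $\expect\bm t(\bm y)=\bm\eta$, \cref{thm:exp} gives $\expect[\bm c\bm c^\T]=\tfrac1N\cov(\bm t(\bm y))=\tfrac1N\fim(\bm h_L)$, hence $\expect\|\bm c\|_2^2=\tfrac1N\trace(\fim(\bm h_L))\le\tfrac{n_L}{N}\lambda_{\max}(\fim(\bm h_L))$. Markov's inequality applied to $\|\bm c\|_2^2$ at level $\lambda_{\min}^2(\fim(\bm\theta))/\|\bm\rho\|_2^2$ yields
\[
\prob\!\left(\|\bm c\|_2>\frac{\lambda_{\min}(\fim(\bm\theta))}{\|\bm\rho\|_2}\right)\le\frac{n_L\,\|\bm\rho\|_2^2\,\lambda_{\max}(\fim(\bm h_L))}{N\,\lambda_{\min}^2(\fim(\bm\theta))},
\]
and the complementary event forces $\efimb(\bm\theta)\succeq0$, giving the advertised probability.

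I do not expect a serious obstacle: once the decomposition is in place, the rest is bookkeeping. The only slightly delicate point is moving from scalar control of $\|\bm c\|_2$ to matrix control of $\|E\|_2$; the crude triangle-inequality-plus-Cauchy--Schwarz bound $\|E\|_2\le\|\bm c\|_2\|\bm\rho\|_2$ is exactly what produces the factor $n_L\|\bm\rho\|_2^2$ in the statement. A matrix concentration inequality (matrix Bernstein applied to $E=\sum_a c_a\,\partial^2\bm h_L^a/\partial\bm\theta\partial\bm\theta^\T$) could in principle tighten the $n_L$-dependence, but is not needed to obtain the stated bound.
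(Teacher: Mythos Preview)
Your proposal is correct and follows essentially the same route as the paper: the same decomposition $\efimb(\bm\theta)=\fim(\bm\theta)+E$, the same spectral bound $\|E\|_2\le\|\bm c\|_2\|\bm\rho\|_2$ via triangle inequality plus Cauchy--Schwarz, and the same second-moment tail bound on $\|\bm c\|_2$. The only cosmetic difference is that the paper routes the tail bound through the multivariate Chebyshev inequality of \citet{chen2007new} on the Mahalanobis quadratic form before passing to $\|\bm c\|_2$ via $\lambda_{\max}(\fim(\bm h_L))$, whereas you apply Markov directly to $\|\bm c\|_2^2$ using $\expect\|\bm c\|_2^2=\tfrac{1}{N}\trace(\fim(\bm h_L))\le\tfrac{n_L}{N}\lambda_{\max}(\fim(\bm h_L))$; both yield the identical constant.
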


The bound becomes uninformative as the output layer size \( n_{L} \) increases,
as the spectrum of the Hessian of \( \bm h_{L} \) scales up,
or as the spectrum of the FIM \( \fim(\bm{h}_L) \) enlarges.
On the other hand, as the minimal eigenvalue of the FIM \( \fim(\bm \theta) \)
increases, \cref{thm:est2_psd} can give meaningful lower bounds.
In particular, with sample rate \( \bigoh(N^{-1}) \), estimator \( \efimb(\bm \theta) \) will be a \psd matrix.
In practice for over-parametrized networks, $\lambda_{\min}({\fim(\bm \theta)})$ is close to
or equals 0 and \cref{thm:est2_psd} is not meaningful.
In any case, we need to consider the scale of the negative spectrum of $\efimb(\bm\theta)$.

\begin{theorem}\label{thm:worstspectrum}
\begin{equation*}
\lambda_{\min}\left( \efimb(\bm\theta) \right)
\ge
-
\rho(\partial^{2} \bm h^{a}_{L}(\bm x))
\left\vert\bm\eta_a - \frac{1}{N}\sum_{i=1}^N \bm{t}_a(\bm{y}_i) \right\vert.
\end{equation*}
\end{theorem}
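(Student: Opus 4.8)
The plan is to read off from \cref{eq:efimb} that $\efimb(\bm\theta)$ is the sum of a positive semidefinite generalized Gauss--Newton matrix and a signed linear combination of the $n_L$ Hessian slices $H^a \defeq \partial^2\bm h_L^a(\bm x)/\partial\bm\theta\partial\bm\theta^\T$, and then to control the latter with Weyl's subadditivity of the minimum eigenvalue. Writing $c_a \defeq \bm\eta_a - \tfrac1N\sum_{i=1}^N\bm t_a(\bm y_i)$, \cref{eq:efimb} reads
\[
\efimb(\bm\theta) \;=\; c_a H^a \;+\; \frac{\partial\bm h_L^a}{\partial\bm\theta}\,\fim_{ab}(\bm h_L)\,\frac{\partial\bm h_L^b}{\partial\bm\theta^\T}.
\]
The second term equals $J^\T\fim(\bm h_L)J$ with $J\defeq\partial\bm h_L/\partial\bm\theta^\T$, and $\fim(\bm h_L)=\cov(\bm t(\bm y))\succeq 0$ by \cref{thm:exp}, so this term is positive semidefinite and can be dropped from any lower bound on $\lambda_{\min}$.

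Next I would invoke the variational identity $\lambda_{\min}(M+M')\ge\lambda_{\min}(M)+\lambda_{\min}(M')$ for symmetric matrices, applying it once to separate off the Gauss--Newton term and then iteratively over the $n_L$ summands of $c_a H^a$, to obtain
\[
\lambda_{\min}\!\left(\efimb(\bm\theta)\right) \;\ge\; \sum_{a=1}^{n_L}\lambda_{\min}\!\left(c_a H^a\right).
\]
For fixed $a$ the eigenvalues of $c_a H^a$ are $c_a$ times those of the symmetric matrix $H^a$, so they all lie in $[-|c_a|\,\rho(H^a),\,|c_a|\,\rho(H^a)]$; in particular $\lambda_{\min}(c_a H^a)\ge -\rho(H^a)\,|c_a|$. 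Substituting, and observing that $\sum_a \rho(\partial^2\bm h_L^a(\bm x))\,|c_a|$ is precisely the Einstein-summed right-hand side of the claimed inequality, completes the argument.

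I do not expect a genuine obstacle; the only care needed is that the Gauss--Newton term is discarded using nothing beyond $\fim(\bm h_L)\succeq 0$, and that the bookkeeping in the repeated Weyl step is correct. The latter is also where the estimate is deliberately loose: it bounds each Hessian slice in isolation through its spectral radius, ignoring possible cancellation among slices or misalignment with the eigenvector realizing $\lambda_{\min}(c_a H^a)$. A sharper but less interpretable bound would replace $\sum_a\rho(H^a)|c_a|$ by $\rho\!\big(\sum_a c_aH^a\big)$; the stated form is preferable because it cleanly separates the network-curvature factors $\rho(H^a)$ from the Monte-Carlo error factors $|c_a|=|\bm\eta_a-\tfrac1N\sum_i\bm t_a(\bm y_i)|$, the latter shrinking at rate $\bigoh(N^{-1/2})$ so that the negative spectrum of $\efimb(\bm\theta)$ is itself of order $\bigoh(N^{-1/2})$.
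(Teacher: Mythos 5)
Your proposal is correct and follows essentially the same route as the paper's proof: both split \( \efimb(\bm\theta) \) into the \psd Gauss--Newton/FIM term plus the signed combination \( c_a\,\partial^2\bm h_L^a \), discard the \psd part, and bound each Hessian slice by \( \rho(\partial^2\bm h_L^a)\,\vert c_a\vert \). The only cosmetic difference is that you invoke Weyl's subadditivity of \( \lambda_{\min} \), whereas the paper argues directly with the Rayleigh quotient \( \bm v^\T \efimb(\bm\theta)\bm v \) over unit vectors and then reads off the eigenvalues from the spectral decomposition.
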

\Cref{thm:worstspectrum} guarantees that in the worst case, the scale of the negative spectrum
of $\efimb(\bm\theta)$ is controlled.
By \cref{thm:exp}, $\var( \bm\eta_a - \frac{1}{N}\sum_{i=1}^N \bm{t}_a(\bm{y}_i))=\frac{1}{N}\fim^{aa}(\bm h_{L})$.
Therefore, as $N$ increases or $\fim^{aa}(\bm h_{L})$ decreases,
the negative spectrum of $\efimb(\bm\theta)$ will shrink.
Further analysis on the spectrum of $\efima(\bm\theta)$ and $\efimb(\bm\theta)$
can utilize the geometric structure of the \psd manifold. This is left for future work.

\subsection{Convergence Rate}

The rate of convergence for each of the estimators is of particular interest
when considering their practical viability.  Through a generalized Chebyshev
inequality~\citep{chen2007new}, we can get a simple Frobenius norm convergence rate.

\begin{lemma}
    \label{thm:est_cr_chebyshev}
    Let \( 0 < \varepsilon < 1 \). Then
    \begin{align*}
        \left\Vert \efima(\bm \theta) - {\fim(\bm \theta)} \right\Vert_{F}
        &\leq
        \frac{1}{\sqrt{\varepsilon N}} \cdot
\sqrt{
            \sum^{\dim(\bm \theta)}_{i, j = 1}
        \var
        \left(  \frac{\partial\ell}{\partial\bm\theta_{i}} \frac{\partial\ell}{\partial\bm\theta_{j}} \right)
        } \\
        \intertext{holds with probability at least \( 1 - \varepsilon \);
        and}
    \left\Vert \efimb(\bm \theta) - {\fim(\bm \theta)} \right\Vert_{F}
        &\leq
        \frac{1}{\sqrt{\varepsilon N}} \cdot
\sqrt{
            \sum^{\dim({\bm \theta})}_{i,j = 1}
        \var
        \left(  -\frac{\partial^{2}\ell}{\partial\bm\theta_{i} \partial\bm\theta_{j}}  \right)
        }
    \end{align*}
    hold with probability at least \( 1 - \varepsilon \).
\end{lemma}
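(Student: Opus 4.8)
The plan is to apply a matrix-valued Chebyshev-type inequality to each estimator separately, treating $\efima(\bm\theta)$ and $\efimb(\bm\theta)$ as random elements of $\Re^{\dim(\bm\theta)\times\dim(\bm\theta)}$ (equivalently, vectors in $\Re^{\dim(\bm\theta)^2}$ under vectorisation). The key quantity is the \emph{squared} Frobenius deviation $\Vert\efima(\bm\theta)-\fim(\bm\theta)\Vert_F^2 = \sum_{i,j}\bigl(\efima^{ij}(\bm\theta)-\fim^{ij}(\bm\theta)\bigr)^2$. Since $\efima(\bm\theta)$ is unbiased for $\fim(\bm\theta)$ by \cref{prop:unbiased_consistent}, we have $\expect(\efima^{ij}(\bm\theta)) = \fim^{ij}(\bm\theta)$, and therefore
\[
\expect\bigl\Vert\efima(\bm\theta)-\fim(\bm\theta)\bigr\Vert_F^2
= \sum_{i,j=1}^{\dim(\bm\theta)} \var\bigl(\efima^{ij}(\bm\theta)\bigr)
= \frac{1}{N}\sum_{i,j=1}^{\dim(\bm\theta)} \var\!\left(\frac{\partial\ell}{\partial\bm\theta_i}\frac{\partial\ell}{\partial\bm\theta_j}\right),
\]
where the last equality uses \cref{eq:varfims} (or directly the fact that the samples are \iid, so the variance of the average is $1/N$ times the single-sample variance). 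The analogous identity holds for $\efimb(\bm\theta)$ with $-\partial^2\ell/\partial\bm\theta_i\partial\bm\theta_j$ in place of $\partial\ell/\partial\bm\theta_i\cdot\partial\ell/\partial\bm\theta_j$, again using unbiasedness from \cref{prop:unbiased_consistent}.

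Next I would invoke Markov's inequality applied to the nonnegative random variable $\Vert\efima(\bm\theta)-\fim(\bm\theta)\Vert_F^2$: for any $t>0$,
\[
\prob\bigl(\Vert\efima(\bm\theta)-\fim(\bm\theta)\Vert_F^2 \ge t\bigr)
\le \frac{1}{t}\,\expect\bigl\Vert\efima(\bm\theta)-\fim(\bm\theta)\Vert_F^2
= \frac{1}{tN}\sum_{i,j}\var\!\left(\frac{\partial\ell}{\partial\bm\theta_i}\frac{\partial\ell}{\partial\bm\theta_j}\right).
\]
Choosing $t = \frac{1}{\varepsilon N}\sum_{i,j}\var(\cdot)$ makes the right-hand side equal to $\varepsilon$, so with probability at least $1-\varepsilon$ we have $\Vert\efima(\bm\theta)-\fim(\bm\theta)\Vert_F^2 < t$, i.e.\ $\Vert\efima(\bm\theta)-\fim(\bm\theta)\Vert_F \le \frac{1}{\sqrt{\varepsilon N}}\sqrt{\sum_{i,j}\var(\partial\ell/\partial\bm\theta_i\cdot\partial\ell/\partial\bm\theta_j)}$, which is exactly the claimed bound. (This is precisely the ``generalized Chebyshev inequality'' of \citet{chen2007new} specialised to the Euclidean norm on the vectorised matrix; one could also cite it directly rather than re-deriving from Markov.) The same argument applied verbatim to $\efimb(\bm\theta)$ yields the second inequality, and since the two statements concern the two estimators independently they each hold with probability at least $1-\varepsilon$ as stated.

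I do not anticipate a genuine obstacle here — the result is essentially a packaging of Markov's inequality with the variance decomposition already established. The only point requiring a little care is the bookkeeping between the element-wise variance $\var(\efima^{ij}(\bm\theta))$ and the single-sample variance $\var(\partial\ell/\partial\bm\theta_i\cdot\partial\ell/\partial\bm\theta_j)$, i.e.\ making sure the factor $1/N$ is tracked correctly and that the Frobenius norm squared really is the sum of the per-entry variances (which relies on unbiasedness, so that there is no cross bias-variance term). One should also state explicitly that $\varepsilon<1$ is only needed so that the bound is nonvacuous (probability at least $1-\varepsilon>0$); the inequality chain itself is valid for any $\varepsilon>0$. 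No concentration machinery beyond Markov is required, and in particular no assumption on higher moments is needed because we only use the second moment.
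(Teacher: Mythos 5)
Your proposal is correct and follows essentially the same route as the paper: the paper's cited ``generalized Chebyshev inequality'' of \citet{chen2007new} is exactly Markov's inequality applied to the squared (vectorised) deviation $\Vert\hat{\mathcal{I}}_z(\bm\theta)-\fim(\bm\theta)\Vert_F^2$, combined with unbiasedness and the $1/N$ variance decomposition of \cref{eq:varfims}, which is precisely your argument. No gaps; your bookkeeping of the $1/N$ factor and the reduction of the expected squared Frobenius deviation to the sum of element-wise variances matches the paper's proof.
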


Each of these convergence rates only depends on the element-wise variance
of the estimator terms in \cref{eq:varcov}.
Moreover, each of the estimators has a convergence rate of \( \bigoh(N^{-1/2}) \).
The rate's constants are determined by the variance of the estimators given by \cref{thm:varefima,thm:varefimb},
which are influenced by the derivatives of the neural network
and the moments of the output exponential family.

\section{Effect of Neural Network Derivatives}\label{sec:dl_structure}

The derivatives of the deep learning network can affect the estimation variance
of the FIM.  By \cref{thm:varefima}, the variance of the first estimator \(
\efima(\bm \theta) \) scales with the Jacobian of the neural network mapping
\(\bm\theta\to\bm h_{L}(\bm x)\). By \cref{thm:varefimb}, the variance of \( \efimb(\bm
\theta) \) scales with the Hessian of \(\bm\theta\to\bm h_{L}(\bm x)\).
The larger the scale of the Jacobian or the Hessian, the larger the estimation variance.
In this section, we examine these derivatives in more detail.

We give the closed form gradient of the log-likelihood \( \ell \)
and the last layer's output \( \bm h_{L} \) \wrt the neural network parameters.
\begin{lemma}
    \label{lem:dl_grad}
    \begin{equation*}
    \frac{\partial\ell}{\partial\bm{W}_{l}}
    =
    \bm{D}_l
    \frac{\partial\ell}{\partial\bm{h}_{l+1}} \bar{\bm{h}}_{l}^{\T},
    \quad
    \frac{\partial\ell}{\partial\bm{h}_{l}}
    =
    {\bm B}_{l}^\T \left( \bm{t}(\bm{y}) - \bm{\eta}(\bm{h}_L) \right),
    \quad
    \frac{\partial \bm h_{L}^{a}}{\partial \bm W_{l}}
    =
    \bm{D}_l \bm{B}_{l+1}^\T \bm{e}_{a} \bar{\bm{h}}_{l}^{\T},
    \end{equation*}
    where \( \bm e_{a} \) is the \( a^{\textrm{th}} \) standard basis vector,
    $\bm{B}_l$ and $\bm{D}_l$ are recursively defined by
    \begin{align*}
        &\bm{B}_{L}  = \bm{I},
         \quad
         \bm{B}_{l}  = \bm{B}_{l+1} \bm{D}_{l} \bm{W}^{-}_{l}, \nonumber\\
        &\bm{D}_{L-1} = \bm{I},
         \quad
         \bm{D}_{l}   = \diag\left( \sigma^{\prime}(\bm{W}_{l} \bar{\bm{h}}_{l}) \right),
    \end{align*}
    $\bm{I}$ is the identity matrix,
    and $\diag(\cdot)$ means a diagonal matrix with given diagonal entries.
\end{lemma}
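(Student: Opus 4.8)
The plan is to run the standard backpropagation recursion while keeping careful track of the bias augmentation $\bar{\bm h}_l=(\bm h_l^\T,1)^\T$. First I would record the two elementary Jacobians implied by \cref{eq:exp}. For a hidden layer $l<L-1$, since $\bm h_{l+1}=\sigma(\bm W_l\bar{\bm h}_l)$ and the last column of $\bm W_l$ multiplies only the constant $1$ in $\bar{\bm h}_l$, differentiating the element-wise activation gives $\partial\bm h_{l+1}/\partial\bm h_l=\diag(\sigma'(\bm W_l\bar{\bm h}_l))\,\bm W_l^{-}=\bm D_l\bm W_l^{-}$; for the linear top layer $\bm h_L=\bm W_{L-1}\bar{\bm h}_{L-1}$ the same formula holds with the convention $\bm D_{L-1}=\bm I$. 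For the direct dependence on the weights I would compute the entry-wise derivative: from $\bm h_{l+1}^r=\sigma\big((\bm W_l\bar{\bm h}_l)_r\big)$ (or the identity at $l=L-1$), one gets $\partial\bm h_{l+1}^r/\partial(\bm W_l)_{pq}=(\bm D_l)_{rr}\,\delta_{rp}\,(\bar{\bm h}_l)_q$, i.e.\ $\partial\bm h_{l+1}/\partial(\bm W_l)_{pq}=(\bar{\bm h}_l)_q\,(\bm D_l)_{pp}\,\bm e_p$.

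Next I would prove by backward induction on $l$ that $\bm B_l=\partial\bm h_L/\partial\bm h_l$. The base case $l=L$ is $\bm B_L=\bm I$; the inductive step composes the accumulated top-layer Jacobian with the local one, $\partial\bm h_L/\partial\bm h_l=(\partial\bm h_L/\partial\bm h_{l+1})(\partial\bm h_{l+1}/\partial\bm h_l)=\bm B_{l+1}\bm D_l\bm W_l^{-}=\bm B_l$, which is exactly the stated recursion. Substituting the score $\partial\ell/\partial\bm h_L=\bm t(\bm y)-\bm\eta(\bm h_L)$ from \cref{eq:gradl} and applying the chain rule through the map $\bm h_l\mapsto\bm h_L$ then yields $\partial\ell/\partial\bm h_l=(\partial\bm h_L/\partial\bm h_l)^\T(\bm t(\bm y)-\bm\eta)=\bm B_l^\T(\bm t(\bm y)-\bm\eta)$, the middle identity; here I use that $\bm D_l$ is symmetric so that $\bm B_l^\T=(\bm W_l^{-})^\T\bm D_l\bm B_{l+1}^\T$.

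Finally I would combine the weight-derivative of the first step with these backward quantities. Contracting $\partial\ell/\partial(\bm W_l)_{pq}=(\partial\ell/\partial\bm h_{l+1})^\T\,\partial\bm h_{l+1}/\partial(\bm W_l)_{pq}=(\bar{\bm h}_l)_q\,\big(\bm D_l\,\partial\ell/\partial\bm h_{l+1}\big)_p$ and reassembling over $p,q$ gives $\partial\ell/\partial\bm W_l=\bm D_l\,(\partial\ell/\partial\bm h_{l+1})\,\bar{\bm h}_l^\T$; plugging the middle identity at level $l+1$ into this produces the closed form with $\bm B_{l+1}^\T(\bm t(\bm y)-\bm\eta)$. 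The analogous contraction, using that the $p$-th component of the row vector $\partial\bm h_L^a/\partial\bm h_{l+1}=\bm e_a^\T\bm B_{l+1}$ equals $(\bm B_{l+1}^\T\bm e_a)_p$, gives $\partial\bm h_L^a/\partial(\bm W_l)_{pq}=(\bar{\bm h}_l)_q\,(\bm D_l\bm B_{l+1}^\T\bm e_a)_p$, i.e.\ $\partial\bm h_L^a/\partial\bm W_l=\bm D_l\bm B_{l+1}^\T\bm e_a\,\bar{\bm h}_l^\T$.

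I expect the only real difficulty to be bookkeeping rather than mathematics: keeping the bias column straight so that $\bm W_l^{-}$ and $\bar{\bm h}_l$ appear in the right slots, handling the two boundary conventions $\bm D_{L-1}=\bm I$ and $\bm B_L=\bm I$ consistently with the linear output layer, and being careful with transposes when differentiating a vector-valued map with respect to a matrix. A clean way to avoid sign/transpose slips is to carry out every computation at the level of scalar entries $(\bm W_l)_{pq}$ as above and only vectorize into matrix form at the very end.
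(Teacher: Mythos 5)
Your proposal is correct and follows essentially the same route as the paper: a backward induction establishing that $\bm{B}_l$ is the Jacobian of $\bm{h}_l\to\bm{h}_L$, the score $\bm{t}(\bm{y})-\bm\eta(\bm{h}_L)$ pulled back by the chain rule, and then the weight derivatives obtained by contracting the local layer Jacobian with the backward quantities. The only difference is stylistic — you carry out the weight-derivative bookkeeping entry-wise in the indices $(\bm W_l)_{pq}$, whereas the paper uses matrix differentials and trace identities — and both handle the bias column and the boundary conventions $\bm B_L=\bm I$, $\bm D_{L-1}=\bm I$ identically.
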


By \cref{lem:dl_grad}, we can estimate the FIM \wrt the hidden representations \( \bm h_{l} \) through
\begin{equation}
\efima(\bm{h}_l)
=
\frac{1}{N}\sum_{i=1}^N
\frac{\partial\ell_i}{\partial\bm{h}_{l}}
\frac{\partial\ell_i}{\partial\bm{h}_{l}^\T}
=
\bm{B}_l^\T
\left(
\frac{1}{N}\sum_{i=1}^N
\left(\bm{t}(\bm{y}_i) - \bm{\eta}(\bm{h}_L) \right)
\left(\bm{t}(\bm{y}_i) - \bm{\eta}(\bm{h}_L) \right)^\T
\right)
\bm{B}_l.
\end{equation}
As $\bm{B}_l$ is recursively evaluated from the last layer to previous layers,
the FIM can also be recursively estimated based on $\efima(\bm\theta)$. It is similar
to back-propagation, except that the FIMs are back-propagated instead of
gradients of the network. This is similar to the backpropagated metric~\citep{ollivier}.

To investigate how the first estimator \( \efima(\bm \theta) \) is affected by
the loss landscape, we bound the Frobenius norm of the parameter-output Jacobian
$\partial\bm{h}_L/\partial\bm\theta$.
\begin{lemma}
    \label{lem:nn_grad_norm_bounded}
    If the activation function has bounded gradient and $\forall{z}\in\Re$,
    \( \vert \sigma^{\prime}(z) \vert \leq 1 \), then
    \begin{equation}
        \label{eq:nn_grad_norm_bounded}
        \left\Vert
            \frac{\partial \bm{h}_L}{\partial \bm{W}_l}
        \right\Vert_{F}
        =
        \Vert \bm{B}_{l+1} \bm{D}_l \Vert_F \cdot \Vert \bar{\bm{h}}_l \Vert_2
        \le
        \prod_{i=l+1}^{L-1}
        \Vert \bm{W}^{-}_{i} \Vert_F
        \cdot
        \Vert \bar{\bm{h}}_l \Vert_2,
    \end{equation}
\end{lemma}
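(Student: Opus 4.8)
The plan is to prove the equality directly from the closed-form gradient in \cref{lem:dl_grad}, and then obtain the bound by unrolling the recursion for $\bm{B}_{l+1}$ and chaining elementary matrix-norm inequalities.

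\textbf{Step 1: the equality.} By \cref{lem:dl_grad}, for each output coordinate $a$ the matrix $\partial\bm{h}_L^a/\partial\bm{W}_l = \bm{D}_l\bm{B}_{l+1}^\T\bm{e}_a\bar{\bm{h}}_l^\T$ is an outer product $\bm{u}_a\bar{\bm{h}}_l^\T$ with $\bm{u}_a\defeq\bm{D}_l\bm{B}_{l+1}^\T\bm{e}_a$, so $\Vert\partial\bm{h}_L^a/\partial\bm{W}_l\Vert_F = \Vert\bm{u}_a\Vert_2\Vert\bar{\bm{h}}_l\Vert_2$. Treating $\partial\bm{h}_L/\partial\bm{W}_l$ as a $3$-tensor and summing the squared norms over $a$ gives $\Vert\partial\bm{h}_L/\partial\bm{W}_l\Vert_F^2 = \Vert\bar{\bm{h}}_l\Vert_2^2\sum_a\Vert\bm{u}_a\Vert_2^2$. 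Since $\bm{D}_l$ is diagonal hence symmetric, $\bm{u}_a = (\bm{B}_{l+1}\bm{D}_l)^\T\bm{e}_a$ is the $a$-th row of $\bm{B}_{l+1}\bm{D}_l$, so $\sum_a\Vert\bm{u}_a\Vert_2^2 = \Vert\bm{B}_{l+1}\bm{D}_l\Vert_F^2$; taking square roots yields the first equality.

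\textbf{Step 2: the bound.} Unrolling $\bm{B}_k = \bm{B}_{k+1}\bm{D}_k\bm{W}_k^-$ down to the base case $\bm{B}_L = \bm{I}$, and using $\bm{D}_{L-1} = \bm{I}$, gives $\bm{B}_{l+1}\bm{D}_l = \bm{W}_{L-1}^-\bm{D}_{L-2}\bm{W}_{L-2}^-\cdots\bm{W}_{l+1}^-\bm{D}_l$. I then peel the leftmost factor with $\Vert AB\Vert_F\le\Vert A\Vert_F\Vert B\Vert_2$ and handle the remaining product with repeated submultiplicativity $\Vert AB\Vert_2\le\Vert A\Vert_2\Vert B\Vert_2$ of the spectral norm. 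Each $\bm{D}_k = \diag(\sigma'(\bm{W}_k\bar{\bm{h}}_k))$ has $\Vert\bm{D}_k\Vert_2 = \max_i|\sigma'(\cdot)|\le 1$ by hypothesis, and $\Vert\bm{W}_k^-\Vert_2\le\Vert\bm{W}_k^-\Vert_F$; combining these the activation-Jacobian factors drop out and $\Vert\bm{B}_{l+1}\bm{D}_l\Vert_F\le\prod_{i=l+1}^{L-1}\Vert\bm{W}_i^-\Vert_F$. Multiplying through by $\Vert\bar{\bm{h}}_l\Vert_2$ gives \cref{eq:nn_grad_norm_bounded}.

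The calculation is essentially bookkeeping; the one place requiring care is choosing the split between Frobenius and spectral norms so that exactly the weight matrices keep their Frobenius norms while the activation Jacobians contribute factors at most $1$, together with the boundary conventions $\bm{B}_L = \bm{I}$ and $\bm{D}_{L-1} = \bm{I}$ (in the edge case $l = L-1$ the right-hand product is empty, so the statement is meant for hidden layers $l\le L-2$; otherwise an extra $\sqrt{n_L}$ appears). No idea beyond submultiplicativity of the Frobenius and spectral norms is needed.
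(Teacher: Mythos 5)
Your proof is correct and follows the same skeleton as the paper's: the equality via the rank-one structure of each $\partial\bm{h}_L^a/\partial\bm{W}_l$ from \cref{lem:dl_grad}, then a bound obtained by unrolling the recursion for $\bm{B}_{l+1}$ and discarding the activation Jacobians using $\vert\sigma'\vert\le 1$. The only real difference is the norm bookkeeping in the second step: the paper stays entirely in Frobenius norm, applying $\Vert \bm{B}_{k+1}\bm{D}_k\bm{W}_k^-\Vert_F\le\Vert\bm{B}_{k+1}\Vert_F\Vert\bm{W}_k^-\Vert_F$ recursively (with the base case $\bm{B}_{L-1}=\bm{W}_{L-1}^-$ absorbing the identity factors), whereas you peel one Frobenius factor and control the remaining product in spectral norm via $\Vert\bm{D}_k\Vert_2\le1$ and $\Vert\bm{W}_k^-\Vert_2\le\Vert\bm{W}_k^-\Vert_F$ — this yields the intermediate, slightly sharper bound $\Vert\bm{W}_{L-1}^-\Vert_F\prod_{i=l+1}^{L-2}\Vert\bm{W}_i^-\Vert_2\,\Vert\bar{\bm{h}}_l\Vert_2$ before relaxing to the stated one, at no extra cost. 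Your remark on the edge case is also well taken: for $l=L-1$ the displayed inequality picks up a $\sqrt{n_L}$ factor (the right-hand product being empty while $\Vert\bm{B}_L\bm{D}_{L-1}\Vert_F=\sqrt{n_L}$), which is consistent with the paper's proof, whose recursion likewise only runs for $l\le L-2$.
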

where ${\partial \bm{h}_L}/{\partial \bm{W}_l} =
\left[
    {\partial \bm{h}_L^1}/{\partial \bm{W}_l},
    \cdots,
    {\partial \bm{h}_L^{n_L}}/{\partial \bm{W}_l}
    \right]$ is the derivative of a vector \wrt a matrix that is a 3D tensor.

Given \cref{lem:nn_grad_norm_bounded}, we see that the gradient $\partial
\bm{h}_L/\partial \bm{W}_l$ scales with both the neural network weights
$\bm{W}_i$ and the gradient of the activation function $\bm{D}_l$.
Common activation functions have both bounded outputs and 1st-order derivatives;
or at least are locally Lipschitz, \ie, sigmoid and ReLU activation functions.
During training, regularizing the scale of the neural network weights is a
sufficient condition for bounding the variance of \( \efima(\bm \theta) \).

An alternative bound can be established which depends on the maximum singular
values of the weight matrices.
\begin{lemma}
    \label{lem:nn_grad_norm_bounded2}
    Suppose that the activation function has bounded gradient
    $\forall{z}\in\Re$, $\vert \sigma^{\prime}(z) \vert \leq 1$. Then
\begin{equation}
        \label{eq:nn_grad_norm_bounded2}
        \left \Vert
        \frac{\partial \bm h_L}{\partial \bm W_{l}}
        \right \Vert_{2_\sigma}
        \leq
        \left( \prod_{i=l+1}^{L-1} s_{\max}(\bm W_{i}^-) \right)
        \cdot
        \Vert \bar{\bm{h}}_{l} \Vert_{2},
    \end{equation}
    where $s_{\max}(\cdot)$ denotes the maximum singular value and \( \Vert
    \mathcal{T} \Vert_{2_\sigma} \) denotes the tensor spectral norm for a 3D
    tensor $\mathcal{T}$, defined by
    \begin{equation*}
        \Vert \mathcal{T} \Vert_{2_{\sigma}} =
        \max
        \left\{ \langle
            \mathcal{T}, \bm\alpha \otimes \bm\beta \otimes \bm\gamma
            \rangle :
        \Vert \bm\alpha \Vert_2 = \Vert \bm\beta \Vert_2
        =\Vert \bm\gamma \Vert_2 = 1 \right\}.
    \end{equation*}
\end{lemma}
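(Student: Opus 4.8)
The plan is to reuse the factorization $\partial \bm h_L^a / \partial \bm W_l = \bm D_l \bm B_{l+1}^\T \bm e_a \bar{\bm h}_l^\T$ from \cref{lem:dl_grad} and then estimate the tensor spectral norm directly from its variational definition. Writing $\mathcal{T} = \partial \bm h_L / \partial \bm W_l$, the $(a,i,j)$ entry is the $(i,j)$ entry of the matrix $\bm D_l \bm B_{l+1}^\T \bm e_a \bar{\bm h}_l^\T$, so for unit vectors $\bm\alpha \in \Re^{n_L}$ (indexing the output coordinate $a$) and $\bm\beta, \bm\gamma$ (indexing the two matrix axes of $\bm W_l$) we get
\begin{equation*}
\langle \mathcal{T}, \bm\alpha \otimes \bm\beta \otimes \bm\gamma \rangle
= \sum_a \alpha_a \, \bm\beta^\T \bm D_l \bm B_{l+1}^\T \bm e_a \bar{\bm h}_l^\T \bm\gamma
= \left( \bm\beta^\T \bm D_l \bm B_{l+1}^\T \bm\alpha \right) \left( \bar{\bm h}_l^\T \bm\gamma \right).
\end{equation*}
By Cauchy–Schwarz, $|\bar{\bm h}_l^\T \bm\gamma| \le \Vert \bar{\bm h}_l \Vert_2$ with equality attainable, and $|\bm\beta^\T \bm D_l \bm B_{l+1}^\T \bm\alpha| \le s_{\max}(\bm D_l \bm B_{l+1}^\T) = s_{\max}(\bm B_{l+1} \bm D_l)$ over unit $\bm\alpha, \bm\beta$. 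Hence $\Vert \mathcal{T} \Vert_{2_\sigma} \le s_{\max}(\bm B_{l+1}\bm D_l) \cdot \Vert \bar{\bm h}_l \Vert_2$.

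Next I would bound $s_{\max}(\bm B_{l+1} \bm D_l)$ via submultiplicativity of the spectral norm together with the recursion $\bm B_{l+1} = \bm B_{l+2} \bm D_{l+1} \bm W^-_{l+1}$, $\bm B_L = \bm I$, $\bm D_{L-1} = \bm I$. Unrolling gives $\bm B_{l+1} \bm D_l = \bm D_l \prod_{i=l+1}^{L-1}(\bm D_i \bm W^-_i)$ up to the appropriate ordering of factors, so $s_{\max}(\bm B_{l+1}\bm D_l) \le \prod_i s_{\max}(\bm D_i) \prod_{i=l+1}^{L-1} s_{\max}(\bm W_i^-)$. Each $\bm D_i = \diag(\sigma'(\bm W_i \bar{\bm h}_i))$ is diagonal with entries bounded by $1$ in absolute value under the hypothesis $|\sigma'(z)| \le 1$, so $s_{\max}(\bm D_i) \le 1$, and these factors drop out, leaving the claimed bound $\prod_{i=l+1}^{L-1} s_{\max}(\bm W_i^-) \cdot \Vert \bar{\bm h}_l \Vert_2$.

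The only subtlety — and the step I expect to require the most care — is the bookkeeping of which matrix indices of the 3D tensor $\mathcal{T}$ are contracted against $\bm\alpha$ versus $\bm\beta$ versus $\bm\gamma$, and correspondingly whether $s_{\max}$ of $\bm B_{l+1}\bm D_l$ or of its transpose appears; since $s_{\max}(\bm M) = s_{\max}(\bm M^\T)$ this ultimately does not matter, but the derivation should state the index convention explicitly to make the two Cauchy–Schwarz applications unambiguous. Everything else is routine: the factorization is supplied by \cref{lem:dl_grad}, and submultiplicativity of $s_{\max}$ plus the diagonal bound on $\bm D_i$ finish the argument.
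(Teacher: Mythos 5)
Your proposal is correct and follows essentially the same route as the paper's proof: both evaluate the tensor spectral norm from its variational definition using the rank-one factorization of $\partial \bm h_L^a/\partial \bm W_l$ from \cref{lem:dl_grad}, reduce the bound to $s_{\max}(\bm B_{l+1}\bm D_l)\cdot\Vert\bar{\bm h}_l\Vert_2$, and then exploit the recursion $\bm B_l = \bm B_{l+1}\bm D_l\bm W_l^-$ with $\vert\sigma'\vert\le 1$ to extract the product of $s_{\max}(\bm W_i^-)$ factors. The only difference is presentational: the paper peels off one weight matrix per recursion step inside the $\max_{\bm\alpha}\Vert\bm B_{l+1}\bm D_l\bm\alpha\Vert_2$ expression, whereas you fully unroll and invoke submultiplicativity of the spectral norm in one shot, which is equivalent.
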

Therefore, regularizing $s_{\max}(\bm W_{i}^-)$, or the spectral norm of the
weight matrices, also helps to improve the estimation accuracy of the FIM.

We further reveal the relationship between the loss landscape and the
FIM estimators.
For a given target $\tilde{\bm{y}}$, the log-likelihood is denoted as
$\tilde{l}\defeq\log{p}(\tilde{\bm{y}}\,\vert\,\bm{x},\bm\theta)$.
Furthermore, let us define \( {\Delta}\efima(\bm \theta) \defeq ({\partial\tilde{l}}/{\partial\bm\theta}) ({\partial\tilde{l}}/{\partial\bm\theta^\T}) - \efima(\bm\theta) \) and \( {\Delta}\efimb(\bm \theta) \defeq -{\partial^2\tilde{l}}/ {\partial\bm\theta\partial\bm\theta^\T} - \efimb(\bm\theta) \).
By \cref{eq:gradl,eq:hessianl},
\begin{align*}
{\Delta}\efima(\bm \theta)
&=
    \frac{\partial\bm{h}_L^a}{\partial\bm\theta}
    \left[
    (\bm{t}_a(\tilde{\bm{y}}) - \bm\eta_a)
    (\bm{t}_b(\tilde{\bm{y}}) - \bm\eta_b)
    -
    \frac{1}{N} \sum_{i=1}^{N}
    (\bm{t}_a(\bm{y}_i) - \bm\eta_a)
    (\bm{t}_b(\bm{y}_i) - \bm\eta_b)
    \right]
\frac{\partial\bm{h}_L^b}{\partial\bm\theta^\T}, \nonumber\\
{\Delta}\efimb(\bm \theta)
&=
\left[
    \frac{1}{N}\sum_{i=1}^N\bm{t}_a(\bm{y}_i) - \bm{t}_a( \tilde{\bm{y}} )
\right]
\frac{\partial^2\bm{h}_L^a}{\partial\bm\theta\partial\bm\theta^\T}.
\end{align*}
Hence, the difference between $\efima(\bm\theta)$ (resp. $\efimb(\bm\theta)$)
and the squared gradient (resp. Hessian) of the loss $-\tilde{\ell}$ depends on
how $\bm{y}_i$ differs from $\tilde{\bm{y}}$. If the network $\bm\theta$ is
trained, then the random samples
$\bm{y}_i\sim{}p(\bm{y}\,\vert\,\bm{x},\bm\theta)$ are close to the given target
$\tilde{\bm{y}}$. In this case, $\efima(\bm\theta)$ corresponds to the squared
gradient, and $\efimb(\bm\theta)$ corresponds to the Hessian. This is not true
for untrained neural networks with random weights.

\section{Related Work}\label{sec:related}

The two estimators $\efima(\bm\theta)$ and $\efimb(\bm\theta)$ are not new as one usually utilizes one of them to compute the FIM.  \citet{spall}
analyzed their accuracy for univariate symmetric density functions based on the
central limit theorem.  Fisher information estimation is also examined in latent
variable models~\citep{delattre2019estimating}. Under the same topic, our work analyzes the factors affecting the variance of $\efima(\bm\theta)$
and $\efimb(\bm\theta)$ for deep neural networks.

A large body of work tries to approximate the FIM or define similar curvature
tensors for performing natural gradient
descent~\citep{aIGI,martens,martens2015optimizing,adam,ollivier,rfim}.  If the
loss is an empirical expectation of $-\log{p}(\bm{y}\,\vert\,\bm{x},\bm\theta)$,
its Hessian \wrt $\bm{h}_L$ is
exactly $\fim(\bm{h}_L)$. Then, the FIM in \cref{eq:fimexp} is in the form of a
Generalized Gauss-Newton matrix (GNN)~\citep{schraudolph2002fast,martens}.
\citet{ollivier} provided algorithm procedures to compute the unit-wise FIM and
discussed Monte Carlo natural gradient. The FIM can be computed
locally~\citep{rfim,ay} based on a joint distribution representation of the
neural network. Efficient computational methods are developed to evaluate the
FIM inverse~\citep{park}.

The estimator $\efima(\bm\theta)$ is \emph{not} the ``empirical Fisher''~(see \eg
~\cite[Section 11]{martens}) as $\bm{y}_i$ is randomly sampled from
$p(\bm{y}\,\vert\,\bm{x},\bm\theta)$, making $\efima(\bm\theta)$ an unbiased
estimator. The difference between the empirical Fisher and the FIM is
clarified~\citep{kunstner2020limitations}.  Similarly, the estimator
$\efimb(\bm\theta)$ is \emph{not} the Hessian of the loss, as $\bm{y}_i$ is randomly
sampled rather than fixed to the given target.

Recently, the structure of the FIM (or its partial approximations) are examined
in deep learning.
The FIM of randomized networks is analyzed~\citep{amari2019fisher}, where the weights of the neural network are assumed to be random. Often the analysis of randomized networks uses spectral analysis and random matrix theory~\citep{pennington2018spectrum}.
An insight from this body of work is that most of the eigenvalues of
the FIM are close to 0; while the high end of spectrum has large
values~\citep{karakida2019universal}. In this paper, the estimators
$\efima(\bm\theta)$ and $\efimb(\bm\theta)$ are random matrices due to the
sampling of $\bm{y}_i\sim{}p(\bm{y}\,\vert\,\bm{x})$ (the weights are considered
fixed).

In information geometry~\citep{aIGI,elementig}, the FIM serves as a Riemannian
metric in the space of probability distributions.  The FIM is a covariant tensor
and is invariant to diffeomorphism on the sample space~\citep{elementig}.
Higher order tensors are used to describe the intrinsic structure in this space.
For example, the Riemannian curvature is a 4D tensor, while the Ricci curvature
is 2D. The third cumulants of the sufficient statistics give an affine
connection (belonging to the $\alpha$-connections or the Amari-\v{C}ensov
tensor) of the exponential family. The FIM is generalized to a one-parameter
family~\citep{alphaFIM}.

In statistics, our estimator $\efimb(\bm\theta)$ is \citet{efron}'s ``observed
Fisher information'', which is usually evaluated at the maximum likelihood
estimation $\hat{\bm\theta}$.  Higher order moments of the maximum likelihood
estimator (MLE) were discussed (see \eg~\citet{gamma}). These moments are
associated with parameter estimators and differ from the concept of the FIM
estimators. Similar concepts are examined in higher-order asymptotic
theory~\cite[Chapter 7]{aIGI}.

\section{Conclusion}\label{sec:con}

The FIM is a covariant \psd tensor revealing the intrinsic geometric structure of the parameter manifold.
It yields useful practical methods such as the natural gradient.
In practice, the true FIM $\fim(\bm\theta)$ is usually expensive or impossible to obtain.
Estimators of the FIM based on empirical samples is used in the deep learning practice.
We analyzed two different estimators $\efima(\bm\theta)$ and $\efimb(\bm\theta)$
of the FIM of a deep neural network.
These estimators are convenient to compute using auto-differentiation frameworks but
randomly deviates from $\fim(\bm\theta)$.
Our central results, \cref{thm:varefima,thm:varefimb},
present the variance of $\efima(\bm\theta)$ and $\efimb(\bm\theta)$ in closed
form, which is further extended to upper bounds in simpler forms.
Two factors affecting the estimation variance are
\ding{192} the derivatives of neural network output $\bm{h}_L$ \wrt the weight
parameters $\bm\theta$; and
\ding{193} the property of $\bm{h}_L$ as an exponential family distribution.
A large scale of the 1st- and/or 2nd-order derivatives leads to a large variance
when estimating the FIM. Our analytical results can be useful to measure the
quality of the estimated FIM and could lead to variance reduction techniques.

\section*{Acknowledgements}
    We thank the anonymous NeurIPS reviewers for their constructive comments.
    We thank Frank Nielsen for the insightful feedback.
    We thank James C. Spall for pointing us to related work.

\nocite{weightnorm,chen2020tensor,lim2005singular}

\newpage
\appendix

\setcounter{equation}{0}
\setcounter{theorem}{0}

\renewcommand{\theequation}{A.\arabic{equation}}
\renewcommand{\thefigure}{A.\arabic{figure}}
\renewcommand{\thetable}{A.\arabic{table}}
\renewcommand{\thetheorem}{A.\arabic{theorem}}

\addcontentsline{toc}{section}{Appendix} \part{Appendix}

This appendix contains a proofs of the results in the main text and further analysis on the two FIM estimators \( \efima(\bm \theta) \) and \( \efimb(\bm \theta) \). In particular, \cref{sup:changecoordinates} presents an analysis of how the FIM estimators and their covariance tensors change under reparametrization. \Cref{sup:elementwise} presents element-wise bound alternatives to those presented in \cref{subsec:variance_bounds}. \Cref{sup:alternativenorm} explores various results using alternative norms to the Frobenius norm results of the main text. \Cref{sup:combination} presents an analysis on taking a linear combination of the two FIM estimators. \Cref{sup:experimental} presents a numerical experiments of the FIM estimators on the MNIST dataset. \parttoc

\section{List of Symbols}

\begin{table}[H]
    \caption{Table of symbols.}
    \begin{tabularx}{\textwidth}{LND}
        \toprule
        \bf Symbol & \bf Meaning & \bf Defined \\
        \midrule
        \( \fim(\bm \theta) \) & Fisher information matrix (FIM) & \cref{eq:fim} \\
        \( \efima(\bm \theta) \) & An estimator of the FIM & \cref{eq:estimators} \\
        \( \efimb(\bm \theta) \) & Another estimator of the FIM & \\
        \( \var(\cdot) \) & Element-wise variance of input; & \\
                          & output is the same dimension as the input dimension & \cref{eq:varcov} \\
        \( \cov(\cdot) \) & Pair-wise covariance of input; \\
                          &output is the dimension of an outer product on the input & \cref{thm:exp} \\
        \( \bm h \) & Natural parameter of exponential family & \cref{tab:exp_fam} \\
        \( \bm h_{l} \) & Hidden layer output in our neural network model & \cref{eq:exp} \\
        \( \bm h_{L} \) & Last layer's output in our neural network model & \\
                        & and the natural parameter of the exponential family & \cref{eq:exp} \\
        \( n_{l} \) & Size of layer \( l \) & \cref{eq:exp} \\
        \( F(\bm h_{L}) \) & Log-partition function of exponential family & \cref{thm:exp} \\
        \( \bm \eta = \bm \eta(\bm h_{L}) \) & Dual parameterization of exponential family & \cref{thm:exp} \\
        \( \fim(\bm h_{L}) = \var(\bm t) \) & Covariance matrix of \( \bm t \) & \cref{thm:exp} \\
        \( \kurt(\bm t) \) & 4th central moment of \( \bm t \) & \cref{thm:varefima} \\
        \( \kappa_{a,b,c,d} \) & 4th order cumulant of exponential family & \cref{lem:kurt} \\
        \( \lambda_{\min}(M) \) & Smallest eigenvalue of \( M \) & \cref{thm:est2_psd} \\
        \( \rho(M) \) & Spectral radius of \( M \) (absolute value of spectrum) & \cref{thm:est2_psd} \\
        \( \partial \ell; \partial^{2} \ell \) & Partial derivatives of likelihood \wrt weights & \cref{eq:estimators} \\
        \( \partial \bm h_{L}; \partial^{2} \bm h_{L} \) & Partial derivatives of neural network \wrt weights & \cref{eq:gradl} \\
        \( \Vert \cdot \Vert_{F} \) & Frobenius norm / \( L_{2} \)-norm & \cref{prop:unbiased_consistent} \\
        \( \Vert \cdot \Vert_{2} \) & 2-norm / \( L_{2} \)-norm & \cref{prop:unbiased_consistent} \\
        \( \Vert \cdot \Vert_{2_\sigma} \) & Tensor spectral norm & \cref{lem:nn_grad_norm_bounded2} \\
        \( \Vert \cdot \Vert_{1} \) & \( L_{1} \)-norm & \cref{thm:varefimboth2} \\
        \( \Vert \cdot \Vert_{\infty} \) & \( L_{\infty} \)-norm & \cref{thm:varefimboth2} \\
        \( M^{\T} \) & Matrix transpose & \cref{eq:exp} \\
        \( \otimes \) & Tensor product & \cref{thm:varefima1} \\
\bottomrule
    \end{tabularx}
\end{table}

\section{Variance of FIM estimators}

\subsection{Proof of \cref{prop:unbiased_consistent}}

\begin{proof}
    The first statement holds as both \( \efima(\bm \theta) \) and \( \efimb(\bm \theta) \) are point-wise estimators, they are unbiased (central limit theorem).

    The second statement holds by the law of large numbers (and triangle inequality with \( \varepsilon/2\) and a union bound).
\end{proof}

\subsection{Proof of \cref{thm:exp}}

\begin{proof}
    The statement follows as \( p(\bm y \mid \bm x, \bm \theta) \) is given by an exponential family. See \cite{aIGI}.
\end{proof}

\subsection{Proof of \cref{thm:fim2}}

\begin{proof}
    Consider the alternative formulation of the FIM.
    \begin{align*}
        \expect_{x, y} \left[ -\frac{\partial^{2}}{\partial \bm\theta \partial \bm\theta^{\T}} \log p(\bm y \mid \bm x) \right]
        &=  \expect_{x, y} \left[ \frac{\partial \ell}{\partial \bm \theta} \frac{\partial \ell}{\partial \bm \theta^{\T}} \right] - \expect_{x, y} \left[ \frac{\frac{\partial^{2}}{\partial \bm\theta \partial \bm\theta^{\T}} p(\bm y \mid \bm x)}{p(\bm y \mid \bm x)}\right] \\
        &= \fim{}(\bm\theta) - \int p(\bm x) \frac{\partial^{2}}{\partial \bm\theta \partial \bm\theta^{\T}} p(\bm y \mid \bm x) \dy \dx.
    \end{align*}
    Thus for this to be equivalent to the Jacobian definition, we need the residual term to be zero.

    As \( \sigma \in C^2(\Re) \) and thus \( \sigma^{\prime} \) is smooth, it follows by the composition of smooth functions that \( p(\bm y \mid \bm x) \) and \( \partial p(\bm y \mid \bm x) \) is also a smooth function. This provides sufficient conditions for the Leibniz integration rule to be used (switch order of integration and differentiation). As such,
\begin{align*}
        \expect_{x, y} \left[ -\frac{\partial^{2}}{\partial \bm\theta \partial \bm\theta^{\T}} \log p(\bm y \mid \bm x) \right]
        &= \fim{}(\bm\theta) - \int p(\bm x) \frac{\partial^{2}}{\partial \bm\theta \partial \bm\theta^{\T}} p(\bm y \mid \bm x) \dy \dx \\
        &= \fim{}(\bm\theta) - \frac{\partial^{2}}{\partial \bm\theta \partial \bm\theta^{\T}}\int p(\bm x) p(\bm y \mid \bm x) \dy \dx \\
        &= \fim{}(\bm\theta).
    \end{align*}
\end{proof}

\subsection{Proof of \cref{thm:varefima}}
\begin{proof}
    First define \( \bm \delta \defeq \bm \delta(\bm x, \bm y; \bm \theta) = {\bm t}({\bm y}) - {\bm \eta}({\bm h}_{L}({\bm x})) \).
    \begin{align*}
        \cov
        \left(
        \partial_{i} \ell \cdot \partial_{j} \ell
        ,
        \partial_{k} \ell \cdot \partial_{l} \ell
        \right)
        &=
        \expect_{\bm y \mid \bm x; \bm \theta}[
            (\partial_{i} \ell \cdot \partial_{j} \ell)
            (\partial_{k} \ell \cdot \partial_{l} \ell)
        ]
        -
        \expect_{\bm y \mid \bm x; \bm \theta}[
            \partial_{i} \ell \cdot \partial_{j} \ell
        ]
        \cdot
        \expect_{\bm y \mid \bm x; \bm \theta}[
            \partial_{k} \ell \cdot \partial_{l} \ell
        ].
    \end{align*}
    We then calculate each components. For the first:
\begin{align*}
        \expect_{\bm y \mid \bm x; \bm \theta}[
            (\partial_{i} \ell \cdot \partial_{j} \ell)
            (\partial_{k} \ell \cdot \partial_{l} \ell)
        ]
        &=
        \expect_{\bm y \mid \bm x; \bm \theta} \left[ \partial_{i} {\bm h}_{L}^{a}(\bm x) \cdot \partial_{j} {\bm h}_{L}^{b}(\bm x) \cdot \partial_{k} {\bm h}_{L}^{a}(\bm x) \cdot \partial_{l} {\bm h}_{L}^{b}(\bm x) \cdot \delta_{a} \cdot \delta_{b} \cdot \delta_{c} \cdot \delta_{d} \right] \\
        &=
        \partial_{i} {\bm h}_{L}^{a}(\bm x) \cdot \partial_{j} {\bm h}_{L}^{b}(\bm x) \cdot \partial_{k} {\bm h}_{L}^{a}(\bm x) \cdot \partial_{l} {\bm h}_{L}^{b}(\bm x) \cdot \expect_{\bm y \mid \bm x; \bm \theta} \left[ \delta_{a} \cdot \delta_{b} \cdot \delta_{c} \cdot \delta_{d} \right].
    \end{align*}

    For the second term, we can first consider the expectation:
\begin{align*}
        &\expect_{\bm y \mid \bm x; \bm \theta}[ \partial_{i} \ell \cdot \partial_{j} \ell ] \\
        &=
        \expect_{\bm y \mid \bm x; \bm \theta}\left[ \partial_{i} {\bm h}_{L}^{a}(\bm x) \cdot \partial_{j} {\bm h}_{L}^{b}(\bm x) \cdot \delta_{a} \cdot \delta_{b} \right] \\
        &=
        \partial_{i} {\bm h}_{L}^{a}(\bm x) \cdot \partial_{j} {\bm h}_{L}^{b}(\bm x) \cdot \expect_{\bm y \mid \bm x; \bm \theta}\left[ \delta_{a} \cdot \delta_{b} \right].
    \end{align*}
    Thus the product of this gives:
    \begin{align*}
        &\expect_{\bm y \mid \bm x; \bm \theta}[
            \partial_{i} \ell \cdot \partial_{j} \ell
        ]
        \cdot
        \expect_{\bm y \mid \bm x; \bm \theta}[
            \partial_{k} \ell \cdot \partial_{l} \ell
        ] \\
        &\quad=
        \partial_{i} {\bm h}_{L}^{a}(\bm x) \cdot \partial_{j} {\bm h}_{L}^{b}(\bm x) \cdot \partial_{k} {\bm h}_{L}^{c}(\bm x) \cdot \partial_{l} {\bm h}_{L}^{d}(\bm x) \cdot \expect_{\bm y \mid \bm x; \bm \theta}\left[ \delta_{a} \cdot \delta_{b} \right] \cdot \expect_{\bm y \mid \bm x; \bm \theta}\left[ \delta_{c} \cdot \delta_{d} \right].
    \end{align*}

    This gives the total covariance:
\begin{align*}
        &\cov
        \left(
        \partial_{i} \ell \cdot \partial_{j} \ell
        ,
        \partial_{k} \ell \cdot \partial_{l} \ell
        \right)
        = \partial_{i} {\bm h}_{L}^{a}(\bm x) \cdot \partial_{j} {\bm h}_{L}^{b}(\bm x) \cdot \partial_{k} {\bm h}_{L}^{c}(\bm x) \cdot \partial_{l} {\bm h}_{L}^{d}(\bm x) \cdot
        \\
        &\quad \quad \quad \quad \left(
        \expect_{\bm y \mid \bm x; \bm \theta} \left[ \delta_{a} \cdot \delta_{b} \cdot \delta_{c} \cdot \delta_{d} \right]
        -
        \expect_{\bm y \mid \bm x; \bm \theta}\left[ \delta_{a} \cdot \delta_{b} \right] \cdot \expect_{\bm y \mid \bm x; \bm \theta}\left[ \delta_{c} \cdot \delta_{d} \right]
        \right). \end{align*}
\end{proof}

\subsection{Proof of \cref{lem:kurt}}
\begin{proof}
    We first consider the following notation, consistent with \citep{mccullagh2018tensor} except as subscripts, to denote the central moments and different multivariate cumulants.
    The \emph{non-central moment} of a vector random variable \( \bm X \) is denoted as
\begin{equation*}
\kappa_{r_{1} \ldots r_{m}} = \expect[\bm X_{r_{1}} \cdots \bm X_{r_{m}}],
    \end{equation*}
    where \( \bm r \) is an integer vector denoting the dimensions in which the non-central moment is taken as. The dimensions can be repeated, \ie, \( r_{1} = r_{2} \) \etc. Note that in the notation \( r_{1} \ldots r_{m} \) are not comma separated.

    Similarly, for \emph{central moments} we have:
\begin{equation*}
        \kurt_{r_{1} \ldots r_{m}} = \expect[(\bm X_{r_{1}} - \expect[\bm X_{r_{1}}]) \cdots (\bm X_{r_{m}} - \expect[\bm X_{r_{m}}])].
    \end{equation*}

    For cumulants, we use a comma separated subscripts to distinguish it from central moments. In our case (\( X \) is from an exponential family), this just reduces to \( m \)-derivative of the log-partition function:
\begin{equation*}
        \kappa_{r_{1}, \ldots, r_{m}} = \frac{\partial^{m} F(\bm h_{L})}{\partial \bm h_{r_{1}} \cdots \partial \bm h_{r_{m}}}
    \end{equation*}

    In addition to these moment-like quantities, we introduce the \( [n] \) notation from \citep{mccullagh2018tensor} to denote the possible permutations in indices. For example:
\begin{equation*}
        \kappa_{i} \kappa_{j,k}[3] = \kappa_{i} \kappa_{j,k} + \kappa_{j} \kappa_{i,k} + \kappa_{k} \kappa_{i, j},
    \end{equation*}
    noting that the (central and non-central) moments and cumulants are index order invariant. The numbering also must be equal to the number of available permutations. Also note that \( \kappa_{i} \) denotes a cumulant and not the moment.

    \citep{mccullagh2018tensor} \emph{only} provides a formulation from cumulants to non-central moments. Thus we need to rewrite \( \kurt_{abcd} \) in terms of non-central moments. The following is given in \citep[Equation (2.6)]{mccullagh2018tensor}:
\begin{align*}
        \kappa_{ij} &= \kappa_{i,j} + \kappa_{i} \kappa_{j}; \\
        \kappa_{ijk} &= \kappa_{i,j,k} + \kappa_{i} \kappa_{j,k}[3] + \kappa_{i} \kappa_{j} \kappa_{k}; \\
        \kappa_{ijkl} &= \kappa_{i,j,k,l} + \kappa_{i} \kappa_{j,k,l} [4] + \kappa_{i, j} \kappa_{k, l} [3] + \kappa_{i} \kappa_{j} \kappa_{k,l} [6] + \kappa_{i} \kappa_{j} \kappa_{k} \kappa_{l}.
    \end{align*}

    Expanding \( \kurt_{abcd} \), the 4th central moment, yields the following:
\begin{align*}
        \kurt_{abcd}
        &= \kappa_{abcd} - \kappa_{a} \kappa_{bcd} [4] + \kappa_{a}\kappa_{b}\kappa_{cd}[6] - 3 \kappa_{a}\kappa_{b}\kappa_{c}\kappa_{d} \\
        &= \kappa_{abcd} - \kappa_{a} \kappa_{bcd} [4] + (\kappa_{a}\kappa_{b} (\kappa_{c,d} + \kappa_{c} \kappa_{d}))[6] - 3 \kappa_{a}\kappa_{b}\kappa_{c}\kappa_{d} \\
        &= \kappa_{abcd} - \kappa_{a} \kappa_{bcd} [4] + \kappa_{a}\kappa_{b} \kappa_{c,d}[6] + 3 \kappa_{a}\kappa_{b}\kappa_{c}\kappa_{d} \\
        &= \kappa_{abcd} - (\kappa_{a} (\kappa_{b,c,d} + \kappa_{a} \kappa_{c,d}[3] + \kappa_{b} \kappa_{c} \kappa_{d})) [4] + \kappa_{a}\kappa_{b} \kappa_{c,d}[6] + 3 \kappa_{a}\kappa_{b}\kappa_{c}\kappa_{d} \\
        &= \kappa_{abcd}
        - \kappa_{a} \kappa_{b,c,d} [4]
        - \kappa_{a} \kappa_{b} \kappa_{c,d}[6]
- \kappa_{a}\kappa_{b}\kappa_{c}\kappa_{d} \\
        &= \kappa_{a,b,c,d} + \kappa_{a,b} \kappa_{c,d}[3],
    \end{align*}
    which when substituting for derivatives proves the Lemma.
\end{proof}

\subsection{Proof of \cref{thm:varefimb}}
\begin{proof}
    \begin{align*}
        &\cov \left(
            -\frac{\partial^{2}\ell}{\partial\bm\theta_{i} \partial\bm\theta_{j}}
            ,
            -\frac{\partial^{2}\ell}{\partial\bm\theta_{k} \partial\bm\theta_{l}}
            \right) \\
        &= \cov \left(
            \partial_{i} {\bm h}^{a}_{L}({\bm x})
            \fim_{ab}(\bm h_{L})
            \partial_{j} {\bm h}^{b}_{L}({\bm x})
            -
            [{\bm t}_{\alpha}({\bm y}) - {\bm \eta}_{\alpha}]
            \frac{\partial^{2}{\bm h}^{\alpha}_{L}({\bm x})}{\partial\bm\theta_{i} \partial\bm\theta_{j}}
            , \right. \\
            & \left. \quad \quad \quad \quad \quad \quad \quad \quad
            \partial_{k} {\bm h}^{a}_{L}({\bm x})
            \fim_{ab}(\bm h_{L})
            \partial_{l} {\bm h}^{b}_{L}({\bm x})
            -
            [{\bm t}_{\beta}({\bm y}) - {\bm \eta}_{\beta}]
            \frac{\partial^{2}{\bm h}^{\beta}_{L}({\bm x})}{\partial\bm\theta_{k} \partial\bm\theta_{l}}
         \right) \\
        &= \cov \left(
            [{\bm t}_{\alpha}({\bm y}) - {\bm \eta}_{\alpha}]
            \frac{\partial^{2}{\bm h}^{\alpha}_{L}({\bm x})}{\partial\bm\theta_{i} \partial\bm\theta_{j}}
            ,
            [{\bm t}_{\beta}({\bm y}) - {\bm \eta}_{\beta}]
            \frac{\partial^{2}{\bm h}^{\beta}_{L}({\bm x})}{\partial\bm\theta_{k} \partial\bm\theta_{l}}
         \right) \\
        &= E \left[
            [{\bm t}_{\alpha}({\bm y}) - {\bm \eta}_{\alpha}]
            \frac{\partial^{2}{\bm h}^{\alpha}_{L}({\bm x})}{\partial\bm\theta_{i} \partial\bm\theta_{j}}
            \cdot
            [{\bm t}_{\beta}({\bm y}) - {\bm \eta}_{\beta}]
            \frac{\partial^{2}{\bm h}^{\beta}_{L}({\bm x})}{\partial\bm\theta_{k} \partial\bm\theta_{l}}
         \right] \\
        & \quad \quad \quad \quad \quad \quad \quad \quad
         -
        E \left[
            [{\bm t}_{\alpha}({\bm y}) - {\bm \eta}_{\alpha}]
            \frac{\partial^{2}{\bm h}^{\alpha}_{L}({\bm x})}{\partial\bm\theta_{i} \partial\bm\theta_{j}}
            \right]
            \cdot
        E \left[
            [{\bm t}_{\beta}({\bm y}) - {\bm \eta}_{\beta}]
            \frac{\partial^{2}{\bm h}^{\beta}_{L}({\bm x})}{\partial\bm\theta_{k} \partial\bm\theta_{l}}
         \right] \\
        &= E \left[
            [{\bm t}_{\alpha}({\bm y}) - {\bm \eta}_{\alpha}]
            \frac{\partial^{2}{\bm h}^{\alpha}_{L}({\bm x})}{\partial\bm\theta_{i} \partial\bm\theta_{j}}
            \cdot
            [{\bm t}_{\beta}({\bm y}) - {\bm \eta}_{\beta}]
            \frac{\partial^{2}{\bm h}^{\beta}_{L}({\bm x})}{\partial\bm\theta_{k} \partial\bm\theta_{l}}
         \right] \\
         &=
            \frac{\partial^{2}{\bm h}^{\alpha}_{L}({\bm x})}{\partial\bm\theta_{i} \partial\bm\theta_{j}}
            \frac{\partial^{2}{\bm h}^{\beta}_{L}({\bm x})}{\partial\bm\theta_{k} \partial\bm\theta_{l}}
            E [\delta_{\alpha} \cdot \delta_{\beta}].
    \end{align*}
    The theorem follows immediately.
\end{proof}

\subsection{Proof of \cref{thm:varfima11}}

\begin{proof}
    Let \( 1 \leq p, q \leq \infty \) such that \( 1 / p  + 1 / q = 1 \).
    Let \( \mathcal{T}_{abcd} = \kurt_{abcd}(\bm{t}) - \fim_{ab}(\bm{h}_L) \cdot \fim_{cd}(\bm{h}_L) \).
    From the last inequality in the proof of \cref{thm:varefima1} we have,
\begin{align*}
        \left \vert
        \left[
        \cov
        \left(
            \efima(\bm \theta)
        \right)
        \right]^{ijkl} \right \vert
        &\leq
        \frac{1}{N}
        \cdot
        \Vert \partial_{i} {\bm h}_{L}(\bm x) \Vert_{p}
        \cdot
        \Vert \partial_{j} {\bm h}_{L}(\bm x) \Vert_{p}
        \cdot
        \Vert \partial_{k} {\bm h}_{L}(\bm x) \Vert_{p}
        \cdot
        \Vert \partial_{l} {\bm h}_{L}(\bm x) \Vert_{p}
        \cdot
        \Vert \mathcal{T} \Vert_q.
    \end{align*}

    Thus for the \( p \)-norm,
\begin{align*}
        &\left \Vert
        \cov
        \left(
            \efima(\bm \theta)
        \right)
        \right \Vert_{p} \\
        &=
        \left(
            \sum_{i,j,k,l}
            \left \vert
            \left[
            \cov
            \left(
                \efima(\bm \theta)
            \right)
            \right]^{ijkl} \right \vert^{p}
        \right)^{1/p} \\
        &\leq
        \frac{1}{N}
        \cdot
        \Vert \mathcal{T} \Vert_{q}
        \left(
            \sum_{i,j,k,l}
            \left \vert
            \Vert \partial_{i} {\bm h}_{L}(\bm x) \Vert_{p}
            \cdot
            \Vert \partial_{j} {\bm h}_{L}(\bm x) \Vert_{p}
            \cdot
            \Vert \partial_{k} {\bm h}_{L}(\bm x) \Vert_{p}
            \cdot
            \Vert \partial_{l} {\bm h}_{L}(\bm x) \Vert_{p}
            \right \vert^{p}
        \right)^{1/p} \\
        &=
        \frac{1}{N}
        \cdot
        \Vert \mathcal{T} \Vert_{q}
        \left(
            \left(
            \sum_{i}
            \Vert \partial_{i} {\bm h}_{L}(\bm x) \Vert_{p}^{p}
            \right)^{4}
        \right)^{1/p} \\
        &=
        \frac{1}{N}
        \cdot
        \Vert \mathcal{T} \Vert_{q}
        \left(
            \left(
            \sum_{i}
            \Vert \partial_{i} {\bm h}_{L}(\bm x) \Vert_{p}^{p}
            \right)^{1/p}
        \right)^{4} \\
        &=
        \frac{1}{N}
        \cdot
        \Vert \mathcal{T} \Vert_{q}
        \cdot
        \Vert \partial {\bm h}_{L}(\bm x) \Vert_{p}^{4}.
    \end{align*}
    Thus, by taking the \( p = q = 2 \) the Theorem holds.
\end{proof}

\subsection{Proof of \cref{thm:varfimb11}}

\begin{proof}
    Let \( 1 \leq p, q \leq \infty \) such that \( 1 / p  + 1 / q = 1 \).
    From the last inequality in the proof of \cref{thm:varefimb1} we have,
\begin{align*}
        \left \vert
        \left[
        \cov
        \left(
            \efimb(\bm \theta)
        \right)
        \right]^{ijkl} \right \vert
        &\leq
        \frac{1}{N}
        \cdot
        \Vert
        \partial^{2}_{ij}{\bm h}_{L}({\bm x})
        \Vert_{p}
        \cdot
        \Vert
        \partial^{2}_{kl}{\bm h}_{L}({\bm x})
        \Vert_{p}
        \cdot
        \Vert
        \fim( \bm{h}_L )
        \Vert_{q}
    \end{align*}
    Thus for the \( p \)-norm,
\begin{align*}
        \left \Vert
        \cov
        \left(
            \efimb(\bm \theta)
        \right)
        \right \Vert_{q}
        &= \left( \sum_{i,j,k,l}
        \left \vert
        \left[
        \cov
        \left(
            \efimb(\bm \theta)
        \right)
        \right]^{ijkl} \right \vert^{p} \right)^{1/p}\\
        &\leq
        \frac{1}{N}
        \cdot
        \Vert
        \fim( \bm{h}_L )
        \Vert_{q}
        \cdot
        \left( \sum_{i,j,k,l}
        \left \vert
        \Vert
        \partial^{2}_{ij}{\bm h}_{L}({\bm x})
        \Vert_{p}
        \cdot
        \Vert
        \partial^{2}_{kl}{\bm h}_{L}({\bm x})
        \Vert_{p}
        \right \vert^{p} \right)^{1/p}\\
        &=
        \frac{1}{N}
        \cdot
        \Vert
        \fim( \bm{h}_L )
        \Vert_{q}
        \cdot
        \left(
        \left(
        \sum_{i,j}
        \left \vert
        \Vert
        \partial^{2}_{ij}{\bm h}_{L}({\bm x})
        \Vert_{p}
        \right \vert^{p}
        \right)^{2}
        \right)^{1/p}\\
        &=
        \frac{1}{N}
        \cdot
        \Vert
        \fim( \bm{h}_L )
        \Vert_{q}
        \cdot
        \left(
        \left(
        \sum_{i,j}
        \left \vert
        \Vert
        \partial^{2}_{ij}{\bm h}_{L}({\bm x})
        \Vert_{p}
        \right \vert^{p}
        \right)^{1/p}
        \right)^{2}\\
        &=
        \frac{1}{N}
        \cdot
        \Vert
        \fim( \bm{h}_L )
        \Vert_{q}
        \cdot
        \Vert
        \partial^{2}{\bm h}_{L}({\bm x})
        \Vert_{p}^{2}.
    \end{align*}
    Thus, by taking the \( p = q = 2 \) the Theorem holds.
\end{proof}

\subsection{Proof of \cref{thm:elementwisemoment}}

\begin{proof}
    We first prove the second part of the Lemma, which is simpler.
    \begin{align}
        \Vert \fim(\bm{h}_L) \Vert_{F}
        &= \sqrt{\sum_{a,b} \cov^2(\bm{t}_a,\bm{t}_b)} \nonumber\\
        &\le \sqrt{\sum_{a,b} \var(\bm{t}_a) \var(\bm{t}_b) } \quad\text{(by Cauchy-Schwarz inequality)}\nonumber\\
        &= \sqrt{\sum_{a} \var(\bm{t}_a) \sum_b \var(\bm{t}_b) } \nonumber\\
        &= \sum_{a} \var(\bm{t}_a)\nonumber\\
        &= \sum_{a} \fim_{aa}(\bm{h}_L).
    \end{align}

    We are left with the first part of the Lemma.
    \begin{align*}
        &\Vert \kurt(\bm{t}) - \fim(\bm{h}_L) \otimes \fim(\bm{h}_L)
        \Vert_{F}^2 \\
        = &
        \sum_{a,b,c,d}
        \bigg(
        \expect\left((\bm{t}_a-\bm\eta_a)(\bm{t}_b-\bm\eta_b)(\bm{t}_c-\bm\eta_c)(\bm{t}_d-\bm\eta_d)\right)\\
       &\quad
          -
        \expect\left((\bm{t}_a-\bm\eta_a)(\bm{t}_b-\bm\eta_b)\right)
        \expect\left((\bm{t}_c-\bm\eta_c)(\bm{t}_d-\bm\eta_d)\right)\bigg)^2\\
       \le &
        2
        \sum_{a,b,c,d}
        \bigg(
        \underbrace{\expect^2\left((\bm{t}_a-\bm\eta_a)(\bm{t}_b-\bm\eta_b)(\bm{t}_c-\bm\eta_c)(\bm{t}_d-\bm\eta_d)\right)}_{S_{abcd}}\\
           &\quad
           +
           \underbrace{ \expect^2\left((\bm{t}_a-\bm\eta_a)(\bm{t}_b-\bm\eta_b)\right)
                        \expect^2\left((\bm{t}_c-\bm\eta_c)(\bm{t}_d-\bm\eta_d)\right)}_{T_{abcd}} \bigg)\\
       = &
       2 \sum_{a,b,c,d} ( S_{abcd} + T_{abcd} ).
    \end{align*}
    We have
    \begin{align*}
        S_{abcd}
        &= \expect^2\left((\bm{t}_a-\bm\eta_a)(\bm{t}_b-\bm\eta_b)(\bm{t}_c-\bm\eta_c)(\bm{t}_d-\bm\eta_d)\right) \\
        &\le
            \expect\left((\bm{t}_a-\bm\eta_a)^2(\bm{t}_b-\bm\eta_b)^2 \right)
            \expect\left((\bm{t}_c-\bm\eta_c)^2(\bm{t}_d-\bm\eta_d)^2 \right) \\
        &\le
            \expect^{1/2}(\bm{t}_a-\bm\eta_a)^4 \cdot
            \expect^{1/2}(\bm{t}_b-\bm\eta_b)^4 \cdot
            \expect^{1/2}(\bm{t}_c-\bm\eta_c)^4 \cdot
            \expect^{1/2}(\bm{t}_d-\bm\eta_d)^4 \\
        & =
            \sqrt{ \kurt_{aaaa}(\bm{t}) } \cdot
            \sqrt{ \kurt_{bbbb}(\bm{t}) } \cdot
            \sqrt{ \kurt_{cccc}(\bm{t}) } \cdot
            \sqrt{ \kurt_{dddd}(\bm{t}) }.
    \end{align*}
    At the same time,
    \begin{align*}
        T_{abcd}
        & = \expect^2\left((\bm{t}_a-\bm\eta_a)(\bm{t}_b-\bm\eta_b)\right)
            \expect^2\left((\bm{t}_c-\bm\eta_c)(\bm{t}_d-\bm\eta_d)\right) \\
        & \le
            \expect(\bm{t}_a-\bm\eta_a)^2 \cdot
            \expect(\bm{t}_b-\bm\eta_b)^2 \cdot
            \expect(\bm{t}_c-\bm\eta_c)^2 \cdot
            \expect(\bm{t}_d-\bm\eta_d)^2 \\
        & =
            \fim_{aa}(\bm{h}_L) \cdot
            \fim_{bb}(\bm{h}_L) \cdot
            \fim_{cc}(\bm{h}_L) \cdot
            \fim_{dd}(\bm{h}_L).
    \end{align*}
    To sum up, we have
    \begin{align*}
        \Vert \kurt(\bm{t}) - \fim(\bm{h}_L) \otimes \fim(\bm{h}_L) \Vert_{F}^2
        &\le
            2 \sum_{a,b,c,d} (S_{abcd} + T_{abcd}) \\
        &=
            2 \left( \sum_{a} \sqrt{ \kurt_{aaaa}(\bm{t}) }\right)^4
          + 2 \left( \sum_{a} \fim_{aa}(\bm{h}_L) \right)^4 \nonumber\\
        &
        \le
        2
        \left( \sum_{a} ( \sqrt{ \kurt_{aaaa}(\bm{t}) } + \fim_{aa}(\bm{h}_L) ) \right)^4.
    \end{align*}
    Taking the square root of both sides gives the result.
\end{proof}

\subsection{Proof of \cref{thm:est2_psd}}

\begin{proof}
The estimator is a \psd matrix subtracted by a linear combination of symmetric matrices.

    \begin{align*}
        \efimb(\bm x; \bm \theta)
        &=
        \fim(\bm \theta)
        - \frac{1}{N} \sum_{i=1}^{N}
        [{\bm t}({\bm y}_{i}) - {\bm \eta}({\bm h}_{L}({\bm x}))]^{\T}
        \frac{\partial^{2}{\bm h}_{L}({\bm x})}{\partial\bm\theta \partial\bm\theta^{\T}} \\
        &=
        \fim(\bm \theta)
        -
        \sum_{a=1}^{n}
        (\bar{\bm t}_{a} - \expect[\bar{\bm t}_{a}])
        \frac{\partial^{2}{\bm h}^{a}_{L}({\bm x})}{\partial\bm\theta \partial\bm\theta^{\T}},
    \end{align*}
    where \( \bar{\bm t}_{a} = \frac{1}{N} \sum_{i=1}^{N} \bm t_{a}(\bm y_{i}) \).

    We consider the Chebyshev inequalities for
    \begin{equation*}
    k = \frac{\sqrt{N} \lambda_{\min}(\fim(\bm \theta))}
    {\Vert\bm\rho\Vert_2 \sqrt{\lambda_{\max}(\fim(\bm{h}_L))}} > 0.
    \end{equation*}
\begin{align*}
        & \prob\left( \Vert \bar{\bm t} - \expect[\bar{\bm t}] \Vert_2
        \leq k \sqrt{\frac{1}{N}\lambda_{\max}(\fim(\bm{h}_L))} \right)\\
        = &
        \prob\left( \frac{\Vert \bar{\bm t} - \expect[\bar{\bm t}] \Vert_2^2}{
                  \frac{1}{N}\lambda_{\max}(\fim(\bm{h}_L)) }
        \leq k^2 \right)\\
        \geq &
        \prob\left(
            (\bar{\bm t} - \expect[\bar{\bm t}])^\T
            \left(\frac{1}{N} \fim(\bm{h}_L) \right)^{-1}
            (\bar{\bm t} - \expect[\bar{\bm t}])
        \leq k^2 \right)\\
        \geq &
        1 - \frac{n_L}{k^{2}}\\
        = &
        1 - \frac{n_L \Vert\bm\rho\Vert_2^2 \lambda_{\max}(\fim(\bm{h}_L))}
        {N \lambda_{\min}^2(\fim(\bm \theta))}.
    \end{align*}

    Thus with probability at least
\begin{align*}
        1 - \frac{n_L \Vert\bm\rho\Vert_2^2 \lambda_{\max}(\fim(\bm{h}_L))}
        {N \lambda_{\min}^2(\fim(\bm \theta))},
    \end{align*}
    the following statement is true: for all unit vector \( \bm v \) we have
\begin{align*}
        \bm v^{\T} \efimb(\bm \theta) \bm v
        &=
        \bm v^{\T}
        \left(
        \fim(\bm \theta)
        -
        \sum_{a=1}^{n}
        (\bar{\bm t}_{a} - \expect[\bar{\bm t}_{a}])
        \frac{\partial^{2}{\bm h}^{a}_{L}({\bm x})}{\partial\bm\theta \partial\bm\theta^{\T}}
        \right)
        \bm v \\
        &=
        \bm v^{\T}
        \fim(\bm \theta)
        \bm v
        -
        \sum_{a=1}^{n}
        (\bar{\bm t}_{a} - \expect[\bar{\bm t}_{a}])
        \left(
        \bm v^{\T}
        \frac{\partial^{2}{\bm h}^{a}_{L}({\bm x})}{\partial\bm\theta \partial\bm\theta^{\T}}
        \bm v
        \right)
        \\
        &\geq
        \lambda_{\min}(\fim(\bm \theta))
        -
        \sum_{a=1}^{n}
        (\bar{\bm t}_{a} - \expect[\bar{\bm t}_{a}])
        \left(
        \bm v^{\T}
        \frac{\partial^{2}{\bm h}^{a}_{L}({\bm x})}{\partial\bm\theta \partial\bm\theta^{\T}}
        \bm v
        \right)
        \\
        &\geq
        \lambda_{\min}(\fim(\bm \theta))
        -
        \sum_{a=1}^{n}
        \vert \bar{\bm t}_{a} - \expect[\bar{\bm t}_{a}] \vert
        \cdot
        \rho( \partial^{2} \bm h^{a}_{L} )
        \\
        &\geq
        \lambda_{\min}(\fim(\bm \theta))
        -
        \left\Vert \bar{\bm t} - \expect[\bar{\bm t}] \right\Vert_2
        \cdot
        \Vert \bm{\rho} \Vert_2
        \\
        &\geq
        \lambda_{\min}(\fim(\bm \theta))
        -
        k \cdot \sqrt{\frac{1}{N}\lambda_{\max}(\fim(\bm{h})_L)} \cdot
        \Vert \bm{\rho} \Vert_2
        \\
        &\geq
        \lambda_{\min}(\fim(\bm \theta))
        -
        \lambda_{\min}(\fim(\bm \theta))
        \\
        &= 0.
    \end{align*}

    Thus with the specified probability, estimator \( \efimb(\bm \theta) \) is a positive semidefinite matrix.
\end{proof}

\subsection{Proof of \cref{thm:worstspectrum}}

\begin{proof}
    Let \( \bar{\bm t}_{a} = \frac{1}{N} \sum_{i=1}^{N} \bm t_{a}(\bm y_{i}) \).
    As the FIM is a \psd matrix, $\forall\bm{v}$ such that $\Vert{}\bm{v}\Vert=1$, we have
\begin{align*}
        \bm v^{\T} \efimb(\bm \theta) \bm v
        &=
        \bm v^{\T}
        \left(
        \fim(\bm \theta)
        -
        (\bar{\bm t}_{a} - \expect[\bar{\bm t}_{a}])
        \frac{\partial^{2}{\bm h}^{a}_{L}({\bm x})}{\partial\bm\theta \partial\bm\theta^{\T}}
        \right)
        \bm v \\
        &\geq
        -
        \bm v^{\T}
        \left(
        (\bar{\bm t}_{a} - \expect[\bar{\bm t}_{a}])
        \frac{\partial^{2}{\bm h}^{a}_{L}({\bm x})}{\partial\bm\theta \partial\bm\theta^{\T}}
        \right)
        \bm v \\
        &=
        (\bm \eta_{a} - \bar{\bm t}_{a})
        \cdot
        \bm v^{\T}
        \frac{\partial^{2}{\bm h}^{a}_{L}({\bm x})}{\partial\bm\theta \partial\bm\theta^{\T}}
        \bm v \\
        &\geq
        -\left \vert \bm \eta_{a} - \bar{\bm t}_{a} \right \vert
        \cdot
        \rho(\partial^{2}\bm h_{L}^{a}) \cdot
        \bm v^{\T}
        \bm v \\
       &= -
        \rho(\partial^{2}\bm h_{L}^{a})
        \left \vert \bm \eta_{a} - \bar{\bm t}_{a} \right \vert.
    \end{align*}

    As $\efimb(\bm \theta)$ is a real symmetric matrix, we can write
    its spectrum decomposition as
    \begin{equation*}
    \efimb(\bm \theta) = \hat{\lambda}^\alpha \bm{v}_\alpha \bm{v}_\alpha^\T,
    \end{equation*}
    where $\left\{\bm{v}_\alpha\right\}$ are orthonormal vectors and \( \hat{\lambda}^{\alpha} \) are the corresponding eigenvalues of \( \efimb(\bm \theta) \).

    Therefore
    \begin{equation*}
    \hat{\lambda}^\alpha
    =
    \bm{v}_\alpha^\T \efimb(\bm \theta) \bm{v}_\alpha
    \ge
     - \rho(\partial^{2}\bm h_{L}^{a})
        \left \vert \bm \eta_{a} - \bar{\bm t}_{a} \right \vert.
    \end{equation*}
    The statement in \cref{thm:worstspectrum} follows immediately.

\end{proof}

\subsection{Proof of \cref{thm:est_cr_chebyshev}}

To prove the Lemma, we first consider a variant of the result presented in \citet{chen2007new}.
\begin{lemma}[Variant of \citet{chen2007new}]
    \label{lem:chebyshev_variant}
    For any random vector \( X \in \Re^{n} \) with variances \( \var(X) \),
\begin{equation*}
        \Pr\left( (X - \expect X)^{\T}(X - \expect X) \geq \varepsilon \right) \leq \frac{1}{\varepsilon} \cdot \sum_{i=1}^{n} \var(X_{i}), \quad \forall \varepsilon > 0.
    \end{equation*}
\end{lemma}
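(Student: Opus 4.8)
The plan is to recognize this as nothing more than Markov's inequality applied to the nonnegative scalar random variable $Y \defeq (X - \expect X)^{\T}(X - \expect X) = \Vert X - \expect X \Vert_2^2 \ge 0$. First I would invoke Markov's inequality for $Y$: for any $\varepsilon > 0$,
\begin{equation*}
\Pr\left( Y \ge \varepsilon \right) \le \frac{\expect[Y]}{\varepsilon}.
\end{equation*}
Then I would compute the numerator by expanding the squared norm componentwise and using linearity of expectation,
\begin{equation*}
\expect[Y] = \expect\left[ \sum_{i=1}^{n} (X_i - \expect X_i)^2 \right] = \sum_{i=1}^{n} \expect\left[ (X_i - \expect X_i)^2 \right] = \sum_{i=1}^{n} \var(X_i),
\end{equation*}
which is exactly $\trace(\cov(X))$. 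Substituting this into the Markov bound yields the claimed inequality.

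This is a deliberately weakened form of the generalized Chebyshev inequality of \citet{chen2007new}: that result bounds the probability that the Mahalanobis distance $(X-\expect X)^{\T}[\cov(X)]^{-1}(X-\expect X)$ exceeds a threshold by $n/k^2$, whereas here we only need the Euclidean-norm version, which does not require $\cov(X)$ to be invertible and follows from the elementary Markov step above. I would remark briefly on this so the reader sees why the named attribution is appropriate even though the proof is short.

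There is essentially no obstacle here; the only thing to be slightly careful about is the implicit assumption that each $\var(X_i)$ (equivalently $\expect[Y]$) is finite, so that Markov's inequality is non-vacuous — this is guaranteed in our application since the relevant quantities are the finite variances of the derivative-based estimators from \cref{thm:varefima,thm:varefimb}. After establishing this lemma, the two bounds in \cref{thm:est_cr_chebyshev} follow by applying it to the random vectors obtained by vectorizing $\efima(\bm\theta) - \fim(\bm\theta)$ and $\efimb(\bm\theta) - \fim(\bm\theta)$, using that these estimators are unbiased (\cref{prop:unbiased_consistent}), that their per-entry variances scale as $1/N$ via \cref{eq:varfims}, setting $\varepsilon N$ appropriately, and taking a square root.
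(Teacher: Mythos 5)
Your proposal is correct and is essentially the paper's own argument: the paper simply writes out the Markov step explicitly via the indicator-function bound $\Pr(X\in D_\varepsilon)=\expect[\indicator{X\in D_\varepsilon}]\le \frac{1}{\varepsilon}\expect[(X-\expect X)^{\T}(X-\expect X)]$ and then identifies this expectation with $\sum_{i}\var(X_i)$, exactly as you do by invoking Markov's inequality for $\Vert X-\expect X\Vert_2^2$.
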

\begin{proof}
    The proofs follows closely to \citet{chen2007new}, with the slight change in set of variables considered. Let \( \varepsilon > 0 \) and \( {D}_{\varepsilon} \defeq \{ V \in \Re^{n} : (V - \expect X)^{\T}(V - \expect X) \geq \varepsilon \} \).
    By definition we have that for all \( V \in D_{\varepsilon} \),
\begin{equation*}
        (V - \expect X)^{\T}(V - \expect X) \cdot \frac{1}{\varepsilon} \geq 1
    \end{equation*}
    Thus for the probability of the set, we have:
\begin{align*}
        \Pr\left( (X - \expect X)^{\T}(X - \expect X) \geq \varepsilon \right)
        &= \Pr\left( X \in D_{ \varepsilon} \right) \\
        &= \expect\left[ \indicator{X \in D_{ \varepsilon}} \right] \\
        &\leq \frac{1}{\varepsilon} \cdot \expect\left[ (X - \expect X)^{\T}(X - \expect X) \cdot\indicator{X \in D_{ \varepsilon}} \right] \\
        &\leq \frac{1}{\varepsilon} \cdot \expect\left[ (X - \expect X)^{\T}(X - \expect X) \right] \\
        &= \frac{1}{\varepsilon} \cdot \sum_{i=1}^{n} \expect\left[ (X_{i} - \expect X_{i})(X_{i} - \expect X_{i}) \right] \\
        &= \frac{1}{\varepsilon} \cdot \sum_{i=1}^{n} \var(X_{i}),
    \end{align*}
    where \( \indicator{X \in S} \) denotes the indicator function for \( X \) being in a set \( S \).
\end{proof}

We can now prove the main Lemma through standard tricks:
\begin{proof}
    From \cref{lem:chebyshev_variant} (and utilizing the \( \vectorise \) operator) we have that for any \( \delta > 0 \) and each \( z \in \{ 1, 2 \} \):
\begin{equation*}
        \Pr( \Vert \efimz(\bm \theta) - \fim(\bm \theta) \Vert_{F}^{2} \leq \delta) \geq \frac{1}{\delta} \cdot \sum_{i, j=1}^{\dim(\bm \theta)}{\var\left( \efimz(\bm \theta) \right)^{ij}}.
    \end{equation*}
    Letting \( \varepsilon \defeq \frac{1}{\delta} \cdot \sum_{i, j=1}^{\dim(\bm \theta)}{\var\left( \efimz(\bm \theta) \right)^{ij}} \) and rearranging, we have:
\begin{align*}
        &\Pr\left( \Vert \efimz(\bm \theta) - \fim(\bm \theta) \Vert_{F}^{2} \geq \frac{1}{\epsilon} \cdot \sum_{i, j=1}^{\dim(\bm \theta)}{\var\left( \efimz(\bm \theta) \right)^{ij}} \right) \leq 1 - \varepsilon
        \\
        \implies
        &\Pr\left( \Vert \efimz(\bm \theta) - \fim(\bm \theta) \Vert_{F} \geq \frac{1}{\sqrt{\epsilon}} \cdot \sqrt{\sum_{i, j=1}^{\dim(\bm \theta)}{\var\left( \efimz(\bm \theta) \right)^{ij}} } \right) \leq 1 - \varepsilon.
    \end{align*}
    Substituting appropriately from \cref{eq:varfims} \wrt the FIM estimator considered completes the proof.
\end{proof}

\subsection{Proof of \cref{lem:dl_grad}}

\begin{proof}
    We first show that $\bm{B}_l$ is the Jacobian of the mapping $\bm{h}_l\to\bm{h}_L$, that is,
    \begin{equation}\label{eq:bl}
        d\bm{h}_L = \bm{B}_l d\bm{h}_l. \quad (l=0,\cdots,L)
    \end{equation}
    Obviously, this is true for $l=L$, as we have
    \begin{equation*}
        d\bm{h}_L = \bm{I}\cdot{}d\bm{h}_L = \bm{B}_L \cdot d\bm{h}_L.
    \end{equation*}
    From \cref{eq:exp}, we have
    \begin{equation*}
        \bm{h}_{l+1} = \sigma( \bm{W}_{l}\bar{\bm{h}}_{l}),
    \end{equation*}
    where $\sigma$ is abused to denote the non-linear activation function for
    $l=0,\cdots,L-2$ and $\sigma$ is the identity map for $l={L-1}$.
    Therefore
    \begin{equation*}
        d\bm{h}_{l+1} = \bm{D}_l \cdot \bm{W}_l^- \cdot d\bm{h}_{l},
    \end{equation*}
    where $\bm{D}_{l} = \diag\left( \sigma^{\prime}(\bm{W}_{l} \bar{\bm{h}}_{l}) \right)$
    for $l=0,\cdots,L-2$, and $\bm{D}_{L-1} = \bm{I}$.
    Assume \cref{eq:bl} is true for $l+1$, then
    \begin{equation*}
        d\bm{h}_L
        = \bm{B}_{l+1} d\bm{h}_{l+1}
        = \bm{B}_{l+1} \cdot \bm{D}_l \cdot \bm{W}_l^- \cdot d\bm{h}_{l}
        = \bm{B}_l d\bm{h}_{l}.
    \end{equation*}
    Hence $\bm{B}_l$ is the Jacobian of $\bm{h}_l\to\bm{h}_L$ for $l=0,\cdots,{L}$.

    Now we are ready to derive the expression of $\frac{\partial\ell}{\partial\bm{h}_{l}}$.
    \begin{align*}
        d\ell
        &= d (\log p(\bm{y} \mid \bm{x})) \\
        &= d (\bm{t}^\T(\bm{y})\bm{h}_L - F(\bm{h}_L)) \\
        &= d (\trace{}\left\{ \bm{t}^\T(\bm{y})\bm{h}_L - F(\bm{h}_L)\right\}) \\
        &= \trace{}\left\{d (\bm{t}^\T(\bm{y})\bm{h}_L - F(\bm{h}_L))\right\} \\
        &= \trace{}\left\{d (\bm{t}^\T(\bm{y})\bm{h}_L) - d( F(\bm{h}_L))\right\} \\
        &= \trace{}\left\{\bm{t}^\T(\bm{y}) d\bm{h}_L - \grad F(\bm{h}_L)^{\T} d\bm{h}_L\right\} \\
        &= \trace{}\left\{\bm{t}^\T(\bm{y}) d\bm{h}_L - \bm{\eta}^{\T}(\bm{h}_L) d\bm{h}_L\right\} \\
        &= \trace{}\left\{(\bm{t}(\bm{y}) - \bm{\eta}(\bm{h}_L))^{\T} d\bm{h}_L \right\}\\
        &= \trace{}\left\{(\bm{t}(\bm{y}) - \bm{\eta}(\bm{h}_L))^{\T} \bm{B}_l d\bm{h}_l \right\}\\
    \end{align*}
    Therefore
    \begin{align*}
        \frac{\partial\ell}{\partial\bm{h}_l}
        =
        \left(
            (\bm{t}(\bm{y}) - \bm{\eta}(\bm{h}_L))^{\T} \bm{B}_l
        \right)^\T
        =
        \bm{B}_l^\T (\bm{t}(\bm{y}) - \bm{\eta}(\bm{h}_L)).
    \end{align*}

    Next, we show the gradient \wrt the neural network weights, \ie $\frac{\partial\ell}{\partial\bm{W}_{l}}$.
    We have
    \begin{align*}
        d\ell
        &= \trace\left\{ \left(\frac{\partial\ell}{\partial\bm{h}_{l+1}}\right)^\T
        d\bm{h}_{l+1} \right\}\nonumber\\
        &= \trace\left\{ \left(\frac{\partial\ell}{\partial\bm{h}_{l+1}}\right)^\T
            \bm{D}_l \cdot d\bm{W}_l \cdot \bar{\bm{h}}_l
            \right\}\nonumber\\
        &= \trace\left\{
            \bar{\bm{h}}_l
            \left(\frac{\partial\ell}{\partial\bm{h}_{l+1}}\right)^\T
            \bm{D}_l \cdot d\bm{W}_l \right\}.
    \end{align*}
    Therefore
    \begin{align*}
        \frac{\partial\ell}{\partial\bm{W}_l}
        &=
        \left(
        \bar{\bm{h}}_l
            \left(\frac{\partial\ell}{\partial\bm{h}_{l+1}}\right)^\T
            \bm{D}_l
        \right)^\T\nonumber\\
        &=
        \bm{D}_l^\T
        \left(\frac{\partial\ell}{\partial\bm{h}_{l+1}}\right)
        \bar{\bm{h}}_l^\T\nonumber\\
        &=
        \bm{D}_l
        \left(\frac{\partial\ell}{\partial\bm{h}_{l+1}}\right)
        \bar{\bm{h}}_l^\T.
    \end{align*}

    Finally, we give the gradient of $\bm{h}_L$ \wrt the neural network parameters.
    By definition, $\bm{B}_{l+1}$ is the Jacobian of the mapping $\bm{h}_{l+1}\to\bm{h}_L$.
    Therefore
    \begin{equation*}
        d\bm{h}_L
        = \bm{B}_{l+1} \cdot d\bm{h}_{l+1}
        = \bm{B}_{l+1} \cdot \bm{D}_l d\bm{W}_l \bar{\bm{h}}_l.
    \end{equation*}
    We rewrite the above equation in the element-wise form
    \begin{align*}
        d\bm{h}_L^a
        & = \trace{}\left\{
            \bm{B}_{l+1} \cdot \bm{D}_l d\bm{W}_l \bar{\bm{h}}_l \bm{e}_a^\T
        \right\}\nonumber\\
        & = \trace{}\left\{
            \bar{\bm{h}}_l \bm{e}_a^\T
            \bm{B}_{l+1} \bm{D}_l \cdot d\bm{W}_l
        \right\}.
    \end{align*}
    Therefore
    \begin{align*}
        \frac{\partial {\bm h}^{a}_{L}}{\partial {\bm W}_{l}}
        &=
        \left(
        \bar{\bm{h}}_l \bm{e}_a^\T \bm{B}_{l+1} \bm{D}_l
        \right)^\T\nonumber\\
        &=
        \bm{D}_l^\T
        \bm{B}_{l+1}^\T
        \bm{e}_a
        \bar{\bm{h}}_l^\T\nonumber\\
        &=
        \bm{D}_l \bm{B}_{l+1}^\T \bm{e}_a
        \bar{\bm{h}}_l^\T.
    \end{align*}
\end{proof}

\subsection{Proof of \cref{lem:nn_grad_norm_bounded}}

\begin{proof}
    First, we notice that
    \begin{equation*}
        \frac{\partial {\bm h}^{a}_{L}}{\partial {\bm W}_{l}}
        =
        \bm{D}_l \bm{B}_{l+1}^\T \bm{e}_a
        \cdot
        \bar{\bm{h}}_l^\T
    \end{equation*}
    has rank one. Therefore
    \begin{equation*}
        \left \Vert
        \frac{\partial {\bm h}^{a}_{L}}{\partial {\bm W}_{l}}
        \right \Vert_F
        =
        \Vert \bm{D}_l \bm{B}_{l+1}^\T \bm{e}_a \Vert_2
        \cdot
        \Vert \bar{\bm{h}}_l \Vert_2.
    \end{equation*}
    Therefore
    \begin{align*}
        \left\Vert
        \frac{\partial {\bm h}_{L}}{\partial {\bm W}_{l}}
        \right\Vert_F
        &=
        \sqrt{ \sum_{a}
         \left \Vert
        \frac{\partial {\bm h}^{a}_{L}}{\partial {\bm W}_{l}}
        \right \Vert_F^2 } \nonumber\\
        &=
        \sqrt{
            \sum_{a}
        \Vert \bm{D}_l \bm{B}_{l+1}^\T \bm{e}_a \Vert_2^2
        \cdot
        \Vert \bar{\bm{h}}_l \Vert^2_2
        } \nonumber\\
        &=
        \sqrt{ \sum_{a}
        \Vert \bm{D}_l \bm{B}_{l+1}^\T \bm{e}_a \Vert_2^2}
        \cdot
        \Vert \bar{\bm{h}}_l \Vert_2\nonumber\\
        &=
        \Vert \bm{D}_l \bm{B}_{l+1}^\T \Vert_F
        \cdot
        \Vert \bar{\bm{h}}_l \Vert_2\nonumber\\
        &=
        \Vert \bm{B}_{l+1} \bm{D}_l \Vert_F
        \cdot
        \Vert \bar{\bm{h}}_l \Vert_2.
    \end{align*}

    The matrix $\bm{D}_l$ is diagonal, with entries bounded in the range
    $[-1,1]$.  Therefore a left or right multiplication by $\bm{D}_l$ does not
    increase the Frobenius norm. Hence
    \begin{equation*}
        \left\Vert
        \frac{\partial {\bm h}_{L}}{\partial {\bm W}_{l}}
        \right\Vert_F
        \le
        \Vert \bm{B}_{l+1} \Vert_F
        \cdot
        \Vert \bar{\bm{h}}_l \Vert_2.
    \end{equation*}
    By the recursive definition of $\bm{B}_l$, we have
    \begin{equation*}
        \bm{B}_{l}  = \bm{B}_{l+1} \bm{D}_{l} \bm{W}^{-}_{l}.
    \end{equation*}
    Therefore
    \begin{equation*}
        \Vert \bm{B}_{l} \Vert_F
        \le
        \Vert \bm{B}_{l+1}  \Vert_F
        \cdot
        \Vert \bm{D}_{l} \bm{W}^{-}_{l} \Vert_F
        \le
        \Vert \bm{B}_{l+1}  \Vert_F
        \cdot
        \Vert \bm{W}^{-}_{l} \Vert_F.
    \end{equation*}
    Applying the above inequality repeatably leads to
    \begin{align*}
        \Vert \bm{B}_l \Vert_F
        &\le
        \Vert \bm{B}_{L-1} \Vert_F
        \cdot
        \prod_{i=l}^{L-2}
        \Vert \bm{W}^{-}_{i} \Vert_F\nonumber\\
        &=
        \Vert \bm{B}_{L} \bm{D}_{L-1} \bm{W}_{L-1}^- \Vert_F
        \cdot
        \prod_{i=l}^{L-2}
        \Vert \bm{W}^{-}_{i} \Vert_F\nonumber\\
        &=
        \Vert \bm{W}_{L-1}^- \Vert_F
        \cdot
        \prod_{i=l}^{L-2}
        \Vert \bm{W}^{-}_{i} \Vert_F\nonumber\\
        &=
        \prod_{i=l}^{L-1}
        \Vert \bm{W}^{-}_{i} \Vert_F.
    \end{align*}
    Hence
    \begin{align*}
        \left\Vert
        \frac{\partial {\bm h}_{L}}{\partial {\bm W}_{l}}
        \right\Vert_F
        &
        \le
        \Vert \bm{B}_{l+1} \Vert_F
        \cdot
        \Vert \bar{\bm{h}}_l \Vert_2 \nonumber\\
        &
        \le
        \prod_{i=l+1}^{L-1}
        \Vert \bm{W}^{-}_{i} \Vert_F
        \cdot
        \Vert \bar{\bm{h}}_l \Vert_2.
    \end{align*}
\end{proof}

\subsection{Proof of \cref{lem:nn_grad_norm_bounded2}}

\begin{proof}
    Recall the 2-spectral norm for a $d$-dimensional tensor is defined by
\begin{equation*}
        \Vert \mathcal{T} \Vert_{2_{\sigma}}
        =
        \max
        \left\{
            \langle
            \mathcal{T}, \bm x^{1} \otimes \ldots \otimes \bm x^{d}
            \rangle
        : \Vert \bm x^{k} \Vert_{2}=1,~\forall{}k\in[d]
        \right\}.
    \end{equation*}
    Let $\bm\alpha$, $\bm\beta$, $\bm\gamma$ be unit vectors,
    so that
    \begin{equation*}
    \Vert \bm\alpha \Vert_2 = 1,\quad
    \Vert \bm\beta \Vert_2 = 1,\quad
    \Vert \bm\gamma \Vert_2 = 1.
    \end{equation*}
    Then
    \begin{align*}
    \max_{\bm\alpha,\bm\beta,\bm\gamma}
    \left\langle
        \frac{\partial \bm{h}_L}{\partial \bm W_{l}},
        \bm\alpha \otimes \bm\beta \otimes \bm\gamma
    \right\rangle
    &=
    \max_{\bm\alpha,\bm\beta,\bm\gamma}
    \left\langle
        \sum_{a}
    \frac{\partial \bm h^a_L}{\partial \bm W_{l}} \gamma_a,
    \bm\alpha \otimes \bm\beta
    \right\rangle
    \nonumber\\
    &=
    \max_{\bm\alpha,\bm\beta,\bm\gamma}
    \sum_{a}
    \bm\alpha^\T \bm{D}_l \bm{B}_{l+1}^\T
    \bm{e}_a
    \bar{\bm{h}}_l^\T \bm\beta
    \gamma_a
    \nonumber\\
    &=
    \max_{\bm\alpha,\bm\beta,\bm\gamma}
    \bm\alpha^\T \bm{D}_l \bm{B}_{l+1}^\T
    ( \sum_a \gamma_a \bm{e}_a )
    ( \bar{\bm{h}}_l^\T \bm\beta ) \nonumber\\
    &
    =
    \max_{\bm\alpha,\bm\beta,\bm\gamma}
    \bm\alpha^\T \bm{D}_l \bm{B}_{l+1}^\T \bm\gamma
    ( \bar{\bm{h}}_l^\T \bm\beta ) \nonumber\\
    &=
    \max_{\bm\alpha,\bm\beta,\bm\gamma}
    \langle \bm{B}_{l+1} \bm{D}_l \bm\alpha, \bm\gamma\rangle
    \cdot
    \langle \bar{\bm{h}}_l, \bm\beta \rangle
    \nonumber\\
    &
    =
    \max_{\bm\alpha}
    \Vert \bm{B}_{l+1} \bm{D}_l \bm\alpha \Vert_2
    \cdot
    \Vert \bar{\bm{h}}_l \Vert_2.
    \end{align*}

    Recall that
    \begin{equation*}
    \bm{B}_{l}  = \bm{B}_{l+1} \bm{D}_{l} \bm{W}^{-}_{l}.
    \end{equation*}
    Therefore
    \begin{equation*}
        \bm{B}_{l}\bm{D}_{l-1}\bm\alpha
        =
        \bm{B}_{l+1} \bm{D}_{l} \bm{W}^{-}_{l} \bm{D}_{l-1} \bm\alpha.
    \end{equation*}
    Moreover,
    \begin{align*}
    \max_{\bm\alpha}
    \Vert \bm{B}_{l}\bm{D}_{l-1}\bm\alpha \Vert_2
    &=
    \max_{\bm\alpha}
    \Vert \bm{B}_{l+1} \bm{D}_{l}
    \cdot \bm{W}^{-}_{l} \bm{D}_{l-1} \bm\alpha \Vert_2
    \nonumber\\
    &\le
    \max_{\bm\alpha^{\prime}}
    s_{\max}( \bm{W}^{-}_{l} )
    \Vert \bm{B}_{l+1} \bm{D}_{l} \bm\alpha^{\prime} \Vert_2
    \nonumber\\
    &=
    s_{\max}( \bm{W}^{-}_{l} )
    \max_{\bm\alpha^{\prime}}
    \Vert \bm{B}_{l+1} \bm{D}_{l} \bm\alpha^{\prime} \Vert_2.
    \end{align*}
    Hence,
    \begin{align*}
    \max_{\bm\alpha,\bm\beta,\bm\gamma}
    \left\langle
        \frac{\partial \bm h_L}{\partial \bm W_{l}},
        \bm\alpha \otimes \bm\beta \otimes \bm\gamma
    \right\rangle
    &=
    \max_{\bm\alpha}
    \Vert \bm{B}_{l+1} \bm{D}_l \bm\alpha \Vert_2
    \cdot
    \Vert \bar{\bm{h}}_l \Vert_2 \nonumber\\
    &
    \le
    \prod_{i=l+1}^{L-1} s_{\max}( \bm{W}^{-}_{i} )
    \max_{\bm\alpha^{\prime}}
    \Vert
    \bm{B}_{L}\bm{D}_{L-1} \bm\alpha^{\prime}
    \Vert_2
    \cdot
    \Vert \bar{\bm{h}}_l \Vert_2 \nonumber\\
    &=
    \prod_{i=l+1}^{L-1} s_{\max}( \bm{W}^{-}_{i} )
    \max_{\bm\alpha}
    \Vert
    \bm{B}_{L}\bm{D}_{L-1} \bm\alpha
    \Vert_2
    \cdot
    \Vert \bar{\bm{h}}_l \Vert_2 \nonumber\\
    &=
    \prod_{i=l+1}^{L-1} s_{\max}( \bm{W}^{-}_{i} )
    \max_{\bm\alpha}
    \Vert
    \bm{I} \bm{I} \bm\alpha
    \Vert_2
    \cdot
    \Vert \bar{\bm{h}}_l \Vert_2 \nonumber\\
    &=
    \prod_{i=l+1}^{L-1} s_{\max}( \bm{W}^{-}_{i} )
    \max_{\bm\alpha}
    \Vert
    \bm\alpha
    \Vert_2
    \cdot
    \Vert \bar{\bm{h}}_l \Vert_2 \nonumber\\
    &=
    \prod_{i=l+1}^{L-1} s_{\max}( \bm{W}^{-}_{i} )
    \cdot
    \Vert \bar{\bm{h}}_l \Vert_2.
    \end{align*}
\end{proof}

\section{Change of Coordinates and Covariance}
\label{sup:changecoordinates}

Reparametrization is a common technique in deep learning. See \eg weight
normalization~\citep{weightnorm}. From a geometric perspective,
reparametrization corresponds to change of coordinates.
We consider how our results can be generalized in a new coordinate system
\( \{ \bm \xi_{i} \} \) instead of the default coordinates \( \{ \bm \theta_{\alpha} \} \).
By definition, the FIM \wrt to the new coordinates is
\begin{equation}\label{eq:fim_xi}
    \fim(\bm \xi) =
    \frac{\partial \bm \theta_{a}}{\partial \bm \xi}
    \frac{\partial\bm{h}_L^\alpha}{\partial\bm\theta^{a}} \fim_{\alpha \beta}(\bm{h}_L) \frac{\partial\bm{h}_L^\beta}{\partial\bm\theta^{b}}
    \frac{\partial \bm \theta_{b}}{\partial \bm \xi^{\T}}
    =
    \frac{\partial \bm \theta_{a}}{\partial \bm \xi}
    \fim^{ab} (\bm \theta)
    \frac{\partial \bm \theta_{b}}{\partial \bm \xi^{\T}}.
\end{equation}
The FIM $\fim(\bm\xi)$ can be estimated by \cref{eq:estimators}, where
the estimators are denoted by
\( \efima(\bm \xi) \) and \( \efimb(\bm \xi) \).

Note that as per the main text, upper- and lower-indices in the Einstein are equivalent. Similarly, we take an upper index for derivatives of \( \bm h_{L} \) \wrt \( \bm \theta \) in this section --- which we take as equivalent to the lower-index notation appearing in the main text. That is \( \partial^{\beta} \bm h_{L} = {\partial \bm h_{L}} / {\partial \bm \theta^{\beta}} = {\partial \bm h_{L}} / {\partial \bm \theta_{\beta}} \).

\begin{theorem} \label{thm:est_xi}
Consider the FIM estimators under the coordinate transformation $\bm\theta\to\bm{\xi}$ with the same
    samples size $N$.
    \begin{align}
        \efima(\bm \xi) &=
        \frac{\partial \bm \theta_{a}}{\partial \bm \xi}
        \efima^{ab}(\bm \theta)
        \frac{\partial \bm \theta_{b}}{\partial \bm \xi^{\T}},  \label{eq:efimaxi_var} \\
\efimb(\bm \xi) &=
        \frac{\partial \bm \theta_{a}}{\partial \bm \xi}
        \efimb^{ab}(\bm \theta)
        \frac{\partial \bm \theta_{b}}{\partial \bm \xi^{\T}}
        + \left(\bm \eta_{\alpha} - \frac{1}{N} \sum_{i=1}^{N} \bm t_{\alpha}(\bm y_{i}) \right)
\partial^{\beta} \bm h_{L}^{\alpha}
        \frac{\partial^{2} \bm \theta_{\beta}}{\partial \bm \xi \partial \bm \xi^{\T}}.
        \label{eq:efimbxi_var}
\end{align}
\end{theorem}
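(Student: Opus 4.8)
The plan is to differentiate the log-likelihood in the new coordinates by the chain rule and then average over the $N$ samples, reusing the closed-form score in \cref{eq:gradl}. Write $\bm\theta = \bm\theta(\bm\xi)$ for the reparametrization and $\ell_i \defeq \log p(\bm y_i \mid \bm x, \bm\theta(\bm\xi))$, so that $\efima(\bm\xi)$ and $\efimb(\bm\xi)$ are defined by \cref{eq:estimators} with $\bm\theta$ replaced by $\bm\xi$.

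\emph{First estimator.} By the chain rule $\partial\ell_i/\partial\bm\xi = (\partial\bm\theta_a/\partial\bm\xi)(\partial\ell_i/\partial\bm\theta_a)$, so the rank-one outer product $(\partial\ell_i/\partial\bm\xi)(\partial\ell_i/\partial\bm\xi^\T)$ factors as $(\partial\bm\theta_a/\partial\bm\xi)(\partial\ell_i/\partial\bm\theta_a)(\partial\ell_i/\partial\bm\theta_b)(\partial\bm\theta_b/\partial\bm\xi^\T)$. The Jacobian $\partial\bm\theta/\partial\bm\xi$ is independent of $\bm y_i$, so averaging over $i = 1,\dots,N$ pulls it outside the sum and gives $\efima(\bm\xi) = (\partial\bm\theta_a/\partial\bm\xi)\,\efima^{ab}(\bm\theta)\,(\partial\bm\theta_b/\partial\bm\xi^\T)$, which is \cref{eq:efimaxi_var}. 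This is just the fact that $\efima$, being assembled from score vectors (covariant $1$-tensors), transforms as a covariant $2$-tensor, exactly like the true FIM in \cref{eq:fim_xi}.

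\emph{Second estimator.} Differentiating once more, $\partial^2\ell_i/\partial\bm\xi\partial\bm\xi^\T = (\partial\bm\theta_a/\partial\bm\xi)(\partial^2\ell_i/\partial\bm\theta_a\partial\bm\theta_b)(\partial\bm\theta_b/\partial\bm\xi^\T) + (\partial^2\bm\theta_a/\partial\bm\xi\partial\bm\xi^\T)(\partial\ell_i/\partial\bm\theta_a)$. Negating and averaging over $i$: the first term averages to $(\partial\bm\theta_a/\partial\bm\xi)\,\efimb^{ab}(\bm\theta)\,(\partial\bm\theta_b/\partial\bm\xi^\T)$ as before, while the second produces $-(\partial^2\bm\theta_a/\partial\bm\xi\partial\bm\xi^\T)\cdot\tfrac{1}{N}\sum_{i=1}^N\partial\ell_i/\partial\bm\theta_a$. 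By \cref{eq:gradl}, $\partial\ell_i/\partial\bm\theta_a = (\partial\bm h_L^\alpha/\partial\bm\theta_a)(\bm t_\alpha(\bm y_i) - \bm\eta_\alpha)$, hence $\tfrac{1}{N}\sum_i\partial\ell_i/\partial\bm\theta_a = (\partial\bm h_L^\alpha/\partial\bm\theta_a)\big(\tfrac{1}{N}\sum_i\bm t_\alpha(\bm y_i) - \bm\eta_\alpha\big)$. Substituting and renaming the contracted index $a$ as $\beta$ turns the residual into $\big(\bm\eta_\alpha - \tfrac{1}{N}\sum_i\bm t_\alpha(\bm y_i)\big)\,\partial^\beta\bm h_L^\alpha\,(\partial^2\bm\theta_\beta/\partial\bm\xi\partial\bm\xi^\T)$, giving \cref{eq:efimbxi_var}.

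There is no genuine difficulty here; the work is entirely bookkeeping. The points that need care are (i) the index conventions of this appendix, where upper and lower indices are identified and $\partial^\beta\bm h_L = \partial\bm h_L/\partial\bm\theta_\beta$, and (ii) the conceptual observation that, unlike for $\fim(\bm\theta)$ in \cref{eq:fim_xi} where the analogous extra term has zero expectation and so disappears, the finite-sample residual $\bm\eta_\alpha - \tfrac{1}{N}\sum_i\bm t_\alpha(\bm y_i)$ does not vanish. Consequently $\efimb$ fails to be a covariant tensor; one may further note that this extra term is precisely the bias term of \cref{eq:efimb} contracted against the reparametrization Hessian $\partial^2\bm\theta/\partial\bm\xi\partial\bm\xi^\T$.
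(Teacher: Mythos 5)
Your proposal is correct and follows essentially the same route as the paper's proof: apply the chain rule to the first and second derivatives of $\ell$ under $\bm\theta\to\bm\xi$, average over the $N$ samples, and identify the leftover term via \cref{eq:gradl}. The only cosmetic difference is that you keep $\partial^2\ell/\partial\bm\theta\partial\bm\theta^\T$ abstract and invoke the score formula only for the residual, whereas the paper expands everything in exponential-family form before taking the empirical expectation; the argument and conclusion are the same.
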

\begin{proof}
    Let \( \hat{\expect}[X] \) denote the empirical expectation of random variable \( X \).

    For the first estimator, consider the partial derivative of the log-likelihood:
\begin{align*}
        \frac{\partial \ell}{\partial \bm \xi}
        &= \left( \frac{\partial \bm \theta}{\partial \bm \xi} \right)^{\T} \left( \frac{\partial \ell}{\partial \bm \theta} \right) \\
        &= \frac{\partial \bm \theta_{a}}{\partial \bm \xi} \frac{\partial \bm h_{L}^{\alpha}}{\partial \bm \theta^{a}} (\bm t_{\alpha}(\bm y) - \bm \eta_{\alpha}).
    \end{align*}
    Thus the first FIM estimator follows immediately:
\begin{align*}
        \hat{\expect}\left[
        \frac{\partial \ell}{\partial \bm \xi}
        \frac{\partial \ell}{\partial \bm \xi^{T}}
        \right]
        &=
        \hat{\expect}\left[
            \frac{\partial \bm \theta_{a}}{\partial \bm \xi} \frac{\partial \bm h_{L}^{\alpha}}{\partial \bm \theta^{a}} (\bm t_{\alpha}(\bm y) - \bm \eta_{\alpha})
            \frac{\partial \bm \theta_{b}}{\partial \bm \xi^{\T}} \frac{\partial \bm h_{L}^{\beta}}{\partial \bm \theta^{b}} (\bm t_{\beta}(\bm y) - \bm \eta_{\beta})
        \right] \\
        &=
        \frac{\partial \bm \theta_{a}}{\partial \bm \xi}
        \hat{\expect}\left[
            \frac{\partial \bm h_{L}^{\alpha}}{\partial \bm \theta^{a}} (\bm t_{\alpha}(\bm y) - \bm \eta_{\alpha})
            \frac{\partial \bm h_{L}^{\beta}}{\partial \bm \theta^{b}} (\bm t_{\beta}(\bm y) - \bm \eta_{\beta})
        \right]
        \frac{\partial \bm \theta_{b}}{\partial \bm \xi^{\T}}
        \\
        &=
        \frac{\partial \bm \theta_{a}}{\partial \bm \xi}
        [\efima(\bm \theta)]^{ab}
        \frac{\partial \bm \theta_{b}}{\partial \bm \xi^{\T}}.
    \end{align*}

    For the second estimator, we consider the second derivative:
\begin{align*}
        \frac{\partial^{2} \ell}{\partial \bm \xi \partial \bm \xi^{\T}}
        &= \frac{\partial}{\partial \bm \xi} \left[
            \frac{\partial \bm \theta_{a}}{\partial \bm \xi^{\T}} \frac{\partial \bm h_{L}^{\alpha}}{\partial \bm \theta^{a}} (\bm t_{\alpha}(\bm y) - \bm \eta_{\alpha})
         \right] \\
        &= - \frac{\partial \bm \eta_{\alpha}}{\partial \bm \xi} \frac{\partial \bm h_{L}^{\alpha}}{\partial \bm \theta^{a}} \frac{\partial \bm \theta_{a}}{\partial \bm \xi^{\T}}
        + (\bm t_{\alpha} - \bm \eta_{\alpha}) \left[ \frac{\partial}{\partial \bm \xi} \left( \frac{\partial \bm h_{L}^{\alpha}}{\partial \bm \theta^{a}} \right) \right] \frac{\partial \bm \theta_{a}}{\partial \bm \xi^{\T}}
        + (\bm t_{\alpha} - \bm \eta_{\alpha}) \frac{\partial \bm h_{L}^{\alpha}}{\partial \bm \theta^{a}} \frac{\partial^{2} \bm \theta_{a}}{\partial \bm \xi \partial \bm \xi^{\T}} \\
        &= - \frac{\partial \bm \theta_{b}}{\partial \bm \xi}
        \left[
            \frac{\partial [\bm h_{L}]_{\beta}}{\partial \bm \theta^{b}}
            \frac{\partial \bm \eta_{\alpha}}{\partial \bm h_{L}^{\beta}} \frac{\partial \bm h_{L}^{\alpha}}{\partial \bm \theta^{a}}
            - (\bm t_{\alpha} - \bm \eta_{\alpha}) \frac{\partial^{2} \bm h_{L}^{\alpha}}{\partial \bm \theta^{b} \partial \bm \theta^{a}}
        \right] \frac{\partial \bm \theta_{a}}{\partial \bm \xi^{\T}}
        + (\bm t_{\alpha} - \bm \eta_{\alpha}) \frac{\partial \bm h_{L}^{\alpha}}{\partial \bm \theta^{a}} \frac{\partial^{2} \bm \theta_{a}}{\partial \bm \xi \partial \bm \xi^{\T}}.
    \end{align*}
    Taking the empirical expectation we have:
    \begin{align*}
        &\hat{\expect}\left[-\frac{\partial^{2} \ell}{\partial \bm \xi \partial \bm \xi^{\T}}\right] \\
        &=
        \hat{\expect} \left[
        \frac{\partial \bm \theta_{b}}{\partial \bm \xi}
        \left[
            \frac{\partial [\bm h_{L}]_{\beta}}{\partial \bm \theta^{b}}
            \frac{\partial \bm \eta_{\alpha}}{\partial \bm h_{L}^{\beta}} \frac{\partial \bm h_{L}^{\alpha}}{\partial \bm \theta^{a}}
            - (\bm t_{\alpha} - \bm \eta_{\alpha}) \frac{\partial^{2} \bm h_{L}^{\alpha}}{\partial \bm \theta^{b} \partial \bm \theta^{a}}
        \right] \frac{\partial \bm \theta_{a}}{\partial \bm \xi^{\T}}
        - (\bm t_{\alpha} - \bm \eta_{\alpha}) \frac{\partial \bm h_{L}^{\alpha}}{\partial \bm \theta^{a}} \frac{\partial^{2} \bm \theta_{a}}{\partial \bm \xi \partial \bm \xi^{\T}} \right] \\
        &=
        \frac{\partial \bm \theta_{b}}{\partial \bm \xi}
        \hat{\expect}
        \left[
            \frac{\partial [\bm h_{L}]_{\beta}}{\partial \bm \theta^{b}}
            \frac{\partial \bm \eta_{\alpha}}{\partial \bm h_{L}^{\beta}} \frac{\partial \bm h_{L}^{\alpha}}{\partial \bm \theta^{a}}
            - (\bm t_{\alpha} - \bm \eta_{\alpha}) \frac{\partial^{2} \bm h_{L}^{\alpha}}{\partial \bm \theta^{b} \partial \bm \theta^{a}}
        \right]
        \frac{\partial \bm \theta_{a}}{\partial \bm \xi^{\T}}
        -
        \hat{\expect} \left[
        (\bm t_{\alpha} - \bm \eta_{\alpha}) \frac{\partial \bm h_{L}^{\alpha}}{\partial \bm \theta^{a}} \frac{\partial^{2} \bm \theta_{a}}{\partial \bm \xi \partial \bm \xi^{\T}} \right] \\
        &=
        \frac{\partial \bm \theta_{b}}{\partial \bm \xi}
        [\efimb(\bm \theta)]^{ba}
        \frac{\partial \bm \theta_{a}}{\partial \bm \xi^{\T}}
        -
        \hat{\expect} \left[
        (\bm t_{\alpha} - \bm \eta_{\alpha}) \frac{\partial \bm h_{L}^{\alpha}}{\partial \bm \theta^{a}} \frac{\partial^{2} \bm \theta_{a}}{\partial \bm \xi \partial \bm \xi^{\T}} \right] \\
        &=
        \frac{\partial \bm \theta_{a}}{\partial \bm \xi}
        [\efimb(\bm \theta)]^{ab}
        \frac{\partial \bm \theta_{b}}{\partial \bm \xi^{\T}}
        +
        \left(\bm \eta_{\alpha} - \frac{1}{N} \sum_{i=1} \bm t_{\alpha}(\bm y_{i}) \right)
        \frac{\partial \bm h_{L}^{\alpha}}{\partial \bm \theta^{a}} \frac{\partial^{2} \bm \theta_{a}}{\partial \bm \xi \partial \bm \xi^{\T}} \\
        &=
        \frac{\partial \bm \theta_{a}}{\partial \bm \xi}
        [\efimb(\bm \theta)]^{ab}
        \frac{\partial \bm \theta_{b}}{\partial \bm \xi^{\T}}
        +
        \left(\bm \eta_{\alpha} - \frac{1}{N} \sum_{i=1} \bm t_{\alpha}(\bm y_{i}) \right)
        \frac{\partial \bm h_{L}^{\alpha}}{\partial \bm \theta^{\beta}} \frac{\partial^{2} \bm \theta_{\beta}}{\partial \bm \xi \partial \bm \xi^{\T}}.
    \end{align*}
    Thus we have both estimators for the theorem.
\end{proof}
Interestingly, the way to compute \( \efima(\bm \xi) \) under coordinate
transformation follows the same rule to compute the FIM $\fim(\bm\xi)$
in the new coordinate system.
See the similarity between \cref{eq:fim_xi} and \cref{eq:efimaxi_var}.
The transformation rule of \( \efimb(\bm \xi) \) introduces an additional term,
which depends on the Hessian of the coordinate transform \( {\partial^{2} \bm \theta_{\beta}} / {\partial \bm \xi \partial \bm \xi^{\T}} \). This term vanishes as the number of samples increases \( N \rightarrow \infty \),
or the transformation $\bm\theta\to\bm\xi$ is affine.

The results we need to generalize for coordinate transformation depend on the (co)variance of the estimators. As such, we present a \( \{ \bm \xi_{i} \}\) variant of \cref{thm:varefima,thm:varefimb}.
\begin{theorem} \label{thm:estvar_xi}
    \begin{align}
        \left[
        \cov
        \left(
            \efima(\bm \xi)
        \right)
        \right]^{ijkl}
&=
        \frac{\partial \bm \theta_{a}}{\partial \bm \xi_{i}}
        \frac{\partial \bm \theta_{b}}{\partial \bm \xi_{j}}
        \frac{\partial \bm \theta_{c}}{\partial \bm \xi_{k}}
        \frac{\partial \bm \theta_{d}}{\partial \bm \xi_{l}}
\left[
        \cov
        \left(
            \efima(\bm \theta)
        \right) \right]^{abcd}
\label{eq:varefimaxi} \\
\left[
        \cov
        \left(
            \efimb(\bm \xi)
        \right)
        \right]^{ijkl}
&=
        \frac{\partial \bm \theta_{a}}{\partial \bm \xi_{i}}
        \frac{\partial \bm \theta_{b}}{\partial \bm \xi_{j}}
        \frac{\partial \bm \theta_{c}}{\partial \bm \xi_{k}}
        \frac{\partial \bm \theta_{d}}{\partial \bm \xi_{l}}
\left[
        \cov
        \left(
            \efimb(\bm \theta)
        \right) \right]^{abcd} + \frac{1}{N}
        \mathcal{C}^{\alpha \beta}
        \fim_{\alpha \beta}(\bm h_{L}),
        \label{eq:varefimbxi}
    \end{align}
    where \( \mathcal{C}^{\alpha \beta} \defeq \)
    \begin{align*}
        \frac{\partial \bm \theta_{a}}{\partial \bm \xi_{i}}
        \frac{\partial \bm \theta_{b}}{\partial \bm \xi_{j}}
        \frac{\partial^{2} \bm \theta_{c}}{\partial \bm \xi_{k} \partial \bm \xi_{l}}
        \partial^{a} \partial^{b} \bm h_{L}^{\alpha}
        \partial^{c} \bm h_{L}^{\beta}
        +
        \frac{\partial^{2} \bm \theta_{a}}{\partial \bm \xi_{i} \partial \bm \xi_{j}}
        \frac{\partial \bm \theta_{b}}{\partial \bm \xi_{k}}
        \frac{\partial \bm \theta_{c}}{\partial \bm \xi_{l}}
        \partial^{a} \bm h_{L}^{\alpha}
        \partial^{b} \partial^{c} \bm h_{L}^{\beta}
            +
        \frac{\partial^{2} \bm \theta_{a}}{\partial \bm \xi_{i} \partial \bm \xi_{j}}
        \frac{\partial^{2} \bm \theta_{b}}{\partial \bm \xi_{k} \partial \bm \xi_{l}}
        \partial^{a} \bm h_{L}^{\alpha}
        \partial^{b} \bm h_{L}^{\beta}.
    \end{align*}
\end{theorem}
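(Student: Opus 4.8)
The plan is to leverage \cref{thm:est_xi}, which already writes $\efima(\bm\xi)$ and $\efimb(\bm\xi)$ in terms of the $\bm\theta$-estimators and the derivatives of the coordinate map $\bm\xi\mapsto\bm\theta$. The key observation is that the only randomness comes from the \iid{} sample $\bm y_1,\ldots,\bm y_N$, entering solely through $\bm t(\bm y_m)$: the Jacobians $\partial\bm\theta_a/\partial\bm\xi_i$ and Hessians $\partial^2\bm\theta_a/\partial\bm\xi_i\partial\bm\xi_j$, the value $\bm h_L(\bm x)$, the matrix $\fim(\bm h_L)$, and every derivative of $\bm\theta\mapsto\bm h_L$ are all deterministic. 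So the recipe is: factor the deterministic tensors out of the covariance, use bilinearity, and for $\efimb$ additionally account for the Hessian-of-the-transform term that \cref{thm:est_xi} introduces.

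For \cref{eq:varefimaxi} this is immediate. By \cref{thm:est_xi}, $[\efima(\bm\xi)]^{ij}=\frac{\partial\bm\theta_a}{\partial\bm\xi_i}\frac{\partial\bm\theta_b}{\partial\bm\xi_j}[\efima(\bm\theta)]^{ab}$; bilinearity of $\cov(\cdot,\cdot)$ and pulling the four deterministic Jacobians outside yields \cref{eq:varefimaxi} directly, with $[\cov(\efima(\bm\theta))]^{abcd}=\cov([\efima(\bm\theta)]^{ab},[\efima(\bm\theta)]^{cd})$. No exponential-family structure is needed here.

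For \cref{eq:varefimbxi} I would first note that $-\partial^2\ell/\partial\bm\xi_i\partial\bm\xi_j$ has the same structure as \cref{eq:hessianl} in the $\bm\xi$-coordinates: it equals a deterministic quadratic form $\partial_i\bm h_L^\alpha\,\fim_{\alpha\beta}(\bm h_L)\,\partial_j\bm h_L^\beta$ (with $\partial$ now $\partial/\partial\bm\xi$) minus $(\bm t_\alpha-\bm\eta_\alpha)\,\partial^2_{ij}\bm h_L^\alpha$ — this is exactly the $\bm t$-linear decomposition that appears halfway through the proof of \cref{thm:est_xi}. Hence the random part of $[\efimb(\bm\xi)]^{ij}$ is $(\bm\eta_\alpha-\frac1N\sum_m\bm t_\alpha(\bm y_m))\,\partial^2_{ij}\bm h_L^\alpha$, and by the same argument as in the proof of \cref{thm:varefimb}, together with $\cov(\bm t_\alpha,\bm t_\beta)=\fim_{\alpha\beta}(\bm h_L)$ from \cref{thm:exp}, I get $[\cov(\efimb(\bm\xi))]^{ijkl}=\frac1N\,\partial^2_{ij}\bm h_L^\alpha\,\partial^2_{kl}\bm h_L^\beta\,\fim_{\alpha\beta}(\bm h_L)$ with all second derivatives relative to $\bm\xi$. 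The last step is the second-order chain rule
\[
\frac{\partial^2\bm h_L^\alpha}{\partial\bm\xi_i\partial\bm\xi_j}
=\frac{\partial\bm\theta_a}{\partial\bm\xi_i}\frac{\partial\bm\theta_b}{\partial\bm\xi_j}\frac{\partial^2\bm h_L^\alpha}{\partial\bm\theta_a\partial\bm\theta_b}
+\frac{\partial\bm h_L^\alpha}{\partial\bm\theta_a}\frac{\partial^2\bm\theta_a}{\partial\bm\xi_i\partial\bm\xi_j},
\]
substituted for both $\partial^2_{ij}\bm h_L^\alpha$ and $\partial^2_{kl}\bm h_L^\beta$: expanding the product gives four terms; the ``$\bm\theta$-Hessian $\times$ $\bm\theta$-Hessian'' one, contracted with $\fim_{\alpha\beta}(\bm h_L)/N$, is $\frac{\partial\bm\theta_a}{\partial\bm\xi_i}\frac{\partial\bm\theta_b}{\partial\bm\xi_j}\frac{\partial\bm\theta_c}{\partial\bm\xi_k}\frac{\partial\bm\theta_d}{\partial\bm\xi_l}[\cov(\efimb(\bm\theta))]^{abcd}$ by \cref{thm:varefimb}, while the three remaining cross/second-order terms are exactly the three summands defining $\mathcal C^{\alpha\beta}$.

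The main obstacle is bookkeeping rather than ideas: one has to verify that the non-$\bm t$-linear part of $\partial^2\ell/\partial\bm\xi\partial\bm\xi^\T$ really is deterministic (so it drops out of the covariance), and that after the chain-rule expansion the three leftover terms match the summands of $\mathcal C^{\alpha\beta}$ with the repeated indices $a,b,c$ attached to the correct factor ($\partial\bm h_L$ versus $\partial^2\bm h_L$) and the right pairing of $\bm\xi$-indices. An alternative that sidesteps re-deriving the Hessian split is to start straight from the explicit formula \cref{eq:efimbxi_var}, write $\efimb(\bm\xi)$ as a deterministic FIM-like term plus $\left(\bm\eta_\alpha-\frac1N\sum_m\bm t_\alpha(\bm y_m)\right)$ times $\bigl(\frac{\partial\bm\theta_a}{\partial\bm\xi_i}\frac{\partial\bm\theta_b}{\partial\bm\xi_j}\partial^a\partial^b\bm h_L^\alpha+\partial^\beta\bm h_L^\alpha\frac{\partial^2\bm\theta_\beta}{\partial\bm\xi_i\partial\bm\xi_j}\bigr)$, and compute $\cov$ of this linear-in-$\bm t$ random variable directly; the two routes produce the same four-term expansion.
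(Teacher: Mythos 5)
Your proposal is correct. For \cref{eq:varefimaxi} you argue exactly as the paper does: pull the deterministic Jacobians of $\bm\xi\mapsto\bm\theta$ out of the covariance via bilinearity, starting from \cref{eq:efimaxi_var}. For \cref{eq:varefimbxi}, however, your primary route is genuinely different from the paper's. The paper starts from the decomposition in \cref{eq:efimbxi_var} (transformed $\efimb(\bm\theta)$ plus the $\bm t$-linear correction term) and expands $\cov$ of the sum into four pieces, computing the two cross-covariances and the self-covariance of the correction term one by one; your alternative route in the last sentence of the proposal is essentially this computation. Your main route instead observes that \cref{thm:varefimb} is coordinate-agnostic — its proof only uses the exponential-family output and the linear-in-$\bm t$ structure of the Hessian of $\ell$, so it applies verbatim with $\bm\xi$ in place of $\bm\theta$, giving $[\cov(\efimb(\bm\xi))]^{ijkl}=\frac{1}{N}\,\partial^2_{ij}\bm h_L^\alpha\,\partial^2_{kl}\bm h_L^\beta\,\fim_{\alpha\beta}(\bm h_L)$ with $\bm\xi$-derivatives — and then applies the second-order chain rule to each $\bm\xi$-Hessian of $\bm h_L$. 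Expanding the resulting product yields the four terms: the Hessian-times-Hessian term reproduces the tensorially transformed $\cov(\efimb(\bm\theta))$ via \cref{thm:varefimb}, and the three mixed terms are precisely the summands of $\mathcal{C}^{\alpha\beta}$ contracted with $\frac{1}{N}\fim_{\alpha\beta}(\bm h_L)$; I verified the index bookkeeping and it matches the statement. What your route buys is economy — no separate cross-covariance computations, and the single deterministic identity (the chain rule) makes transparent why the extra term depends only on the Hessian of the coordinate change and vanishes for affine reparametrizations; what the paper's route buys is that it reuses the explicit estimator formula of \cref{thm:est_xi} and makes visible which random quantity ($\bm\eta_\alpha-\frac{1}{N}\sum_i\bm t_\alpha(\bm y_i)$) drives each contribution. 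The only implicit hypothesis in your argument, shared with the paper, is that $\bm\xi\mapsto\bm\theta$ is twice differentiable so that the $\bm\xi$-analogue of \cref{eq:hessianl} and the chain rule are valid.
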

\begin{proof}
    For the first estimator's covariance, immediately get the result by the way covariance interacts with constant products:
\begin{align*}
        \cov\left([\efima(\bm \xi)]^{ij}, [\efima(\bm \xi)]^{kl}\right)
        &=
        \cov\left(
        \frac{\partial \bm \theta_{a}}{\partial \bm \xi_{i}}
        [\efima(\bm \theta)]^{ab}
        \frac{\partial \bm \theta_{b}}{\partial \bm \xi_{j}}
        ,
        \frac{\partial \bm \theta_{c}}{\partial \bm \xi_{k}}
        [\efima(\bm \theta)]^{cd}
        \frac{\partial \bm \theta_{d}}{\partial \bm \xi_{l}}
        \right) \\
        &=
        \frac{\partial \bm \theta_{a}}{\partial \bm \xi_{i}}
        \frac{\partial \bm \theta_{b}}{\partial \bm \xi_{j}}
        \frac{\partial \bm \theta_{c}}{\partial \bm \xi_{k}}
        \frac{\partial \bm \theta_{d}}{\partial \bm \xi_{l}}
        \cov\left(
        [\efima(\bm \theta)]^{ab}
        ,
        [\efima(\bm \theta)]^{cd}
        \right) \\
    \end{align*}

    For the second estimator, we must exploit the linear combination property of covariances as well:
\begin{align*}
        &\cov\left([\efimb(\bm \xi)]^{ij}, [\efimb(\bm \xi)]^{kl}\right) \\
        &=
        \cov \left(
        \frac{\partial \bm \theta_{a}}{\partial \bm \xi_{i}}
        [\efimb(\bm \theta)]^{ab}
        \frac{\partial \bm \theta_{b}}{\partial \bm \xi_{j}}
        +
        \left(\bm \eta_{\alpha} - \frac{1}{N} \sum_{i=1} \bm t_{\alpha}(\bm y_{i}) \right)
        \frac{\partial \bm h_{L}^{\alpha}}{\partial \bm \theta^{a}} \frac{\partial^{2} \bm \theta_{a}}{\partial \bm \xi_{i} \partial \bm \xi_{j}}
        ,
        \right. \\
        & \quad \quad \quad \quad \quad \quad \quad \quad \left.
        \frac{\partial \bm \theta_{c}}{\partial \bm \xi_{k}}
        [\efimb(\bm \theta)]^{cd}
        \frac{\partial \bm \theta_{d}}{\partial \bm \xi_{l}}
        +
        \left(\bm \eta_{\beta} - \frac{1}{N} \sum_{i=1} \bm t_{\beta}(\bm y_{i}) \right)
        \frac{\partial \bm h_{L}^{\beta}}{\partial \bm \theta^{b}} \frac{\partial^{2} \bm \theta_{b}}{\partial \bm \xi_{k} \partial \bm \xi_{l}}
        \right) \\
        &= \cov \left(
        \frac{\partial \bm \theta_{a}}{\partial \bm \xi_{i}}
        [\efimb(\bm \theta)]^{ab}
        \frac{\partial \bm \theta_{b}}{\partial \bm \xi_{j}}
        ,
        \frac{\partial \bm \theta_{c}}{\partial \bm \xi_{k}}
        [\efimb(\bm \theta)]^{cd}
        \frac{\partial \bm \theta_{d}}{\partial \bm \xi_{l}}
        \right) \\
        & \quad \quad \quad \quad
        +
        \cov \left(
        \frac{\partial \bm \theta_{a}}{\partial \bm \xi_{i}}
        [\efimb(\bm \theta)]^{ab}
        \frac{\partial \bm \theta_{b}}{\partial \bm \xi_{j}}
        ,
        \left(\bm \eta_{\beta} - \frac{1}{N} \sum_{i=1} \bm t_{\beta}(\bm y_{i}) \right)
        \frac{\partial \bm h_{L}^{\beta}}{\partial \bm \theta^{b}} \frac{\partial^{2} \bm \theta_{b}}{\partial \bm \xi_{k} \partial \bm \xi_{l}}
        \right) \\
        & \quad \quad \quad \quad
        +
        \cov \left(
        \left(\bm \eta_{\alpha} - \frac{1}{N} \sum_{i=1} \bm t_{\alpha}(\bm y_{i}) \right)
        \frac{\partial \bm h_{L}^{\alpha}}{\partial \bm \theta^{a}} \frac{\partial^{2} \bm \theta_{a}}{\partial \bm \xi_{i} \partial \bm \xi_{j}}
        ,
        \frac{\partial \bm \theta_{c}}{\partial \bm \xi_{k}}
        [\efimb(\bm \theta)]^{cd}
        \frac{\partial \bm \theta_{d}}{\partial \bm \xi_{l}}
        \right) \\
        & \quad \quad \quad \quad
        +
        \cov \left(
        \left(\bm \eta_{\alpha} - \frac{1}{N} \sum_{i=1} \bm t_{\alpha}(\bm y_{i}) \right)
        \frac{\partial \bm h_{L}^{\alpha}}{\partial \bm \theta^{a}} \frac{\partial^{2} \bm \theta_{a}}{\partial \bm \xi_{i} \partial \bm \xi_{j}}
        ,
        \left(\bm \eta_{\beta} - \frac{1}{N} \sum_{i=1} \bm t_{\beta}(\bm y_{i}) \right)
        \frac{\partial \bm h_{L}^{\beta}}{\partial \bm \theta_{b}} \frac{\partial^{2} \bm \theta_{b}}{\partial \bm \xi_{k} \partial \bm \xi_{l}}
        \right).
    \end{align*}
    It follows that the first covariance term is exactly the coordinate transform of the original variance:
\begin{align*}
        \cov \left(
        \frac{\partial \bm \theta_{a}}{\partial \bm \xi_{i}}
        [\efimb(\bm \theta)]^{ab}
        \frac{\partial \bm \theta_{b}}{\partial \bm \xi_{j}}
        ,
        \frac{\partial \bm \theta_{c}}{\partial \bm \xi_{k}}
        [\efimb(\bm \theta)]^{cd}
        \frac{\partial \bm \theta_{d}}{\partial \bm \xi_{l}}
        \right)
        =
        \frac{\partial \bm \theta_{a}}{\partial \bm \xi_{i}}
        \frac{\partial \bm \theta_{b}}{\partial \bm \xi_{j}}
        \frac{\partial \bm \theta_{c}}{\partial \bm \xi_{k}}
        \frac{\partial \bm \theta_{d}}{\partial \bm \xi_{l}}
        \cov \left(
        [\efimb(\bm \theta)]^{ab}
        ,
        [\efimb(\bm \theta)]^{cd}
        \right).
    \end{align*}

    For the final covariance term we have:
\begin{align*}
        &\cov \left(
        \left(\bm \eta_{\alpha} - \frac{1}{N} \sum_{i=1} \bm t_{\alpha}(\bm y_{i}) \right)
        \frac{\partial \bm h_{L}^{\alpha}}{\partial \bm \theta^{a}} \frac{\partial^{2} \bm \theta_{a}}{\partial \bm \xi_{i} \partial \bm \xi_{j}}
        ,
        \left(\bm \eta_{\beta} - \frac{1}{N} \sum_{i=1} \bm t_{\beta}(\bm y_{i}) \right)
        \frac{\partial \bm h_{L}^{\beta}}{\partial \bm \theta^{b}} \frac{\partial^{2} \bm \theta_{b}}{\partial \bm \xi_{k} \partial \bm \xi_{l}}
        \right) \\
        &=
        \frac{\partial^{2} \bm \theta_{a}}{\partial \bm \xi_{i} \partial \bm \xi_{j}}
        \frac{\partial^{2} \bm \theta_{b}}{\partial \bm \xi_{k} \partial \bm \xi_{l}}
        \cov \left(
        \left(\bm \eta_{\alpha} - \frac{1}{N} \sum_{i=1} \bm t_{\alpha}(\bm y_{i}) \right)
        \frac{\partial \bm h_{L}^{\alpha}}{\partial \bm \theta^{a}}
        ,
        \left(\bm \eta_{\beta} - \frac{1}{N} \sum_{i=1} \bm t_{\beta}(\bm y_{i}) \right)
        \frac{\partial \bm h_{L}^{\beta}}{\partial \bm \theta^{b}}
        \right) \\
        &=
        \frac{\partial^{2} \bm \theta_{a}}{\partial \bm \xi_{i} \partial \bm \xi_{j}}
        \frac{\partial^{2} \bm \theta_{b}}{\partial \bm \xi_{k} \partial \bm \xi_{l}}
        \frac{\partial \bm h_{L}^{\alpha}}{\partial \bm \theta^{a}}
        \frac{\partial \bm h_{L}^{\beta}}{\partial \bm \theta^{b}}
        \cov \left(
        \left(\bm \eta_{\alpha} - \frac{1}{N} \sum_{i=1} \bm t_{\alpha}(\bm y_{i}) \right)
        ,
        \left(\bm \eta_{\beta} - \frac{1}{N} \sum_{i=1} \bm t_{\beta}(\bm y_{i}) \right)
        \right) \\
        &=
        \frac{\partial^{2} \bm \theta_{a}}{\partial \bm \xi_{i} \partial \bm \xi_{j}}
        \frac{\partial^{2} \bm \theta_{b}}{\partial \bm \xi_{k} \partial \bm \xi_{l}}
        \frac{\partial \bm h_{L}^{\alpha}}{\partial \bm \theta^{a}}
        \frac{\partial \bm h_{L}^{\beta}}{\partial \bm \theta^{b}}
        \frac{1}{N} \sum_{i=1}
        \cov \left(
        \left(\bm \eta_{\alpha} - \bm t_{\alpha}(\bm y_{i}) \right)
        ,
        \left(\bm \eta_{\beta} - \bm t_{\beta}(\bm y_{i}) \right)
        \right) \\
        &=
        \frac{\partial^{2} \bm \theta_{a}}{\partial \bm \xi_{i} \partial \bm \xi_{j}}
        \frac{\partial^{2} \bm \theta_{b}}{\partial \bm \xi_{k} \partial \bm \xi_{l}}
        \frac{\partial \bm h_{L}^{\alpha}}{\partial \bm \theta^{a}}
        \frac{\partial \bm h_{L}^{\beta}}{\partial \bm \theta^{b}}
        \frac{1}{N} \fim_{\alpha \beta}(\bm h_{L}),
    \end{align*}
    where the second last line comes from the independence of samples.

    Thus all there is left is to calculate the middle terms. Without loss of generality, we calculate:
    \begin{align*}
        &\cov \left(
        \frac{\partial \bm \theta_{a}}{\partial \bm \xi_{i}}
        [\efimb(\bm \theta)]^{ab}
        \frac{\partial \bm \theta_{b}}{\partial \bm \xi_{j}}
        ,
        \left(\bm \eta_{\beta} - \frac{1}{N} \sum_{i=1} \bm t_{\beta}(\bm y_{i}) \right)
        \frac{\partial \bm h_{L}^{\beta}}{\partial \bm \theta^{b}} \frac{\partial^{2} \bm \theta_{b}}{\partial \bm \xi_{k} \partial \bm \xi_{l}}
        \right) \\
        &=
        \frac{\partial \bm \theta_{a}}{\partial \bm \xi_{i}}
        \frac{\partial \bm \theta_{b}}{\partial \bm \xi_{j}}
        \frac{\partial^{2} \bm \theta_{c}}{\partial \bm \xi_{k} \partial \bm \xi_{l}}
        \cov \left(
        [\efimb(\bm \theta)]^{ab}
        ,
        \left(\bm \eta_{\beta} - \frac{1}{N} \sum_{i=1} \bm t_{\beta}(\bm y_{i}) \right)
        \frac{\partial \bm h_{L}^{\beta}}{\partial \bm \theta^{c}}
        \right) \\
        &=
        \frac{\partial \bm \theta_{a}}{\partial \bm \xi_{i}}
        \frac{\partial \bm \theta_{b}}{\partial \bm \xi_{j}}
        \frac{\partial^{2} \bm \theta_{c}}{\partial \bm \xi_{k} \partial \bm \xi_{l}} \\
        &\quad\quad
        \cov \left(
        \fim_{ab} (\bm \theta) +
        \left(\bm \eta_{\beta} - \frac{1}{N} \sum_{i=1} \bm t_{\beta}(\bm y_{i}) \right)
        \frac{\partial^{2} \bm h_{L}^{\alpha}}{\partial \bm \theta^{a} \bm \theta^{b}}
        ,
        \left(\bm \eta_{\beta} - \frac{1}{N} \sum_{i=1} \bm t_{\beta}(\bm y_{i}) \right)
        \frac{\partial \bm h_{L}^{\beta}}{\partial \bm \theta^{c}}
        \right) \\
        &=
        \frac{\partial \bm \theta_{a}}{\partial \bm \xi_{i}}
        \frac{\partial \bm \theta_{b}}{\partial \bm \xi_{j}}
        \frac{\partial^{2} \bm \theta_{c}}{\partial \bm \xi_{k} \partial \bm \xi_{l}}
        \frac{\partial^{2} \bm h_{L}^{\alpha}}{\partial \bm \theta^{a} \bm \theta^{b}}
        \frac{\partial \bm h_{L}^{\beta}}{\partial \bm \theta^{c}}
        \cov \left(
        \left(\bm \eta_{\beta} - \frac{1}{N} \sum_{i=1} \bm t_{\beta}(\bm y_{i}) \right)
        ,
        \left(\bm \eta_{\beta} - \frac{1}{N} \sum_{i=1} \bm t_{\beta}(\bm y_{i}) \right)
        \right) \\
        &=
        \frac{\partial \bm \theta_{a}}{\partial \bm \xi_{i}}
        \frac{\partial \bm \theta_{b}}{\partial \bm \xi_{j}}
        \frac{\partial^{2} \bm \theta_{c}}{\partial \bm \xi_{k} \partial \bm \xi_{l}}
        \frac{\partial^{2} \bm h_{L}^{\alpha}}{\partial \bm \theta^{a} \bm \theta^{b}}
        \frac{\partial \bm h_{L}^{\beta}}{\partial \bm \theta^{c}}
        \frac{1}{N} \fim_{\alpha \beta}( \bm h_{L}).
    \end{align*}

    Combining these 3 covariance results gives us the covariance presented in the theorem.
\end{proof}

As per the estimators themselves in \cref{thm:est_xi}, the 4D covariances
obey similar rules under coordinate transformations.
\Cref{eq:varefimbxi} has an additional term \( \frac{1}{N} \mathcal{C}^{\alpha \beta} \fim_{\alpha \beta}(\bm h_{L}) \) which depends on the Hessian of the coordinate transformation.
Notice that each of the coordinate transformed covariance values depend on the same central moments of the exponential family -- even the second estimator with the additional term only depends on the covariance/FIM \wrt \( \bm h_{L} \). Each of these covariances include a weighted sums of the original covariance matrices in \cref{thm:varefima,thm:varefimb}. As such, our initial element-wise considerations of the covariance will still be useful in the computation of the covariance in the new coordinates. In-fact, our upper bounds in \cref{thm:varefima1,thm:varefimb1,thm:varefimboth2} can be simply adapted by adding the appropriate norms of the coordinate transformation (and its Hessian component for the second estimator).

\section{Element-wise Covariance Bounds}
\label{sup:elementwise}

The covariance tensor $\left[ \cov \left( \efima(\bm \theta) \right) \right]^{ijkl}$
has the following element-wise bound.
\begin{lemma}\label{thm:varefima1}
        \begin{align*}
            \left\vert
                \left[
                \cov \left( \efima(\bm \theta) \right)
                \right]^{ijkl}
            \right\vert
                & \le
                \frac{1}{N}\cdot
                \Vert\partial_{i}{\bm h}_{L}(\bm x)\Vert_{2}
                \cdot
                \Vert\partial_{j}{\bm h}_{L}(\bm x)\Vert_{2}
                \cdot
                \Vert\partial_{k}{\bm h}_{L}(\bm x)\Vert_{2}
                \cdot
                \Vert\partial_{l}{\bm h}_{L}(\bm x)\Vert_{2}
                \nonumber \\
                &\quad\quad \cdot
                \Vert \kurt(\bm{t}) - \fim(\bm{h}_L) \otimes \fim(\bm{h}_L) \Vert_{F},
        \end{align*}
where $\Vert\cdot\Vert_{F}$ is the Frobenius norm of a tensor
(square root of the sum of the squares of the elements / the \( L_{2} \)-norm) and $\otimes$ is the tensor-product:
$\left( \fim(\bm{h}_L) \otimes \fim(\bm{h}_L) \right)_{abcd} \defeq \fim_{ab}(\bm{h}_L) \cdot \fim_{cd}(\bm{h}_L)$.
\end{lemma}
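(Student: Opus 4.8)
The plan is to start from the closed-form expression for $\left[\cov(\efima(\bm\theta))\right]^{ijkl}$ supplied by \cref{thm:varefima} and read its right-hand side as a single tensor contraction. Writing $\mathcal{T}_{abcd} \defeq \kurt_{abcd}(\bm t) - \fim_{ab}(\bm h_L)\,\fim_{cd}(\bm h_L)$ for the central-moment tensor of \cref{remark:kur}, \cref{thm:varefima} states
\[
\left[\cov\left(\efima(\bm\theta)\right)\right]^{ijkl}
=
\frac{1}{N}\sum_{a,b,c,d}
\partial_i\bm h_L^a(\bm x)\,\partial_j\bm h_L^b(\bm x)\,\partial_k\bm h_L^c(\bm x)\,\partial_l\bm h_L^d(\bm x)\;\mathcal{T}_{abcd},
\]
so the object to bound is $\tfrac1N$ times the Frobenius inner product of $\mathcal{T}$ with the rank-one order-four tensor $\partial_i\bm h_L(\bm x)\otimes\partial_j\bm h_L(\bm x)\otimes\partial_k\bm h_L(\bm x)\otimes\partial_l\bm h_L(\bm x)$.

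The first step is to apply the Cauchy--Schwarz inequality for the Frobenius inner product on the space of tensors of shape $n_L\times n_L\times n_L\times n_L$, namely $|\langle A,\mathcal{T}\rangle| \le \Vert A\Vert_F\,\Vert\mathcal{T}\Vert_F$, with $A$ the rank-one tensor above. The second step is to evaluate $\Vert A\Vert_F$ using the elementary factorisation of the norm of a simple tensor, $\Vert u\otimes v\otimes w\otimes z\Vert_F^2 = \sum_{a,b,c,d} u_a^2 v_b^2 w_c^2 z_d^2 = \Vert u\Vert_2^2\Vert v\Vert_2^2\Vert w\Vert_2^2\Vert z\Vert_2^2$; with $u = \partial_i\bm h_L(\bm x)$, $v = \partial_j\bm h_L(\bm x)$, and so on, this gives $\Vert A\Vert_F = \Vert\partial_i\bm h_L(\bm x)\Vert_2\Vert\partial_j\bm h_L(\bm x)\Vert_2\Vert\partial_k\bm h_L(\bm x)\Vert_2\Vert\partial_l\bm h_L(\bm x)\Vert_2$. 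Multiplying by the $\tfrac1N$ prefactor and recalling $\Vert\mathcal{T}\Vert_F = \Vert\kurt(\bm t) - \fim(\bm h_L)\otimes\fim(\bm h_L)\Vert_F$ then yields exactly the claimed inequality.

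Since the entire argument is one application of Cauchy--Schwarz followed by the factorisation of the norm of a rank-one tensor, there is no genuine obstacle; the only points requiring care are matching the Einstein-summation convention of \cref{thm:varefima} to an explicit sum over $a,b,c,d$, and noting that the $\partial_i\bm h_L(\bm x)$ are deterministic given $\bm x$ and $\bm\theta$, so they pull out of the contraction cleanly (the randomness lives entirely inside $\mathcal{T}$ via $\bm t(\bm y)$). If one instead wants the more general $\ell_p$--$\ell_q$ bound that feeds into the proof of \cref{thm:varfima11}, the identical scheme works with H\"older's inequality in place of Cauchy--Schwarz for any conjugate pair $1/p+1/q=1$, and setting $p=q=2$ recovers the statement above.
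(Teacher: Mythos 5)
Your proposal is correct and matches the paper's own argument: both view the entry as a contraction of the central-moment tensor $\mathcal{T}$ with the rank-one tensor $\partial_i\bm h_L\otimes\partial_j\bm h_L\otimes\partial_k\bm h_L\otimes\partial_l\bm h_L$, apply H\"older/Cauchy--Schwarz, and factor the norm of the rank-one tensor into the product of vector $L_2$-norms (the paper states it for general conjugate $p,q$ and sets $p=q=2$, exactly as you note).
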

\begin{proof}
    Corollary holds immediately from the use of H\"older's inequality / Cauchy-Schwarz.
    Let \( 1 \leq p, q \leq \infty \) such that \( 1 / p  + 1 / q = 1 \).
    Let \( \mathcal{T}_{abcd} = \kurt_{abcd}(\bm{t}) - \fim_{ab}(\bm{h}_L) \cdot \fim_{cd}(\bm{h}_L) \).
    From \cref{thm:varefima} we have:
\begin{align*}
        &\left \vert
        \left[
        \cov
        \left(
            \efima(\bm \theta)
        \right)
        \right]^{ijkl}
        \right \vert \\
        &=
        \frac{1}{N}
        \cdot
        \left \vert
        \partial_{i} {\bm h}_{L}^{a}(\bm x) \partial_{j} {\bm h}_{L}^{b}(\bm x) \partial_{k} {\bm h}_{L}^{c}(\bm x) \partial_{l} {\bm h}_{L}^{d}(\bm x)
        \cdot
        \mathcal{T}_{abcd}
        \right \vert \\
        &\leq
        \frac{1}{N}
        \cdot
        \Vert
        \partial_{i} {\bm h}_{L}(\bm x) \otimes
        \partial_{j} {\bm h}_{L}(\bm x) \otimes
        \partial_{k} {\bm h}_{L}(\bm x) \otimes
        \partial_{l} {\bm h}_{L}(\bm x)
        \Vert_{p}
        \cdot
        \Vert
        \mathcal{T}
        \Vert_{q} \\
        &=
        \frac{1}{N}
        \cdot
        \Vert
        \partial_{i} {\bm h}_{L}(\bm x)
        \Vert_{p}
        \cdot
        \Vert
        \partial_{j} {\bm h}_{L}(\bm x)
        \Vert_{p}
        \cdot
        \Vert
        \partial_{k} {\bm h}_{L}(\bm x)
        \Vert_{p}
        \cdot
        \Vert
        \partial_{l} {\bm h}_{L}(\bm x)
        \Vert_{p}
        \cdot
        \Vert
        \mathcal{T}
        \Vert_{q}.
    \end{align*}
    Thus, by taking the \( p = q = 2 \) the Lemma holds.
\end{proof}

We have similar element-wise and global upper bounds on the covariance of
$\efimb(\bm\theta)$.
\begin{lemma}\label{thm:varefimb1}
    \begin{align*}
\left\vert \left[
        \cov
        \left(
            \efimb(\bm \theta)
        \right)
        \right]^{ijkl} \right\vert
        &
        \le
        \frac{1}{N} \cdot
        \Vert \partial_{ij}^2\bm{h}_L(\bm{x}) \Vert_{2}
        \cdot
        \Vert \partial_{kl}^2\bm{h}_L(\bm{x}) \Vert_{2}
        \cdot
        \Vert \fim(\bm{h}_L) \Vert_{F}.
    \end{align*}
\end{lemma}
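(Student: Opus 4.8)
The plan is to prove this exactly as \cref{thm:varefima1} is proved: start from the closed form of \cref{thm:varefimb} and then apply a H\"older/Cauchy--Schwarz estimate to the resulting tensor contraction. By \cref{thm:varefimb},
\[
\left[ \cov\left( \efimb(\bm \theta) \right) \right]^{ijkl}
= \frac{1}{N} \cdot \partial^{2}_{ij}{\bm h}_{L}^{\alpha}({\bm x}) \, \partial^{2}_{kl}{\bm h}_{L}^{\beta}({\bm x}) \, \fim_{\alpha\beta}(\bm{h}_L).
\]
The key observation is purely one of bookkeeping: for fixed indices $(i,j)$, the object $\partial^{2}_{ij}\bm{h}_L(\bm{x})$ is an $n_L$-dimensional vector whose $\alpha$-th component is $\partial^{2}\bm{h}_L^{\alpha}/\partial\bm\theta_i\partial\bm\theta_j$ (not the full Hessian over $\bm\theta$), and similarly for $(k,l)$. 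Hence the right-hand side above is $\tfrac1N$ times the complete contraction of the matrix $\fim(\bm{h}_L)$ against the rank-one $2$-tensor $\partial^{2}_{ij}\bm{h}_L(\bm{x}) \otimes \partial^{2}_{kl}\bm{h}_L(\bm{x})$.

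Next I would apply H\"older's inequality for tensors with conjugate exponents $1/p+1/q=1$ to that contraction, namely
\[
\left\vert \left\langle \fim(\bm{h}_L), \; \partial^{2}_{ij}\bm{h}_L(\bm{x}) \otimes \partial^{2}_{kl}\bm{h}_L(\bm{x}) \right\rangle \right\vert
\le \left\Vert \partial^{2}_{ij}\bm{h}_L(\bm{x}) \otimes \partial^{2}_{kl}\bm{h}_L(\bm{x}) \right\Vert_{p} \cdot \left\Vert \fim(\bm{h}_L) \right\Vert_{q},
\]
then use the multiplicativity of entrywise $p$-norms under tensor products, $\Vert \bm u \otimes \bm v \Vert_{p} = \Vert \bm u \Vert_{p}\,\Vert \bm v \Vert_{p}$, and finally specialize to $p=q=2$, recalling that the entrywise $2$-norm equals the Frobenius norm. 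This yields the stated bound, with the $1/N$ factor carried over verbatim from \cref{thm:varefimb}. As in the proof of \cref{thm:varefima1}, it is worth recording the general $(p,q)$ form before specializing, since it is reused for the alternative-norm bounds in \cref{sup:alternativenorm}.

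I do not expect any real obstacle here: the argument is a one-line consequence of the closed form plus Cauchy--Schwarz, structurally identical to \cref{thm:varefima1} with the $4$-tensor $\kurt(\bm{t}) - \fim(\bm{h}_L)\otimes\fim(\bm{h}_L)$ replaced by the $2$-tensor $\fim(\bm{h}_L)$ and the four Jacobian vectors replaced by the two second-derivative vectors. The only points demanding care are the two already flagged: reading $\partial^{2}_{ij}\bm{h}_L(\bm{x})$ as a vector over the output index $\alpha$ (so the norms on the right are genuinely $n_L$-dimensional $2$-norms), and keeping track of the $1/N$ prefactor.
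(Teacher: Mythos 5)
Your proposal matches the paper's proof essentially verbatim: the paper likewise starts from the closed form in \cref{thm:varefimb}, applies H\"older's inequality to the contraction of $\fim(\bm{h}_L)$ against $\partial^{2}_{ij}\bm{h}_L(\bm{x}) \otimes \partial^{2}_{kl}\bm{h}_L(\bm{x})$ with general conjugate exponents $(p,q)$, splits the tensor-product norm into the product of the two vector norms, and then sets $p=q=2$. Your reading of $\partial^{2}_{ij}\bm{h}_L(\bm{x})$ as an $n_L$-dimensional vector over the output index and your handling of the $1/N$ factor are both exactly as in the paper, so there is nothing to correct.
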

\begin{proof}
    Corollary holds immediately from the use of H\"older's inequality / Cauchy-Schwarz.
    Let \( 1 \leq p, q \leq \infty \) such that \( 1 / p  + 1 / q = 1 \).
    From \cref{thm:varefimb} we have:
\begin{align*}
        \left \vert
        \left[
        \cov
        \left(
            \efimb(\bm \theta)
        \right)
        \right]^{ijkl}
        \right \vert
        &=
        \frac{1}{N}
        \cdot
        \left \vert
        \partial^{2}_{ij}{\bm h}_{L}^{\alpha}({\bm x})
        \partial^{2}_{kl}{\bm h}_{L}^{\beta}({\bm x})
        \fim_{\alpha\beta}( \bm{h}_L )
        \right \vert \\
        &\leq
        \frac{1}{N}
        \cdot
        \Vert
        \partial_{ij}^{2}{\bm h}_{L}({\bm x})
        \otimes
        \partial_{kl}^{2}{\bm h}_{L}({\bm x})
        \Vert
        \cdot
        \Vert
        \fim( \bm{h}_L )
        \Vert \\
        &=
        \frac{1}{N}
        \cdot
        \Vert
        \partial_{ij}^{2}{\bm h}_{L}({\bm x})
        \Vert
        \cdot
        \Vert
        \partial_{kl}^{2}{\bm h}_{L}({\bm x})
        \Vert
        \cdot
        \Vert
        \fim( \bm{h}_L )
        \Vert.
    \end{align*}
    Thus, by taking the \( p = q = 2 \) the Lemma holds.
\end{proof}

\section{Alternative Norm Results}
\label{sup:alternativenorm}

An alternative bound to \cref{thm:varfima11} and \cref{thm:varfimb11} can be established by utilizing H\"older's inequality for \( L_{p} \)-norms. \begin{theorem}\label{thm:varefimboth2}
    \begin{align}
        \left \Vert
        \cov
        \left(
            \efima(\bm \theta)
        \right)
        \right \Vert_{\infty}
        &\leq
        \frac{1}{N}
        \cdot
        \Vert \partial {\bm h}_{L}(\bm x) \Vert_{\infty}^{4}
        \cdot
        \Vert \kurt(\bm{t}) - \fim(\bm{h}_L) \otimes \fim(\bm{h}_L) \Vert_{1}
        \label{eq:varefima2} \\
        \left \Vert
        \cov
        \left(
            \efimb(\bm \theta)
        \right)
        \right \Vert_{\infty}
        &\leq
        \frac{1}{N}
        \cdot
        \Vert
        \partial^{2}{\bm h}_{L}({\bm x})
        \Vert_{\infty}^{2}
        \cdot
        \Vert
        \fim( \bm{h}_L )
        \Vert_{1}, \label{eq:varefimb2}
    \end{align}
where \( \Vert \cdot \Vert_{\infty} \) is the \( L_{\infty} \)-norm and \( \Vert \cdot \Vert_{1} \) is the \( L_{1} \)-norm.
\end{theorem}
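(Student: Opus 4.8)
The plan is to obtain both inequalities as the extreme-exponent ($p=\infty$, $q=1$) instance of exactly the same H\"older-inequality argument that produced \cref{thm:varfima11,thm:varfimb11}, this time starting from the element-wise estimates in \cref{thm:varefima1,thm:varefimb1}. Recall that the proofs of those element-wise lemmas are carried out for an arbitrary conjugate pair $1\le p,q\le\infty$ with $1/p+1/q=1$ and only specialized to $p=q=2$ at the very end; here I would instead keep $p=\infty$, $q=1$ and then take an entrywise maximum over the four free parameter indices.

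First I would treat $\efima(\bm\theta)$. Writing $\mathcal T_{abcd}\defeq\kurt_{abcd}(\bm t)-\fim_{ab}(\bm h_L)\fim_{cd}(\bm h_L)$, the general-$(p,q)$ line in the proof of \cref{thm:varefima1} gives
\begin{equation*}
\left\vert\left[\cov\left(\efima(\bm\theta)\right)\right]^{ijkl}\right\vert
\le
\frac1N\,
\Vert\partial_i\bm h_L(\bm x)\Vert_\infty\,
\Vert\partial_j\bm h_L(\bm x)\Vert_\infty\,
\Vert\partial_k\bm h_L(\bm x)\Vert_\infty\,
\Vert\partial_l\bm h_L(\bm x)\Vert_\infty\,
\Vert\mathcal T\Vert_1 .
\end{equation*}
Taking the maximum over $i,j,k,l$ on both sides, and noting that the right-hand side is a constant ($\tfrac1N\Vert\mathcal T\Vert_1$) times a product of four factors, each depending on a single one of the four indices, the maximum of the product splits into the product of the maxima, so $\max_{i,j,k,l}(\cdots)=\left(\max_i\Vert\partial_i\bm h_L(\bm x)\Vert_\infty\right)^4=\Vert\partial\bm h_L(\bm x)\Vert_\infty^4$, which is \cref{eq:varefima2}.

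The argument for $\efimb(\bm\theta)$ is identical in structure. From the general-$(p,q)$ line in the proof of \cref{thm:varefimb1},
\begin{equation*}
\left\vert\left[\cov\left(\efimb(\bm\theta)\right)\right]^{ijkl}\right\vert
\le
\frac1N\,
\Vert\partial^2_{ij}\bm h_L(\bm x)\Vert_\infty\,
\Vert\partial^2_{kl}\bm h_L(\bm x)\Vert_\infty\,
\Vert\fim(\bm h_L)\Vert_1 ,
\end{equation*}
and taking $\max_{i,j,k,l}$ --- now with one Hessian factor attached to the pair $(i,j)$ and one to the pair $(k,l)$ --- yields $\left(\max_{i,j}\Vert\partial^2_{ij}\bm h_L(\bm x)\Vert_\infty\right)^2=\Vert\partial^2\bm h_L(\bm x)\Vert_\infty^2$, i.e.\ \cref{eq:varefimb2}.

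The only point requiring care, rather than a genuine obstacle, is the bookkeeping with the mixed tensor norms: $\Vert\partial\bm h_L(\bm x)\Vert_\infty$ is read as $\max_i\Vert\partial_i\bm h_L(\bm x)\Vert_\infty$, which coincides with the entrywise maximum of the $n_L\times\dim(\bm\theta)$ array (similarly $\Vert\partial^2\bm h_L(\bm x)\Vert_\infty$ is the entrywise maximum of the corresponding 3D tensor), while $\Vert\mathcal T\Vert_1$ and $\Vert\fim(\bm h_L)\Vert_1$ are the entrywise $\ell_1$-norms. One should also record that H\"older with the extreme exponent pair $(\infty,1)$ is legitimate because $\mathcal T$ and $\fim(\bm h_L)$ are finite-dimensional tensors. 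No further estimation is needed --- this is purely the $p=\infty$ specialization of the Cauchy--Schwarz computation already used for \cref{thm:varfima11,thm:varfimb11}.
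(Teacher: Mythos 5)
Your proposal is correct and follows essentially the same route as the paper: the paper's own proof simply specializes the general-$(p,q)$ H\"older computation carried out in the proofs of \cref{thm:varfima11,thm:varfimb11} (which themselves build on the element-wise bounds of \cref{thm:varefima1,thm:varefimb1}) to the pair $p=\infty$, $q=1$. Your explicit handling of the $L_\infty$-norm by taking entrywise maxima and splitting the maximum of the index-separated product is just a cleaner way of stating that specialization, not a different argument.
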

\begin{proof}
    The Corollary holds directly from the inequalities given in the proof of \cref{thm:varfima11} and \cref{thm:varfimb11}.
    Let \( p = \infty \) and \( q = 1 \).

    Thus we have the inequalities for \( L_{p} \)-norms:
\begin{align*}
        \left \Vert
        \cov
        \left(
            \efima(\bm \theta)
        \right)
        \right \Vert_{\infty}
        &\leq
        \frac{1}{N}
        \cdot
        \Vert \partial {\bm h}_{L}(\bm x) \Vert_{\infty}^{4}
        \cdot
        \Vert \kurt(\bm{t}) - \fim(\bm{h}_L) \otimes \fim(\bm{h}_L) \Vert_{1}
        \\
        \left \Vert
        \cov
        \left(
            \efimb(\bm \theta)
        \right)
        \right \Vert_{\infty}
        &\leq
        \frac{1}{N}
        \cdot
        \Vert
        \partial^{2}{\bm h}_{L}({\bm x})
        \Vert_{\infty}^{2}
        \cdot
        \Vert
        \fim( \bm{h}_L )
        \Vert_{1}.
    \end{align*}

    \begin{remark}
    Note that these are exactly equivalent to certain spectral and nuclear norms. However these have slight differences in their definition \citep{chen2020tensor}.

    The \( p \)-spectral norm for a \( d \)-dimensional tensor is given by:
\begin{equation*}
        \Vert \mathcal{T} \Vert_{p_{\sigma}}
        =
        \max
        \left\{ \langle \mathcal{T}, \bm x^{1} \otimes \ldots \otimes \bm x^{d}
            \rangle
        : \Vert \bm x^{k} \Vert_{p}=1,~\forall{}k\in[d]
        \right\}.
    \end{equation*}
    The standard tensor spectral norm is only equivalent when \( p = 2 \), \ie, \( \Vert \cdot \Vert_{2_\sigma} \).

    The \( p \)-nuclear norm for a \( d \)-dimensional tensor is given by:
\begin{equation*}
        \Vert \mathcal{T} \Vert_{p_{*}}
        =
        \min \left\{
            \sum_{i=1}^{r} \vert \lambda_{i} \vert :
            \mathcal{T} = \sum_{i=1}^{r} \lambda_{i} \bm x_{i}^{1} \otimes \ldots \otimes \bm x_{i}^{d}
        :
        \Vert \bm x_{i}^{k} \Vert_{p}=1,
        ~\forall{}k\in[d]
        \text{~and~} i, r \in \mathbb{N}
        \right\}.
    \end{equation*}
    The standard tensor nuclear norm is only equivalent when \( p = 2 \), \ie, \( \Vert \cdot \Vert_{{2}_{*}} \).

    It follows that for the \( L_{p} \) norms we have established, the \( L_{1} \)-norm is equivalent to the \( 1 \)-nuclear norm and the \( L_{\infty} \)-norm is equal to the \( 1 \)-spectral norm \citep[Proposition 2.6]{chen2020tensor}. However, this equality is not true for the standard tensor nuclear and spectral norms.

    Instead, we can upper bound the \( L_{\infty} \)-norm by the standard tensor spectral norm by the definition of the \( p \)-spectral norm, \ie, \( \Vert \cdot \Vert_{\infty} = \Vert \cdot \Vert_{1_{\sigma}} \leq \Vert \cdot \Vert_{2_{\sigma}} \).
    For the \( L_{1} \)-norm, we can upper bound it by the corresponding \( L_{2} \)-norm through Cauchy-Schwarz, \ie, \( \Vert \cdot \Vert_{1} \leq \Vert \cdot \Vert_{2} \cdot \sqrt{\mathcal{D}} \), where \( \mathcal{D}(\cdot) \) is the product of the dimension size of the tensor. One should note that \( L_{2} \) is the Frobenius norm.
    \end{remark}
\end{proof}

Notably, the \( L_{1} \)-norm can be upper-bounded by
the Frobenius norm trivially through the Cauchy-Schwarz inequality, with \( \Vert \cdot \Vert_{1} \leq \Vert \cdot \Vert_{F} \cdot \sqrt{\mathcal{D}} \), where \( \mathcal{D} \) is the product of the dimension size of the tensor.
Thus the analysis in \cref{thm:elementwisemoment} can be useful to extend the bounds in~\cref{thm:varefimboth2}.
On the other hand, the \( L_{\infty} \)-norm \( \Vert \cdot \Vert_{\infty} \) is upper bounded by the largest singular value of the tensor~\citep{lim2005singular}.
As such, the \( \Vert \cdot \Vert_{\infty} \) quantities on the RHS
can be interpreted as functions of the maximum singular value of
the Jacobian \( \partial \bm h_{L}(\bm x) \) or the Hessian \( \partial^{2} \bm h_{L}(\bm x) \).
Similar corollaries for \cref{thm:varefima1} and \cref{thm:varefimb1} can be established
--- which we omit for brevity.

\section{Combination of Estimators}
\label{sup:combination}

We consider a convex combination of estimators, i.e., \cref{eq:efima,eq:efimb}.

In particular, for \( 0 \leq \alpha \leq 1 \) we have:
\begin{equation}
    \efimc(\bm \theta) = \alpha \efima(\bm \theta) + (1 - \alpha) \efimb(\bm \theta).
\end{equation}

Clearly, this is also a point-wise estimator. Thus, \cref{prop:unbiased_consistent} holds for this estimator.

For the variance, we use the following linear relation:
\begin{equation*}
    \var(\alpha \efima(\bm \theta) + (1-\alpha) \efimb(\bm \theta)) = \alpha^{2} \var(\efima(\bm \theta)) + (1 - \alpha)^{2} \var(\efimb(\bm \theta)) + 2\alpha(1-\alpha) \cov(\efima(\bm \theta), \efimb(\bm \theta)).
\end{equation*}
Or as we have previously discussed we consider:
\begin{equation*}
    \alpha^{2} \var\left(\frac{\partial \ell}{\partial \bm \theta_{i}} \frac{\partial \ell}{\partial \bm \theta_{j}}\right) + (1 - \alpha)^{2} \var\left(- \frac{\partial^{2} \ell}{\partial \bm \theta_{i} \partial \bm \theta_{j}}\right) + 2\alpha(1-\alpha) \cov\left(\frac{\partial \ell}{\partial \bm \theta_{i}} \frac{\partial \ell}{\partial \bm \theta_{j}}, - \frac{\partial^{2} \ell}{\partial \bm \theta_{i} \partial \bm \theta_{j}}\right)
\end{equation*}

We already have the variance values from \cref{thm:varefima,thm:varefimb}. Thus all we have to calculate is the covariance term.

\begin{align*}
    \cov\left(\partial_{i} \ell \cdot \partial_{j} \ell, -\partial^{2}_{ij} \ell \right)
    &= - \expect[\partial_{i} \ell \cdot \partial_{j} \ell \cdot \partial^{2}_{ij} \ell] + \expect[\partial_{i} \ell \cdot \partial_{j} \ell] \expect[\partial^{2}_{ij} \ell].
\end{align*}

The first term of the covariance can be given by:
\begin{align*}
    &- \expect[\partial_{i} \ell \cdot \partial_{j} \ell \cdot \partial^{2}_{ij} \ell] \\
    &= E\left[ \partial_{i} \ell \cdot \partial_{j} \ell \cdot \left( \partial_{i}^{c} \bm h_{L} \partial_{j}^{d} \bm h_{L} \fim_{cd}(\bm h_{L}) - \delta_{c} \partial_{ij}^{c} \bm h_{L} \right) \right] \\
    &= E\left[ \partial_{i} \ell \cdot \partial_{j} \ell \cdot \partial_{i}^{c} \bm h_{L} \partial_{j}^{d} \bm h_{L} \fim_{cd}(\bm h_{L}) \right] - E\left[ \partial_{i} \ell \cdot \partial_{j} \ell \cdot  \delta_{c} \partial_{ij}^{c} \bm h_{L} \right] \\
    &= \partial_{i}^{c} \bm h_{L} \partial_{j}^{d} \bm h_{L} \fim_{cd}(\bm h_{L}) \cdot E\left[ \partial_{i} \ell \cdot \partial_{j} \ell \right] - \partial_{ij}^{c} \bm h_{L} \cdot E\left[ \partial_{i} \ell \cdot \partial_{j} \ell \cdot \delta_{c} \right] \\
    &= \partial_{i}^{c} \bm h_{L} \partial_{j}^{d} \bm h_{L} \fim_{cd}(\bm h_{L}) \cdot E\left[ \partial_{i} \ell \cdot \partial_{j} \ell \right] - \partial_{ij}^{c} \bm h_{L} \cdot E\left[ \partial_{i} \ell \cdot \partial_{j} \ell \cdot \delta_{c} \right] \\
    &= \partial_{i} {\bm h}_{L}^{a} \cdot \partial_{j} {\bm h}_{L}^{b} \cdot \partial_{i}^{c} \bm h_{L} \cdot \partial_{j}^{d} \bm h_{L} \cdot \fim_{cd}(\bm h_{L}) \cdot E\left[ \delta_{a} \cdot \delta_{b} \right] - \partial_{i} {\bm h}_{L}^{a}(\bm x) \cdot \partial_{j} {\bm h}_{L}^{b}(\bm x) \cdot \partial_{ij}^{c} \bm h_{L} \cdot E\left[ \delta_{a} \cdot \delta_{b} \cdot \delta_{c} \right] \\
    &= \partial_{i} {\bm h}_{L}^{a} \cdot \partial_{j} {\bm h}_{L}^{b} \cdot \partial_{i}^{c} \bm h_{L} \cdot \partial_{j}^{d} \bm h_{L} \cdot \fim_{cd}(\bm h_{L}) \cdot \fim_{ab}(\bm h_{L}) - \partial_{i} {\bm h}_{L}^{a}(\bm x) \cdot \partial_{j} {\bm h}_{L}^{b}(\bm x) \cdot \partial_{ij}^{c} \bm h_{L} \cdot E\left[ \delta_{a} \cdot \delta_{b} \cdot \delta_{c} \right] \\
\end{align*}

The second term is given by:
\begin{align*}
    \expect[\partial_{i} \ell \cdot \partial_{j} \ell] \expect[\partial^{2}_{ij} \ell]
    &= - \partial_{i} {\bm h}_{L}^{a} \cdot \partial_{j} {\bm h}_{L}^{b} \cdot \partial_{i}^{c} \bm h_{L} \cdot \partial_{j}^{d} \bm h_{L} \cdot \fim_{cd}(\bm h_{L}) \cdot \fim_{ab}(\bm h_{L})
\end{align*}

Thus, together we have the covariance:
\begin{align*}
    \cov\left(\partial_{i} \ell \cdot \partial_{j} \ell, -\partial^{2}_{ij} \ell \right)
    = - \partial_{i} {\bm h}_{L}^{a} \cdot \partial_{j} {\bm h}_{L}^{b} \cdot \partial_{ij}^{c} \bm h_{L} \cdot E\left[ \delta_{a} \cdot \delta_{b} \cdot \delta_{c} \right].
\end{align*}

This gives us the variance of the combined estimator:
\begin{align}
    &\alpha^{2} \partial_{i} {\bm h}_{L}^{a} \partial_{j} {\bm h}_{L}^{b} \partial_{i} {\bm h}_{L}^{c} \partial_{j} {\bm h}_{L}^{d}
    \cdot
    \left( \kurt_{abcd}(\bm{t}) - \fim_{ab}(\bm{h}_L) \cdot \fim_{cd}(\bm{h}_L) \right)
+
    (1 - \alpha)^{2} \partial^{2}_{ij}{\bm h}_{L}^{\alpha}
    \partial^{2}_{ij}{\bm h}_{L}^{\beta}
    \cdot
    \fim_{\alpha\beta}( \bm{h}_L )
    \nonumber \\
    &\quad \quad-
    2 \alpha (1 - \alpha)
    \partial_{i} {\bm h}_{L}^{a} \partial_{j} {\bm h}_{L}^{b} \partial_{ij}^{c} \bm h_{L} \cdot E\left[ \delta_{a} \cdot \delta_{b} \cdot \delta_{c} \right]. \label{eq:combined}
\end{align}

The covariance (as per \cref{thm:varefima,thm:varefimb}) can similarly be calculated by changing the variables of the partial derivatives of \( \bm h_{L} \).
Notably, the largest differentiating factor from the original estimator is that the combined estimator is dependent on the third central moment of \( \bm t \). This third central moment is equivalent to the third-order cumulant of \( \bm t \). Thus it can be directly calculated via the derivatives of the log-partition function \( F(\bm h_{L}) \).

\section{Experimental Verification of Bounds}
\label{sup:experimental}

The bounds of the variance of the FIM estimators \( \efima(\bm \theta) \) and \(
\efimb(\bm \theta) \) (as presented in \cref{sec:estimators}) can be
experimentally verified. We train a simple convolutional neural network trained
on the standard MNIST dataset. By leveraging the Jacobian
and Hessian in-built functions in PyTorch, we calculate the bounds in
\cref{thm:varfima11,thm:varfimb11} applied to the variance of the estimators
(as described in \cref{subsec:var_in_closed_form}).

\begin{figure}[th]
    \centering
    \begin{subfigure}[b]{0.95\textwidth}
    \includegraphics[width=\textwidth]{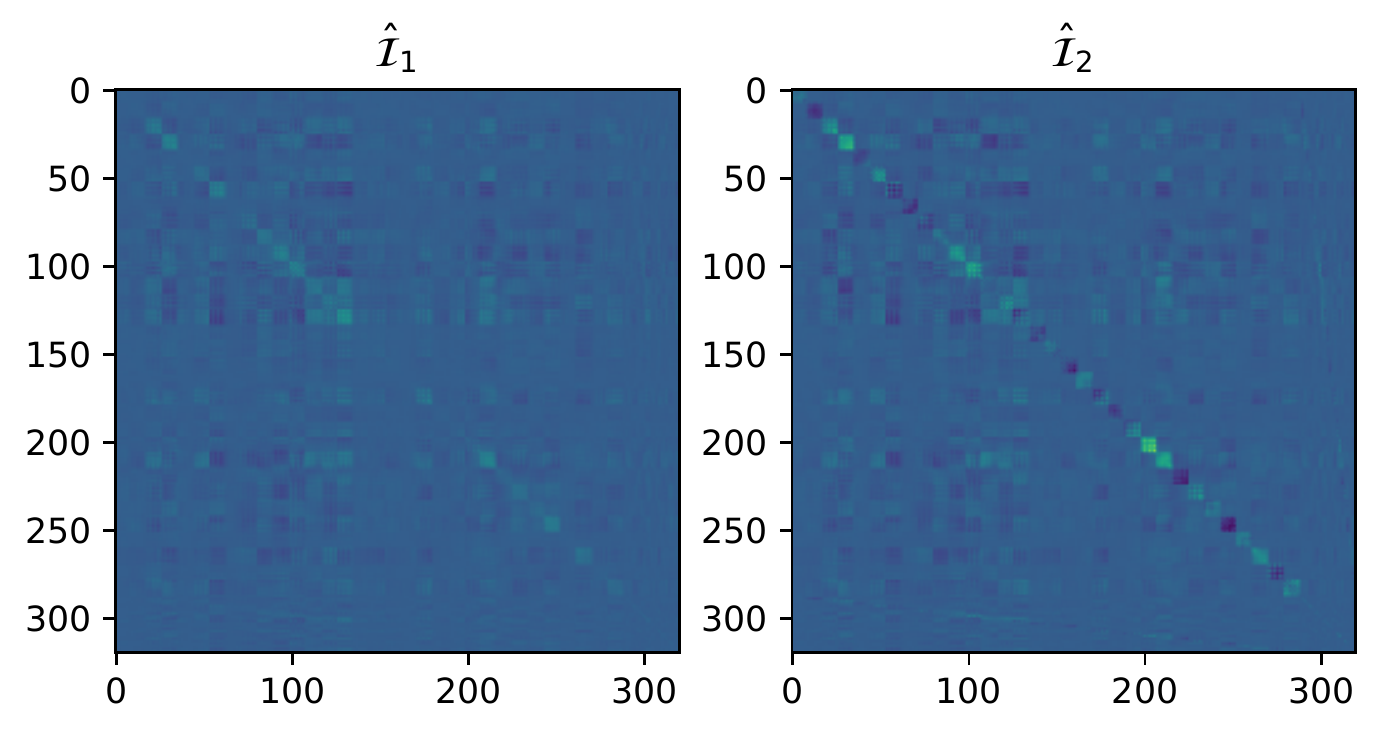}
    \caption{
    \( \efima(\bm \theta) \) and \( \efimb(\bm \theta) \),
      where $\bm\theta$ is a random model.
      Color values are shared.}\label{fig:random_heatmap_dist}
    \end{subfigure}
    \begin{subfigure}[b]{0.95\textwidth}
    \includegraphics[width=\textwidth]{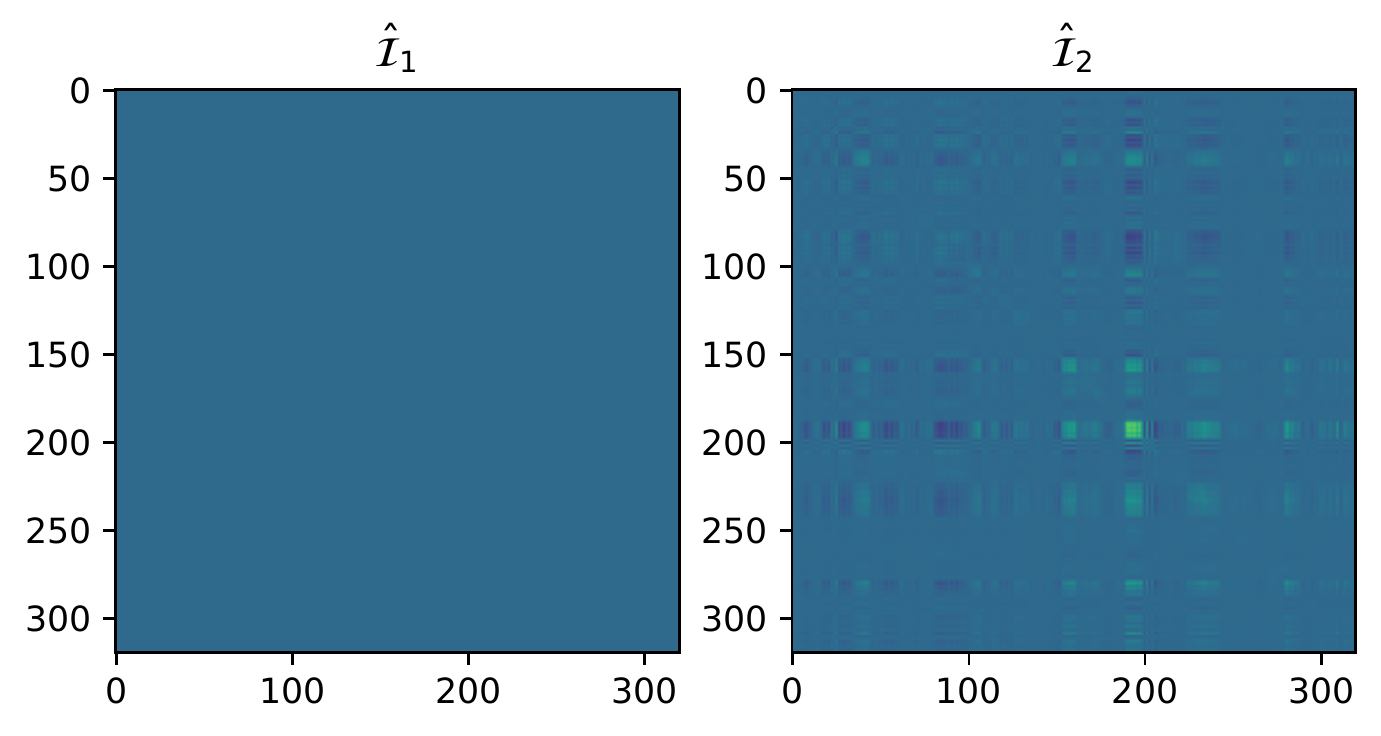}
    \caption{\( \efima(\bm \theta) \) and \( \efimb(\bm \theta) \),
        where $\bm\theta$ is a trained model.
        Color values are shared.}\label{fig:trained_heatmap_dist}
    \end{subfigure}
    \caption{The estimated FIM presented in heatmaps corresponding to the first layer of a CNN .}
\end{figure}

\subsection{Dataset}

We consider the MNIST dataset of \( 28 \times 28 \) pixel grayscale images after normalization.
The training set consists of 60,000 examples and the test set consists of 10,000 examples.

\subsection{Model Setup}

The convolutional neural network considered in our experiments are given by the following layers (in order):

\begin{itemize}
    \renewcommand{\labelitemi}{$\rightarrow$}
    \setlength\itemsep{0em}
    \item Conv(in\_channel=1, out\_channel=32, kernel\_size=(3, 3), stride=(1, 1))
    \item Softplus()
    \item Conv(in\_channel=32, out\_channel=64, kernel\_size=(3, 3), stride=(1, 1))
    \item Softplus()
    \item MaxPool2D()
    \item Dropout(p=0.25)
    \item Linear(in\_features=9216, out\_features=128)
    \item Softplus()
    \item Dropout(p=0.5)
    \item Linear(in\_features=128, out\_features=10)
    \item LogSoftMax()
\end{itemize}

After training, the final model has a 99\% test accuracy.  For most of the
training samples, the predicted probabilities have a low entropy and are close
to a one-hot vector. Consequently, the overall variance of the estimated FIM is
very close to zero.

\subsection{Evaluation}

To compute the FIM, We only consider the weight and bias parameters in the first
layer for simplicity. We randomly choose a fixed \( \bm x \) with multiple
sampled \( \bm y_{i} \) for calculation, as per \cref{eq:efima,eq:efimb}.
Recall that the randomness of our estimators comes from the sampling
of \( \bm y_{i} \sim p(\bm y \mid \bm x) \).
For all related computation, we use double-precision floating point (64 bits).
The FIM estimations for both a trained model and a random model
are given in \cref{fig:trained_heatmap_dist,fig:random_heatmap_dist}.

To calculate the ``true'' variance to compare against the bounds,
we use Monte-Carlo estimation using a large number (1,000 for the presented
results) of samples. Then, the variance of the estimator is approximated by
the sample variance.

\begin{figure}
    \centering
    \begin{subfigure}[b]{0.45\textwidth}
        \centering
        \includegraphics[width=\textwidth]{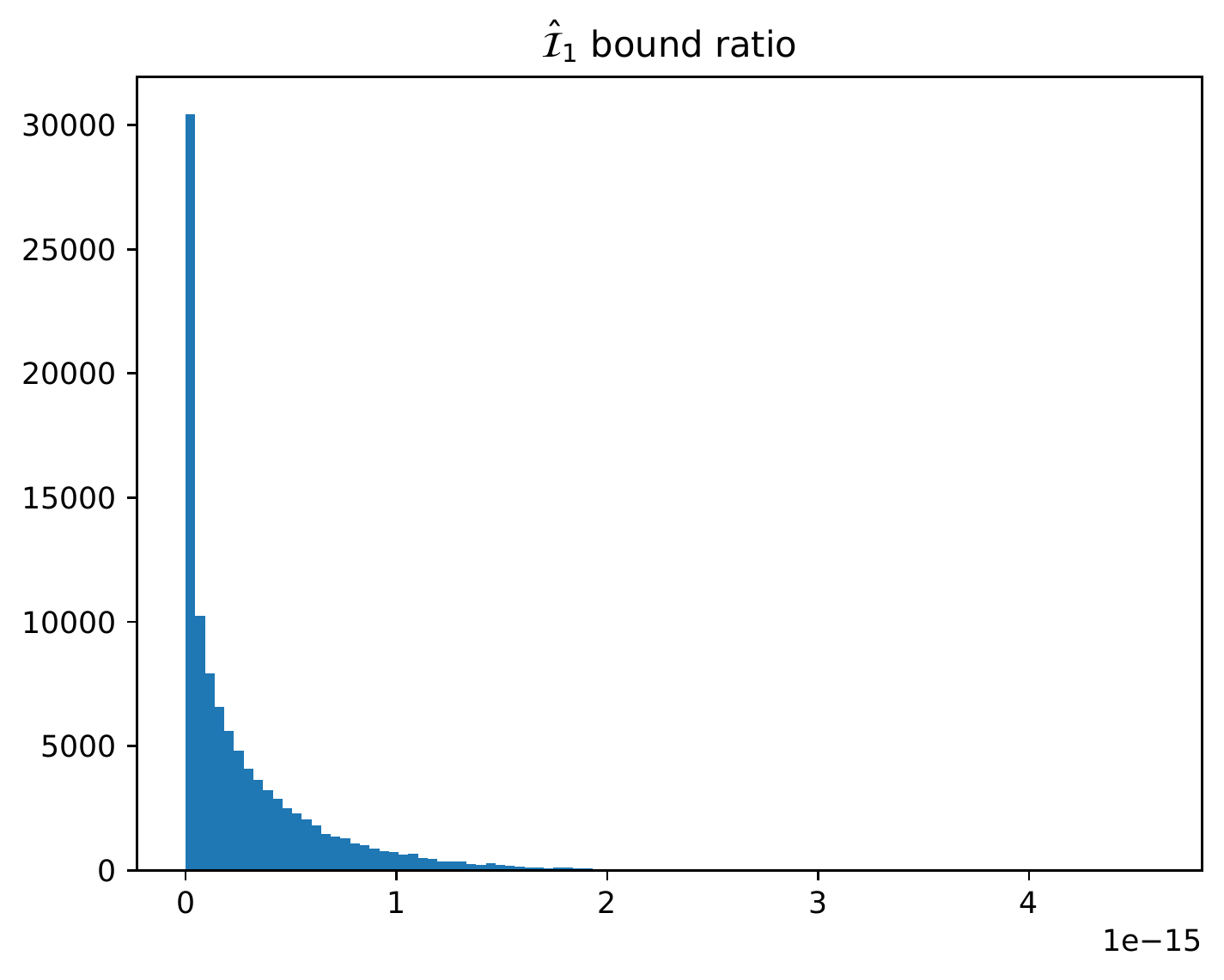}
        \caption{Trained model.}
        \label{fig:trained_efima_ratio}
    \end{subfigure}
    \hfill
    \begin{subfigure}[b]{0.45\textwidth}
        \centering
        \includegraphics[width=\textwidth]{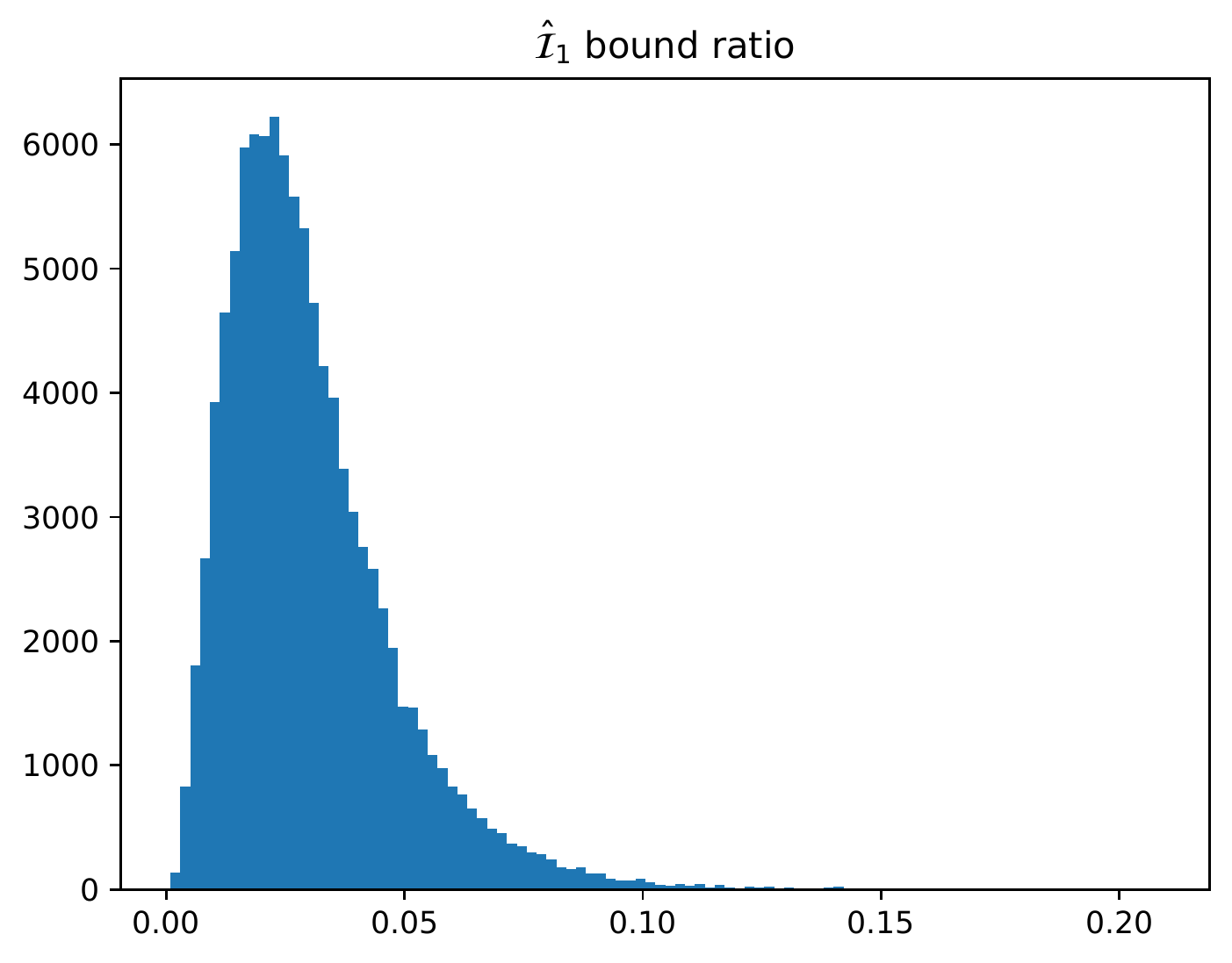}
        \caption{Random model.}
        \label{fig:random_efima_ratio}
    \end{subfigure}
        \caption{Ratio of the bound of the variance in \cref{thm:varfima11} over the true estimated variance.}
        \label{fig:efima_ratio}
\end{figure}

\begin{figure}
    \centering
    \begin{subfigure}[b]{0.45\textwidth}
        \centering
        \includegraphics[width=\textwidth]{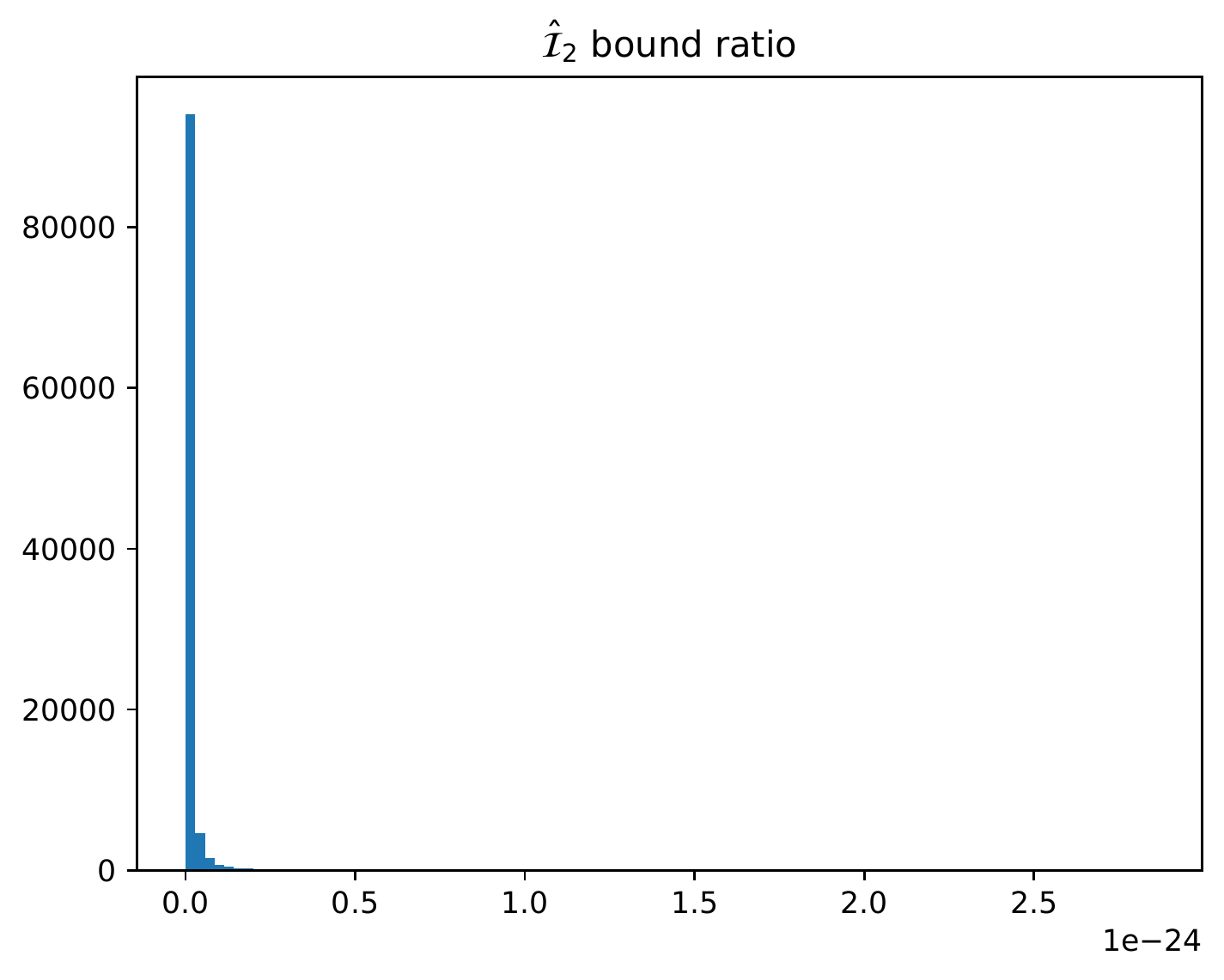}
        \caption{Trained model.}
        \label{fig:trained_efimb_ratio}
    \end{subfigure}
    \hfill
    \begin{subfigure}[b]{0.45\textwidth}
        \centering
        \includegraphics[width=\textwidth]{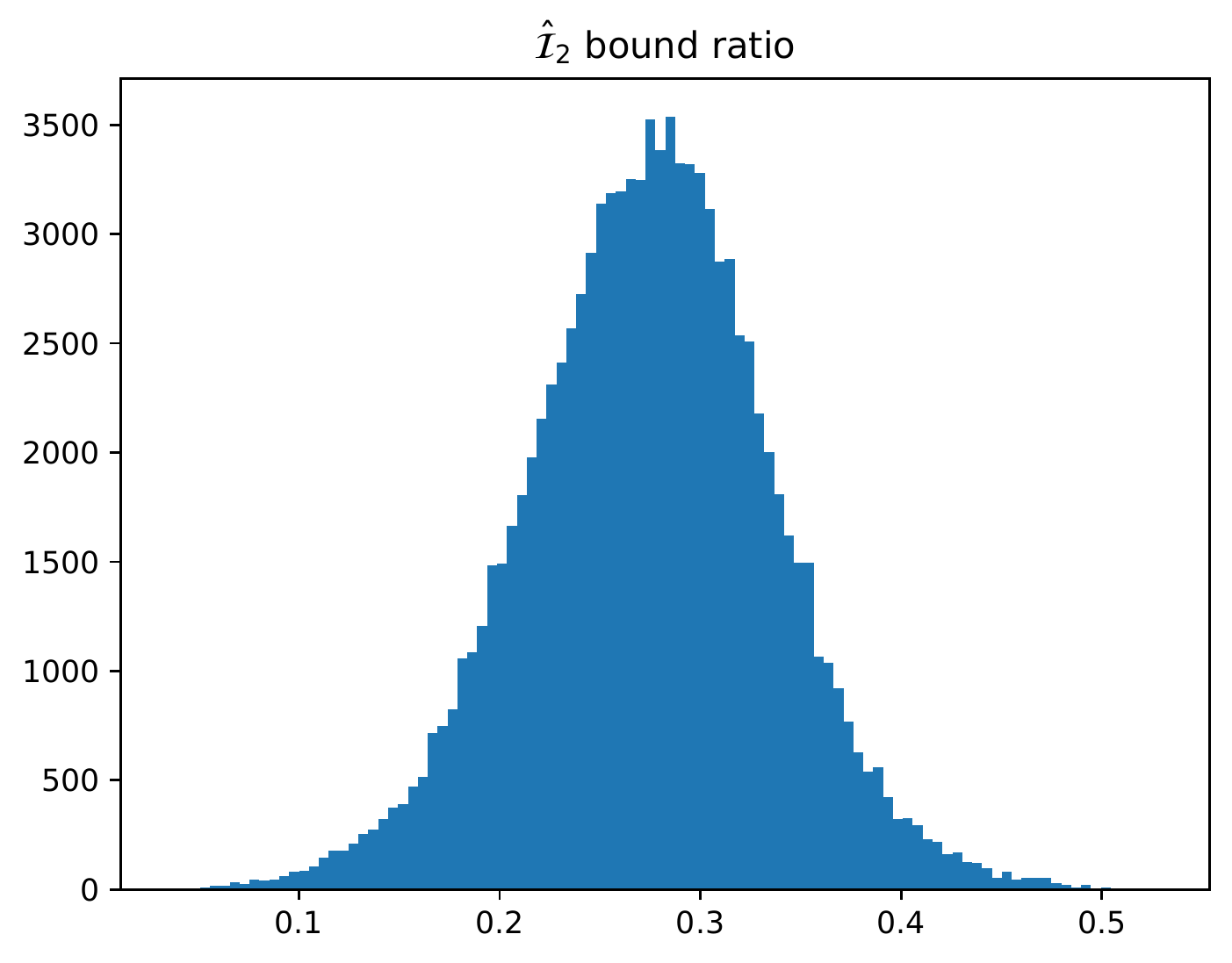}
        \caption{Random model.}
        \label{fig:random_efimb_ratio}
    \end{subfigure}
        \caption{Ratio of the bound of the variance in \cref{thm:varfimb11} over the true estimated variance.}
        \label{fig:efimb_ratio}
\end{figure}

\subsection{Results}

We plot the histograms of the ratio of the estimated variance of
\( \efima(\bm \theta) \) (\( \efimb(\bm \theta) \)) over the
variance bound given by \cref{thm:varfima11} (\cref{thm:varfimb11}).
For the trained network,  the plot is given by \cref{fig:trained_efima_ratio} (\cref{fig:trained_efimb_ratio});
for the random network, the plot is by \cref{fig:random_efima_ratio} (\cref{fig:random_efimb_ratio}).
In both the trained and random models, the bounds are empirically verified
(the ratio is always smaller than 1).
When comparing the ratio histograms, a smaller ratio value corresponds to a
looser bound. We can immediately see that the trained network's bounds are
looser than that of the randomized network.

We also plot of the (Frobenius) distance
\( \Vert \efima(\bm \theta) - \efimb(\bm \theta) \Vert_{F} \)
between the two estimators over the number
of samples for calculating the estimators.
See \cref{fig:trained_line_dist,fig:random_line_dist}
for the cases of trained and random networks, respectively.
As the trained model has a very small variance of $\bm{y}_i$, it is hard to
observe in \cref{fig:trained_line_dist} any change of the distance between \(
\efima(\bm \theta) \) and \( \efimb(\bm \theta) \) as the samples increase. For
the randomized model, we do observe the decrease in estimator distance as the
number of samples increase, as expected, \cref{fig:random_line_dist}.

\begin{figure}
    \centering
    \begin{subfigure}[b]{0.45\textwidth}
        \centering
        \includegraphics[width=\textwidth]{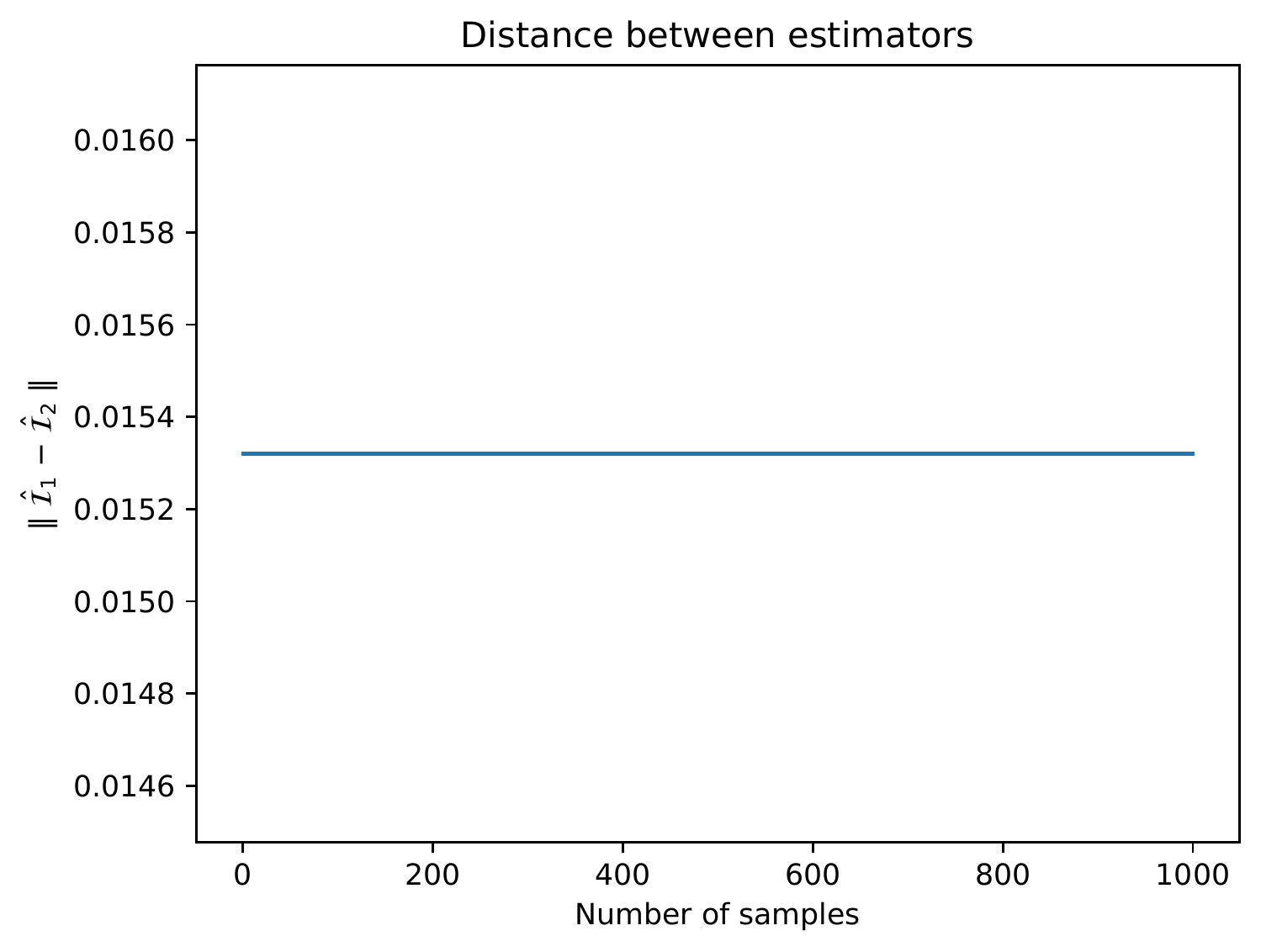}
        \caption{Trained model.}
        \label{fig:trained_line_dist}
    \end{subfigure}
    \hfill
    \begin{subfigure}[b]{0.45\textwidth}
        \centering
        \includegraphics[width=\textwidth]{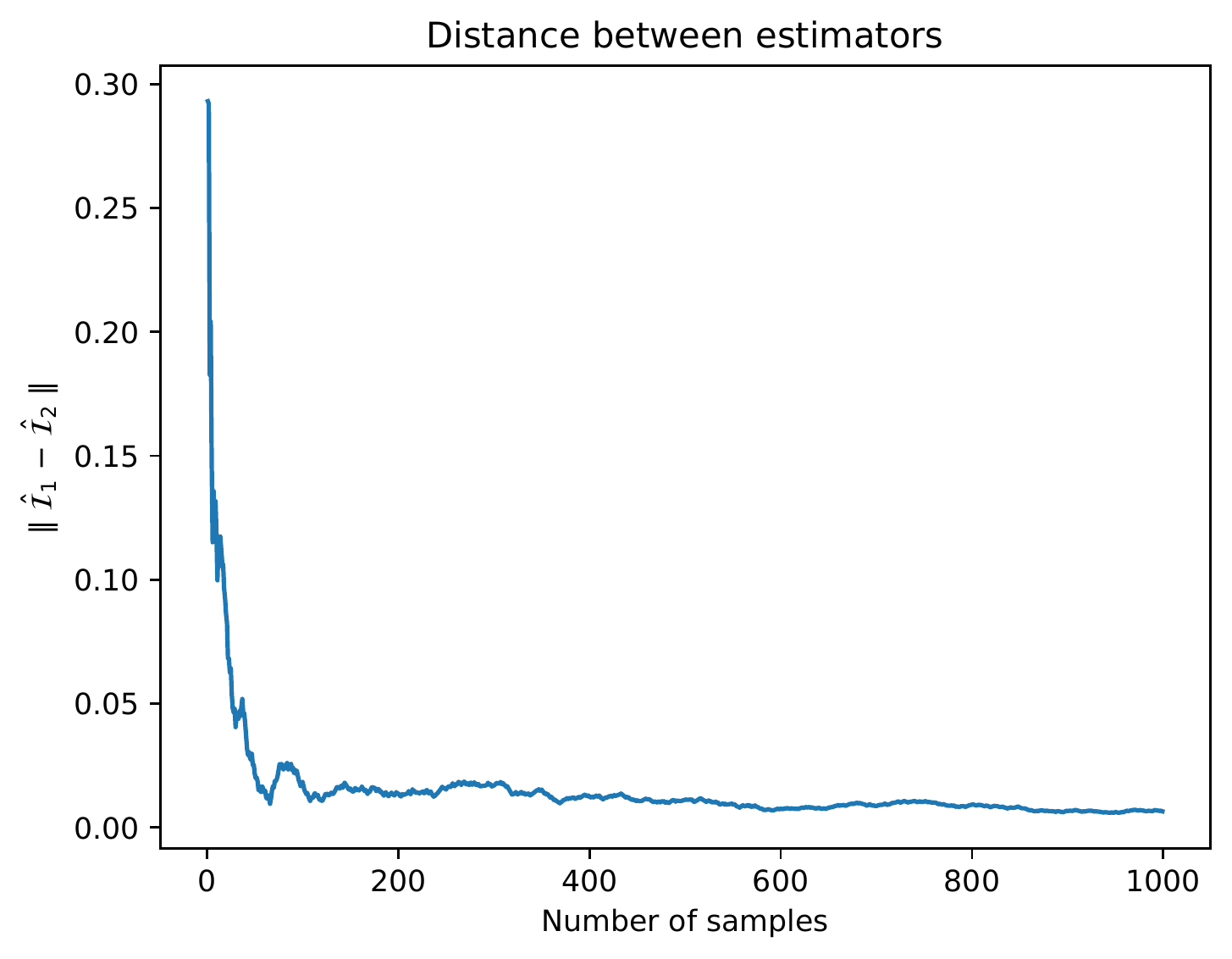}
        \caption{Random model.}
        \label{fig:random_line_dist}
    \end{subfigure}
        \caption{Distance between estimator \( \efima(\bm \theta) \) and \( \efimb(\bm \theta) \).}
        \label{fig:efim_line_dist}
\end{figure}

\section{Univariant Gaussian}

We consider the case where we parameterize a univariant Gaussian distribution and consider the FIM and the corresponding estimators quantities.

Firstly, we specify \cref{eq:exp} for a univariant Gaussian by setting:
\begin{equation*}
    \bm t(y) = (y, y^{2}); \quad F(\bm h) = - \frac{\bm h_{1}^{2}}{4 \bm h_{2}} + \frac{1}{2} \ln\left(\frac{-\pi}{\bm h_{2}}\right),
\end{equation*}
where \( \bm y = y \in \Re \) and \( \bm h = \bm h_{L} \) for readability.

In particular, the 2 dimensional output neural network parametrizes the mean \( \mu \) and standard deviation \( \mathsf{s} \) by:
\begin{equation*}
    \bm h = \left(\frac{\mu}{\mathsf{s}^{2}} , -\frac{1}{2\mathsf{s}^{2}}\right).
\end{equation*}

Furthermore, we have dual coordinates:
\begin{equation*}
    \bm \eta = \left( -\frac{\bm h_{1}}{2 \bm h_{2}}, \frac{\bm h_{1}^{2} - 2\bm h_{2}}{4 \bm h_{2}^{2}} \right) = (\mu, \mu^{2} + \mathsf{s}^{2}).
\end{equation*}

The closed form for the FIM/covariance matrix is given by:
\begin{equation*}
    \fim(\bm h) = \var(\bm t) = \begin{bmatrix}
        -\frac{1}{2\bm h_{2}} & \frac{\bm h_{1}}{2 \bm h_{2}^{2}} \\
        \frac{\bm h_{1}}{2 \bm h_{2}^{2}} & -\frac{\bm h_{1}^2}{2 \bm h_{2}^3} + \frac{1}{2\bm h_{2}^2}
    \end{bmatrix}.
\end{equation*}

Notably, we have that \( \bm h_{1} \in \Re \) and \( \bm h_{2} \in (-\infty, 0) \).

We present the following element-wise plots of FIM in \cref{fig:normal_fim}.
\begin{figure}[h]
    \centering
    \includegraphics[width=\textwidth]{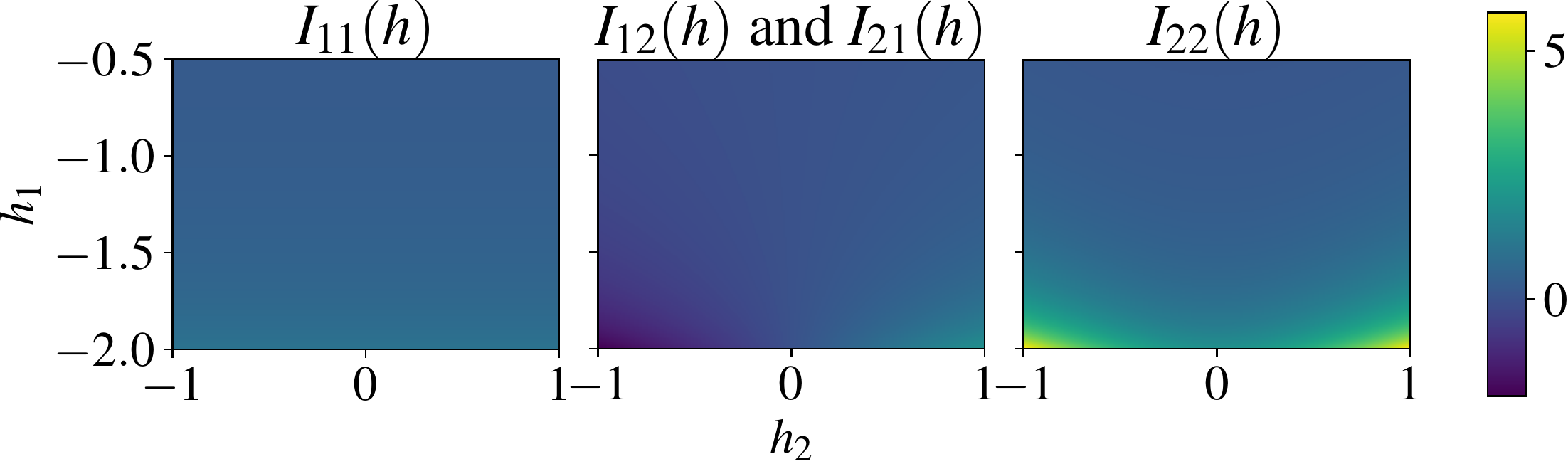}
    \caption{Normal distribution \( \fim(\bm h) \).}
    \label{fig:normal_fim}
\end{figure}

\section{Residual Simple Cases}

We look into the residual term which is present in the proof of \cref{thm:fim2}. Specifically, the quantity:
\begin{equation*}
    \residual(\bm \theta) =
    \int p(\bm x) \frac{\partial^{2}}{\partial \bm\theta \partial \bm\theta^{\T}} p(\bm y \mid \bm x) \dy \dx.
\end{equation*}

Consider the simple case where we only have a single weight and neuron,
\begin{align*}
p(y \mid x) &= \exp\left( t(y)h - F(h) \right),\\
h &= \sigma( wx ).
\end{align*}

First consider the first derivative:
\begin{align*}
    \frac{\partial}{\partial w} p(y \mid x)
    &= p(y \mid x) \cdot \frac{\partial}{\partial w} \left( t(y)h - F(h) \right) \\
    &= p(y \mid x) \cdot \left( t(y) - \eta(h) \right) \cdot \frac{\partial h}{\partial w} \\
    &= p(y \mid x) \cdot \left( t(y) - \eta(h) \right) \cdot \sigma^{\prime}( wx ) \cdot x \\
\end{align*}

\subsection{Assume that activation is identity}

As \( \sigma^{\prime}(z) = 1 \), we have
the second derivative is the following:
\begin{align*}
    \frac{\partial^{2}}{\partial^{2} w} p(y \mid x)
    &= \frac{\partial}{\partial w} \left( p(y \mid x) \cdot \left( t(y) - \eta(h) \right) \cdot x \right) \\
    &= p(y \mid x) \left[  \left( t(y) - \eta(h) \right)^{2} \cdot x^{2} - \grad \eta(h) \cdot x^{2} \right]
\end{align*}

Integrating the first term for the residual we have
\begin{align*}
    \int \int p(x) p(y \mid x) \cdot \left( t(y) - \eta(h) \right)^{2} \cdot x^{2} \dmeas{y} \dmeas{x}
    =\int p(x) \cdot \grad \eta(h) \cdot x^{2} \dmeas{x}.
\end{align*}

Integrating the second term for the residual we have
\begin{align*}
    \int \int p(x) p(y \mid x) \cdot \grad \eta(h) \cdot x^{2} \dmeas{y} \dmeas{x}
    &= \int p(x) \cdot \grad \eta(h) \cdot x^{2} \dmeas{x}.
\end{align*}

Thus by taking the difference, the residual is zero.

\subsection{Assume that activation is not identity}

The second derivative is the following:
\begin{align*}
    \frac{\partial^{2}}{\partial^{2} w} p(y \mid x)
    &= \frac{\partial}{\partial w} \left( p(y \mid x) \cdot \left( t(y) - \eta(h) \right) \cdot \sigma^{\prime}( wx ) \cdot x \right) \\
    &= \frac{\partial}{\partial w} \left( p(y \mid x) \right) \cdot \left( t(y) - \eta(h) \right) \cdot \sigma^{\prime}( wx ) \cdot x \\
    &\quad\quad + p(y \mid x) \cdot \frac{\partial}{\partial w}\left( t(y) - \eta(h) \right) \cdot \sigma^{\prime}( wx ) \cdot x \\
    &\quad\quad + p(y \mid x) \cdot \left( t(y) - \eta(h) \right) \cdot \frac{\partial}{\partial w} \left( \sigma^{\prime}( wx ) \right) \cdot x \\
    &= p(y \mid x) \cdot \left( t(y) - \eta(h) \right)^{2} \cdot \sigma^{\prime}( wx )^{2} \cdot x^{2} \\
    &\quad\quad + p(y \mid x) \cdot \frac{\partial}{\partial w}\left( t(y) - \eta(h) \right) \cdot \sigma^{\prime}( wx ) \cdot x \\
    &\quad\quad + p(y \mid x) \cdot \left( t(y) - \eta(h) \right) \cdot \frac{\partial}{\partial w} \left( \sigma^{\prime}( wx ) \right) \cdot x \\
    &= p(y \mid x) \cdot \left( t(y) - \eta(h) \right)^{2} \cdot \sigma^{\prime}( wx )^{2} \cdot x^{2} \\
    &\quad\quad - p(y \mid x) \cdot \grad \eta(h) \cdot \sigma^{\prime}( wx )^{2} \cdot x^{2} \\
    &\quad\quad + p(y \mid x) \cdot \left( t(y) - \eta(h) \right) \cdot \frac{\partial}{\partial w} \left( \sigma^{\prime}( wx ) \right) \cdot x.
\end{align*}

By the linearity of the integral, we calculate each term of the residual. For the first term:
\begin{align*}
    &\int \int p(x) \cdot p(y \mid x) \cdot \left( t(y) - \eta(h) \right)^{2} \cdot \sigma^{\prime}( wx )^{2} \cdot x^{2} \dmeas{y} \dmeas{x} \\
    &= \int p(x) \cdot \sigma^{\prime}( wx )^{2} \cdot x^{2} \int p(y \mid x) \cdot \left( t(y) - \eta(h) \right)^{2} \dmeas{y} \dmeas{x} \\
    &= \int p(x) \cdot \sigma^{\prime}( wx )^{2} \cdot x^{2} \cdot \grad\eta(h) \dmeas{x}.
\end{align*}

Given that the second term only has \( p(y \mid x) \) which is dependent on \( y \), the first and second term of the residual cancel out. Thus we only have:
\begin{align*}
    \residual(w)
    &= \int \int p(x) p(y \mid x) \cdot \left( t(y) - \eta(h) \right) \cdot \frac{\partial}{\partial w} \left( \sigma^{\prime}( wx ) \right) \cdot x \dmeas{y} \dmeas{x} \\
    &= \int \int p(x) p(y \mid x) \cdot t(y) \cdot \frac{\partial}{\partial w} \left( \sigma^{\prime}( wx ) \right) \cdot x \dmeas{y} \dmeas{x} \\
    &\quad\quad - \int \int p(x) p(y \mid x) \cdot \eta(h) \cdot \frac{\partial}{\partial w} \left( \sigma^{\prime}( wx ) \right) \cdot x \dmeas{y} \dmeas{x} \\
    &= \int p(x) \cdot \frac{\partial}{\partial w} \left( \sigma^{\prime}( wx ) \right) \cdot x \int p(y \mid x) \cdot t(y) \dmeas{y} \dmeas{x} - \int p(x) \cdot \eta(h) \cdot \frac{\partial}{\partial w} \left( \sigma^{\prime}( wx ) \right) \cdot x \dmeas{x}.
\end{align*}
Given that
\begin{equation*}
    \frac{\partial}{\partial w} \left( \sigma^{\prime}( wx ) \right)
    = \delta(wx)
\end{equation*}
and
\begin{equation}
    \int f(x) \delta(x) \dmeas{x} = f(0),
\end{equation}
the residual will result in zero for this case.


\begin{thebibliography}{32}
\providecommand{\natexlab}[1]{#1}
\providecommand{\url}[1]{\texttt{#1}}
\expandafter\ifx\csname urlstyle\endcsname\relax
  \providecommand{\doi}[1]{doi: #1}\else
  \providecommand{\doi}{doi: \begingroup \urlstyle{rm}\Url}\fi

\bibitem[Amari(2016)]{aIGI}
S.~Amari.
\newblock \emph{Information Geometry and Its Applications}, volume 194 of
  \emph{Applied Mathematical Sciences}.
\newblock Springer-Verlag, Berlin, 2016.

\bibitem[Amari et~al.(2019)Amari, Karakida, and Oizumi]{amari2019fisher}
S.~Amari, R.~Karakida, and M.~Oizumi.
\newblock {F}isher information and natural gradient learning in random deep
  networks.
\newblock In \emph{International Conference on Artificial Intelligence and
  Statistics}, pages 694--702. PMLR, 2019.

\bibitem[Ay(2020)]{ay}
N.~Ay.
\newblock On the locality of the natural gradient for learning in deep
  {B}ayesian networks.
\newblock \emph{Information Geometry}, pages 1--49, 2020.

\bibitem[Ba et~al.(2016)Ba, Kiros, and Hinton]{ba2016layer}
J.~L. Ba, J.~R. Kiros, and G.~E. Hinton.
\newblock Layer normalization.
\newblock \emph{arXiv preprint arXiv:1607.06450}, 2016.

\bibitem[Bowman and Shenton(1988)]{gamma}
K.~O. Bowman and L.~R. Shenton.
\newblock \emph{Properties of Estimators for the Gamma Distribution}.
\newblock Marcel Dekker, New York, 1988.

\bibitem[Chen and Li(2020)]{chen2020tensor}
B.~Chen and Z.~Li.
\newblock On the tensor spectral p-norm and its dual norm via partitions.
\newblock \emph{Computational Optimization and Applications}, 75\penalty0
  (3):\penalty0 609--628, 2020.

\bibitem[Chen and Li(2009)]{InfiniteFIMMixture-2009}
J.~Chen and P.~Li.
\newblock Hypothesis test for normal mixture models: The {EM} approach.
\newblock \emph{The Annals of Statistics}, 37\penalty0 (5A):\penalty0
  2523--2542, 2009.

\bibitem[Chen(2007)]{chen2007new}
X.~Chen.
\newblock A new generalization of {C}hebyshev inequality for random vectors.
\newblock \emph{arXiv preprint arXiv:0707.0805}, 2007.

\bibitem[Delattre and Kuhn(2019)]{delattre2019estimating}
M.~Delattre and E.~Kuhn.
\newblock Estimating {F}isher information matrix in latent variable models
  based on the score function.
\newblock In \emph{European Meeting of Statisticians (EMS)}, 2019.

\bibitem[Efron and Hinkley(1978)]{efron}
B.~Efron and D.~V. Hinkley.
\newblock Assessing the accuracy of the maximum likelihood estimator: Observed
  versus expected {F}isher information.
\newblock \emph{Biometrika}, 65\penalty0 (3):\penalty0 457--482, 1978.

\bibitem[Guo and Spall(2019)]{spall}
S.~Guo and J.~C. Spall.
\newblock Relative accuracy of two methods for approximating observed {F}isher
  information.
\newblock In \emph{Data-Driven Modeling, Filtering and Control: Methods and
  applications}, pages 189--211. IET Press, London, 2019.

\bibitem[Hinton et~al.(2015)Hinton, Vinyals, and Dean]{hinton2015distilling}
G.~Hinton, O.~Vinyals, and J.~Dean.
\newblock Distilling the knowledge in a neural network.
\newblock \emph{arXiv preprint arXiv:1503.02531}, 2015.

\bibitem[Karakida et~al.(2019)Karakida, Akaho, and
  Amari]{karakida2019universal}
R.~Karakida, S.~Akaho, and S.~Amari.
\newblock Universal statistics of {F}isher information in deep neural networks:
  Mean field approach.
\newblock In \emph{International Conference on Artificial Intelligence and
  Statistics}, pages 1032--1041. PMLR, 2019.

\bibitem[Kingma and Ba(2015)]{adam}
D.~P. Kingma and J.~Ba.
\newblock Adam: {A} method for stochastic optimization.
\newblock In \emph{International Conference on Learning Representations}, 2015.

\bibitem[Kunstner et~al.(2020)Kunstner, Balles, and
  Hennig]{kunstner2020limitations}
F.~Kunstner, L.~Balles, and P.~Hennig.
\newblock Limitations of the empirical {F}isher approximation for natural
  gradient descent.
\newblock In \emph{Advances in Neural Information Processing Systems}, pages
  4133--4144. Curran Associates, Inc., 2020.

\bibitem[Lehmann and Casella(1998)]{thpe}
E.~L. Lehmann and G.~Casella.
\newblock \emph{Theory of Point Estimation}.
\newblock Springer-Verlag New York, second edition, 1998.

\bibitem[Lim(2005)]{lim2005singular}
L.-H. Lim.
\newblock Singular values and eigenvalues of tensors: a variational approach.
\newblock In \emph{1st IEEE International Workshop on Computational Advances in
  Multi-Sensor Adaptive Processing, 2005.}, pages 129--132. IEEE, 2005.

\bibitem[Martens(2020)]{martens}
J.~Martens.
\newblock New insights and perspectives on the natural gradient method.
\newblock \emph{Journal of Machine Learning Research}, 21\penalty0
  (146):\penalty0 1--76, 2020.

\bibitem[Martens and Grosse(2015)]{martens2015optimizing}
J.~Martens and R.~Grosse.
\newblock Optimizing neural networks with {K}ronecker-factored approximate
  curvature.
\newblock In \emph{International Conference on Machine Learning}, pages
  2408--2417. PMLR, 2015.

\bibitem[McCullagh(2018)]{mccullagh2018tensor}
P.~McCullagh.
\newblock \emph{Tensor methods in statistics}.
\newblock Courier Dover Publications, 2018.

\bibitem[Nielsen(2017)]{alphaFIM}
F.~Nielsen.
\newblock The $\alpha$-representations of the {F}isher information matrix,
  2017.
\newblock URL \url{https://franknielsen.github.io/blog/alpha-FIM/index.html}.

\bibitem[Nielsen(2020)]{elementig}
F.~Nielsen.
\newblock An elementary introduction to information geometry.
\newblock \emph{Entropy}, 22\penalty0 (10), 2020.

\bibitem[Ollivier(2015)]{ollivier}
Y.~Ollivier.
\newblock Riemannian metrics for neural networks {I}: feedforward networks.
\newblock \emph{Information and Inference: A Journal of the IMA}, 4\penalty0
  (2):\penalty0 108--153, 2015.

\bibitem[Park et~al.(2000)Park, Amari, and Fukumizu]{park}
H.~Park, S.~Amari, and K.~Fukumizu.
\newblock Adaptive natural gradient learning algorithms for various stochastic
  models.
\newblock \emph{Neural Networks}, 13\penalty0 (7):\penalty0 755--764, 2000.

\bibitem[Pascanu and Bengio(2014)]{pbRNG}
R.~Pascanu and Y.~Bengio.
\newblock Revisiting natural gradient for deep networks.
\newblock In \emph{International Conference on Learning Representations}, 2014.

\bibitem[Paszke et~al.(2019)Paszke, Gross, Massa, Lerer, Bradbury, Chanan,
  Killeen, Lin, Gimelshein, Antiga, Desmaison, Kopf, Yang, DeVito, Raison,
  Tejani, Chilamkurthy, Steiner, Fang, Bai, and Chintala]{torch}
A.~Paszke, S.~Gross, F.~Massa, A.~Lerer, J.~Bradbury, G.~Chanan, T.~Killeen,
  Z.~Lin, N.~Gimelshein, L.~Antiga, A.~Desmaison, A.~Kopf, E.~Yang, Z.~DeVito,
  M.~Raison, A.~Tejani, S.~Chilamkurthy, B.~Steiner, L.~Fang, J.~Bai, and
  S.~Chintala.
\newblock Pytorch: An imperative style, high-performance deep learning library.
\newblock In \emph{Advances in Neural Information Processing Systems}, pages
  8024--8035. Curran Associates, Inc., 2019.

\bibitem[Pennington and Worah(2018)]{pennington2018spectrum}
J.~Pennington and P.~Worah.
\newblock The spectrum of the {F}isher information matrix of a
  single-hidden-layer neural network.
\newblock In \emph{Advances in Neural Information Processing Systems}, pages
  5415--5424, 2018.

\bibitem[Salimans and Kingma(2016)]{weightnorm}
T.~Salimans and D.~P. Kingma.
\newblock Weight normalization: A simple reparameterization to accelerate
  training of deep neural networks.
\newblock In \emph{Advances in Neural Information Processing Systems},
  volume~29. Curran Associates, Inc., 2016.

\bibitem[Schraudolph(2002)]{schraudolph2002fast}
N.~N. Schraudolph.
\newblock Fast curvature matrix-vector products for second-order gradient
  descent.
\newblock \emph{Neural computation}, 14\penalty0 (7):\penalty0 1723--1738,
  2002.

\bibitem[Soen and Sun(2021)]{varfim}
A.~Soen and K.~Sun.
\newblock On the variance of the {F}isher information for deep learning.
\newblock In \emph{Advances in Neural Information Processing Systems}, 2021.

\bibitem[Sun(2020)]{pull}
K.~Sun.
\newblock Information geometry for data geometry through pullbacks.
\newblock In \emph{Deep Learning through Information Geometry (Workshop at
  NeurIPS 2020)}, 2020.

\bibitem[Sun and Nielsen(2017)]{rfim}
K.~Sun and F.~Nielsen.
\newblock Relative {F}isher information and natural gradient for learning large
  modular models.
\newblock In \emph{International Conference on Machine Learning}, pages
  3289--3298. PMLR, 2017.

\end{thebibliography}
\end{document}